\documentclass[10pt]{article}

\usepackage[preprint]{tmlr}

\usepackage{amsmath,amsfonts,bm}

\def\eqref#1{equation~\ref{#1}}

\def\1{\bm{1}}

\def\vmu{{\bm{\mu}}}

\def\vy{{\bm{y}}}
\def\vz{{\bm{z}}}

\DeclareMathAlphabet{\mathsfit}{\encodingdefault}{\sfdefault}{m}{sl}
\SetMathAlphabet{\mathsfit}{bold}{\encodingdefault}{\sfdefault}{bx}{n}

\usepackage{amssymb}
\usepackage{mathtools}
\usepackage{amsthm}
\usepackage{booktabs}
\usepackage{float}
\usepackage{graphicx}
\usepackage{xcolor}
\usepackage{tikz}
\usepackage{subcaption}
\usepackage{placeins}
\usepackage{wrapfig}
\usetikzlibrary{arrows,arrows.meta,calc,fit,positioning,shapes.geometric}
\usepackage{todonotes}

\usepackage{hyperref}
\usepackage[capitalize,noabbrev]{cleveref}
\creflabelformat{equation}{#2#1#3}
\usepackage{url}

\newcommand{\coderepositoryurl}{https://github.com/Nimrais/NGMP}
\newcommand{\codelink}[1]{\href{\coderepositoryurl}{#1}}

\newtheorem{theorem}{Theorem}

\newcommand{\vlambda}{\bm{\lambda}}
\newcommand{\veta}{\bm{\eta}}
\newcommand{\fisher}{G}
\newcommand{\ent}[1]{\mathbb H\!\left[\, #1 \,\right]}
\newcommand{\famproj}[1]{\Pi_{\mathcal E_{#1}}}

\newcommand{\tangentproj}[1]{\mathsf T_{#1}^{\lambda_{#1}}}

\tikzstyle{line} = [draw, -latex,>=latex]
\tikzstyle{box} = [draw, minimum size=7.0mm]

\title{Information Geometry of Message Passing}

\author{\name Mykola Lukashchuk \email m.lukashchuk@tue.nl \\
      \addr Eindhoven University of Technology 
      \AND
      \name Kyrylo Yemets \\
      \addr Lviv Polytechnic National University
      \AND
      \name Alex Ledbetter \\
      \addr Eindhoven University of Technology
      \AND
      \name{{\.{I}}smail {\c{S}en\"{o}z}} \\
      \addr Lazy Dynamics, Utrecht, the Netherlands
      }

\def\month{MM}
\def\year{YYYY}
\def\openreview{\url{https://openreview.net/forum?id=XXXX}}

\begin{document}

\maketitle

\begin{abstract}
We show that the natural-gradient stationary condition of variational inference has an
edge-local form on a Forney-style factor graph. We start from the Bethe free energy and
constrain a selected edge marginal to an exponential family. At a stationary point, the
natural parameter of that edge equals the sum of two projected messages, one from each
incident factor. Each projected message is the natural-gradient projection
of the exact belief-propagation log-message at the current receiving marginal, or
 equivalently, the gradient of its expectation in the so-called mean coordinates. We call the resulting
scheme natural-gradient message passing (NGMP). The rule is local; each edge may carry its
own exponential family, and the message a factor sends depends on the marginal that
receives it. Compared with variational message passing, NGMP keeps the part of the exact
message that the receiving family can represent instead of averaging the factor under
the neighboring beliefs. The two coincide when the uncertainty on the edges entering a
non-conjugate factor vanishes, and NGMP is more accurate when that uncertainty persists,
for example, along a partially observed latent chain or when parameters are filtered
through successive data batches. Experiments on Poisson smoothing, heteroskedastic
regression, and hourly ETTh forecasting confirm this and show that the gain appears
mainly in uncertainty calibration.
\end{abstract}

\section{Introduction}
Bayesian inference provides a coherent framework for reasoning under uncertainty, justified by the axioms of consistent belief \citep{cox_algebra_2001}. To apply it, we need a model that specifies a joint density $p(\vz, \vy)$ over observations $\vy$ and latent
quantities $\vz$, and Bayes' rule turns this joint density into a posterior
distribution over the unknowns after the observations are revealed
\begin{equation}
    p(\vz\mid\vy)
    \;=\;
    \frac{p(\vy | \vz) p(\vz)}
    {\int p(\vy,\vz)\,\mathrm{d}\vz}.
    \label{eq:intro-bayes-rule}
\end{equation} This posterior is the object one would like to use for prediction, filtering,
decision making, and model comparison. The only difficulty is evaluating the
normalizing constant $\int p(\vy,\vz)\,\mathrm{d}\vz$.

The normalizing constant is often a high-dimensional integral for which closed-form solutions exist only for special model classes, such as conjugate exponential-family models,
linear-Gaussian state-space models, or graphical models with sufficiently simple
structures. When a closed-form solution does exist, the obstruction is usually computational: exact probabilistic inference in Bayesian networks is
NP-hard \citep{cooper_computational_1990}, and even guaranteed approximate
inference is NP-hard in the worst case \citep{DAGUM1993141}. This is
the basic reason why practical Bayesian inference is usually approximate.

Variational inference is one of the standard ways to address this, replacing integration with optimization. Instead of trying
to integrate the posterior exactly, it chooses a tractable family of candidate
distributions and solves an optimization problem inside that family
\citep{jordan_graphical_2001,wainwright_graphical_2008,blei_variational_2017}.
This exchange of integration for optimization is not free: the answer is now limited by the family and the constraints we impose.

There are two complementary ways to do Variational Inference. The first is
\emph{global}: choose one approximating distribution $q_{\lambda}(\vz)$ for the
entire latent state and optimize its parameter $\vlambda$. When $q_{\lambda}$ is an exponential family,
the Fisher geometry gives a clean natural-gradient characterization of
variational stationarity; this is the viewpoint developed by
\citet{khan_information_2025}. The price is that the approximation is specified
as one global object. For a large graphical model, the sufficient statistics,
Fisher matrix, or required expectations of such a global family may be too large
or poorly matched to the factorization that made the model useful in the
first place.

The second route is \emph{local}: it starts from a factorization that enables the conversion from a global problem into one of local computations towards the same objective\footnote{We drop the relation on $\vy$ on the right side of the equation because $\vy$ can always be absorbed into the factors $f_{a}$.} 
\begin{equation}
    p(\vz\mid\vy)
    \;\propto\;
    \prod_{a\in\mathcal V} f_a(\vz_a).
    \label{eq:intro-factorization}
\end{equation}
Each factor $f_a$ involves only a small subset of variables $\vz_a$. In the
factor-graph representation of a probabilistic model formalized in \Cref{sec:forney-factor-graphs}, these variables are
represented by edges, and a belief propagation (BP) message is a local
density function passed across such an edge. On trees, products of these messages compute exact
edge marginals, while on loopy graphs, the same local view leads to Bethe and
structured variational approximations \citep{yedidia_bethe_2001}. More
generally, variational objectives can often be decomposed into local pieces, and
their stationary conditions can be interpreted as message-passing algorithms
\citep{yedidia_constructing_2005,senoz_variational_2021}. The local view is
attractive because it preserves the modularity of the model. A factor
contributes a local computation, an edge carries a local belief (message), and
different modeling components can be combined without rebuilding a single,
monolithic posterior approximation.

Locality alone, however, does not make the messages finitely parameterized objects. An exact BP factor-to-edge update through a
non-conjugate factor may return a function that is not Gaussian, Gamma, or any
other finite-dimensional message supported by the surrounding graph.
Variational message passing (VMP) can restore closure by changing local
constraints, but this also changes what information enters the outgoing message.

This paper asks how the global and local views meet. Can the global
natural-gradient stationary condition of variational inference be localized
into a rule for messages on a factor graph? The answer developed below is yes.
We start from the Bethe free-energy formulation of message passing and impose an
exponential-family form constraint on selected edge marginals.

\begin{figure}[t!]
    \centering
%

\begin{tikzpicture}[
    node distance=15mm,
    auto,
    >=stealth',
    every node/.style={font=\small},
    msg/.style       ={-stealth, semithick},
    nmsg/.style      ={-stealth, semithick, blue!60!black},
    omsg/.style      ={-stealth, semithick, green!60!black},
    pmsg/.style      ={-stealth, semithick, red!70!black, line width=0.7pt},
    qcurl/.style     ={red!70!black, thick, -stealth},
    qmarg/.style     ={draw=red!70!black, circle, line width=0.45pt, inner sep=0pt, font=\tiny, text=red!70!black},
    panelttl/.style  ={font=\bfseries\small, anchor=south},
    panelfm/.style   ={font=\footnotesize, anchor=north, align=center, text width=55mm},
]


\begin{scope}[local bounding box=PBP]
    \node[box] (fb)              {$f_b$};
    \node[box, right=12mm of fb] (fc) {$f_c$};
    \coordinate[left=12mm of fb, yshift= 6mm] (cl);
    \coordinate[left=12mm of fb, yshift=-6mm] (ck);
    \node[left=4mm of fb, yshift= 1mm] {$\vdots$};
    \draw[nmsg] (cl) -- node[pos=0.45,above,font=\scriptsize,sloped]{$\mu_{l\to b}$} (fb);
    \draw[nmsg] (ck) -- node[pos=0.45,below,font=\scriptsize,sloped]{$\mu_{k\to b}$} (fb);
    \draw[nmsg] (fb) -- node[midway,above,font=\scriptsize]{$\mu_{b\to i}$}
                       node[midway,below,font=\scriptsize]{$z_i$}        (fc);
\end{scope}
\node[panelttl] at (PBP.north) {\strut Belief Propagation};
\node[panelfm]  at (PBP.south)
    {$\mu_{b\to i}(z_i) \;=\; \displaystyle\int\! f_b(\vz_b)\!\prod_{j\neq i}\!\mu_{j\to b}\,d\vz_{b\setminus i}$ \\[2pt]
     \textcolor{blue!60!black}{\emph{only incoming edges used}}};

%
\begin{scope}[xshift=63mm, local bounding box=PVMP]
    \node[box]                   (vfb) {$\widetilde f_b$};
    \node[box, right=12mm of vfb] (vfc) {$f_c$};
    \coordinate[left=12mm of vfb, yshift= 6mm] (vcl);
    \coordinate[left=12mm of vfb, yshift=-6mm] (vck);
    \node[left=4mm of vfb, yshift= 1mm] {$\vdots$};
    \draw[nmsg] (vcl) -- node[pos=0.45,above,font=\scriptsize,sloped]{$\mu_{l\to b}$} (vfb);
    \draw[-stealth, semithick, gray!60!black, dashed] (vck) -- node[pos=0.45,below,font=\scriptsize,sloped, gray!50!black]{$q_k$} (vfb);
    \draw[nmsg] (vfb) -- node[midway,above,font=\scriptsize]{$\widetilde\mu_{b\to i}$}
                        node[midway,below,font=\scriptsize]{$z_i$}        (vfc);
\end{scope}
\node[panelttl] at (PVMP.north) {\strut (Structured) Variational MP};
\node[panelfm]  at (PVMP.south)
    {$\widetilde\mu_{b\to i}(z_i) \;\propto\; \displaystyle\int\!\widetilde f_b\,\mu_{l\to b}\,dz_l$ \\[2pt]
     $\widetilde f_b \;=\; \exp\!\big(\mathbb E_{q_k}[\log f_b]\big)$ \\[2pt]
     \textcolor{blue!60!black}{\emph{in-cluster: messages;}} \textcolor{gray!50!black}{\emph{others: marginals}}};

%
\begin{scope}[xshift=-5mm, yshift=-48mm, local bounding box=PEP]
    \node[box]                   (efb) {$f_b$};
    \node[box, right=12mm of efb] (efc) {$f_c$};
    \coordinate[left=12mm of efb, yshift= 6mm] (ecl);
    \coordinate[left=12mm of efb, yshift=-6mm] (eck);
    \node[left=4mm of efb, yshift= 1mm] {$\vdots$};
    \draw[nmsg] (ecl) -- node[pos=0.45,above,font=\scriptsize,sloped]{$\mu_{l\to b}$} (efb);
    \draw[nmsg] (eck) -- node[pos=0.45,below,font=\scriptsize,sloped]{$\mu_{k\to b}$} (efb);
    \draw[nmsg] (efb) -- node[midway,above,font=\scriptsize]{$m_{b\to i}$}
                        node[midway,below,font=\scriptsize]{$z_i$}        (efc);
    \coordinate[right=12mm of efc, yshift= 6mm] (ecm);
    \coordinate[right=12mm of efc, yshift=-6mm] (ecn);
    \node[right=4mm of efc, yshift= 1mm] {$\vdots$};
    
    \draw[omsg] (ecm) -- node[pos=0.45,above,font=\scriptsize,sloped]{$\mu_{m\to c}$} (efc);
    \draw[omsg] (ecn) -- node[pos=0.45,below,font=\scriptsize,sloped]{$\mu_{n\to c}$} (efc);
\end{scope}
\node[panelttl] at (PEP.north) {\strut Expectation Propagation};
\node[panelfm]  at (PEP.south)
    {$m_{b\to i}(z_i) \;=\; \dfrac{\famproj{i}\!\big[\,\mu_{b\to i} \, \mu_{c\to i}\,\big]}{\mu_{c\to i}(z_i)}$ \\[2pt]
     \textcolor{green!60!black}{\emph{uses the message $\mu_{c\to i}$ from $f_c$}}};

\begin{scope}[xshift=63mm, yshift=-48mm, local bounding box=PNGMP]
    \node[box]                   (nfb) {$f_b$};
    \node[box, right=12mm of nfb] (nfc) {$f_c$};
    \coordinate[left=12mm of nfb, yshift= 6mm] (ncl);
    \coordinate[left=12mm of nfb, yshift=-6mm] (nck);
    \node[left=4mm of nfb, yshift= 1mm] {$\vdots$};
    \draw[nmsg] (ncl) -- node[pos=0.45,above,font=\scriptsize,sloped]{$\mu_{l\to b}$} (nfb);
    \draw[nmsg] (nck) -- node[pos=0.45,below,font=\scriptsize,sloped]{$\mu_{k\to b}$} (nfb);
    \coordinate[right=6mm of nfb] (nmid);
    \node[qmarg] at (nmid) (nq) {$q_{\scalebox{0.75}{$\lambda_i$}}$};
    \node[above=1.2mm of nq, font=\scriptsize, blue!60!black] {$\widehat{\mu}_{b\to i}$};
    \draw[semithick, blue!60!black] (nfb) -- (nq);
    \draw[nmsg] (nq) -- (nfc);
    \node[below=1.5mm of nq, font=\scriptsize, blue!60!black] {$z_i$};
\end{scope}
\node[panelttl] at (PNGMP.north) {\strut Natural-Gradient MP \emph{(this work)}};
\node[panelfm]  at (PNGMP.south)
    {$\ell_{b\to i}=\log\mu_{b\to i}$ \\[1pt]
     $\eta_{b\to i}=\tangentproj{i}[\ell_{b\to i}]$ \\[2pt]
     $\widehat{\mu}_{b\to i}(z_i)\propto\exp\{\eta_{b\to i}^{\top}T_i(z_i)\}$ \\[2pt]
     \textcolor{red!70!black}{\emph{uses receiving $q_{\lambda_i}$}}};

\end{tikzpicture}
    \caption{Messages produced by belief propagation (BP) rely on the local factor $f_b$ and its other incoming messages. Variational message passing (VMP), including its structured variant, first forms a surrogate factor $\widetilde f_b$ by variational expectation of $f_b$ with respect to the current marginals of the out-of-cluster variables, then passes a BP message through this new factor; non-conjugate and projective VMP keep this flow and additionally project the resulting message at the receiving marginal (\Cref{sec:related-work}). Expectation propagation (EP) instead forms a cavity distribution from the opposing side, restores the exact factor contribution to obtain a tilted marginal, projects that marginal onto the chosen form-constraint family via $\famproj{i}[\cdot]$, and divides out the cavity to obtain the new site \citep{minka_expectation_2001}. Finally, natural-gradient message passing uses the receiving marginal to define the tangent space for natural-gradient projection of the exact BP log-message. The final message towards $z_i$ is obtained as the exponential of the resulting projected natural parameters $\eta_{b\to i}$ applied to the chosen family's sufficient statistics $T_i(z_i)$.}
    \label{fig:compare-inference}
\end{figure}

The idea can be read informally before the formal derivation. Exact BP messages describe the unconstrained local update that a factor would send to an edge if the edge marginal could take an arbitrary functional form. In general, this update does not lie within the finite-dimensional tangent space to the manifold defined by the chosen form constraint (see \cref{fig:ngmp-tangent-projection} for a visual intuition). Understanding variational inference as an iterative process moving variable marginals towards stationary points of an approximate Bayesian objective, we therefore project the exact BP update onto the tangent space of the constrained marginal family at the current marginal belief. This yields a natural-gradient message that encodes the component of the BP update that can be represented by the chosen family's sufficient statistics. The resulting message moves the edge marginal within the constrained family, preserving the locally relevant part of the full BP update while discarding directions that the family cannot encode. This process continues until the BP message encodes only information that the chosen family cannot represent, the difference between the marginal natural parameters and the tangent component is zero, and thus a stationary point has been reached.

\cref{fig:compare-inference} describes and contrasts different local message-computation strategies based on the information sources they draw from and the message-approximations they make in comparison to BP. Belief propagation is included as a basis, but is understood to not produce conjugate messages in general. What makes natural-gradient message passing (NGMP) unique is its optimal use of the exact BP message information visible to the form-constraint family of the receiving edge. Experiments in \cref{sec:edge-uncertainty} locate where this matters: whenever the uncertainty on the edges entering a non-conjugate factor cannot be reduced, the projected messages estimate marginals more accurately than VMP and its projective variants at comparable complexity. Hourly ETTh forecasting supports the corresponding predictive consequence: neural gates can preserve competitive point error while their implied predictive scale collapses, whereas a precision-gated NGMP ensemble remains substantially better calibrated.

The paper is organized to support different entry points.
\Cref{sec:background,sec:variational-message-passing} reviews Forney-style
factor graphs, variational inference, and classical variational message passing
as constrained Bethe-free-energy optimization; readers familiar with
variational inference and factor graphs may begin with
\Cref{sec:ng-message-passing}, which contains our main theoretical
contribution. \Cref{sec:surrogate-models} gives a conceptual account of how to
implement the resulting updates rather than providing a further theoretical contribution,
and may be skipped by readers interested mainly in empirical behavior.
\Cref{sec:related-work} situates the method among existing approaches, while
\Cref{sec:edge-uncertainty,sec:experiments} explains when the method improves
upon classical variational message passing and evaluates it in larger models.
The final section discusses implications and future directions.

\paragraph{Code availability.}
Code and reproducibility materials for all experiments are available in the
\codelink{accompanying repository}.

\section{Background}\label{sec:background}
\subsection{Forney-style Factor Graphs}\label{sec:forney-factor-graphs}

A Forney-style Factor Graph (FFG) $\mathcal{G} = (\mathcal{V},\mathcal{E})$ represents a factorized function,
\begin{equation}
    f(\vz) \;=\; \prod_{a \in \mathcal{V}}f_a(\vz_a)\,, \label{eq:f_fact}
\end{equation}
where $\vz_a$ collects the variables that enter factor $f_a$. A Forney-style
Factor Graph (FFG) represents such a factorization by drawing factors as nodes
and variables as edges \citep{forney_codes_2001,loeliger_factor_2007,
senoz_variational_2021}. An edge is incident to a node exactly when the
corresponding variable is an argument of that factor.

For example, \Cref{fig:example-ffg} represents the factorization
\begin{equation}
\begin{aligned}
    f(\vz)
    \;=\;&
    f_a(z_1)\,f_b(z_2)\,f_c(z_1,z_2,z_3)\,
    f_d(z_4)f_e(z_3,z_4,z_5)\,f_g(z_5,z_6)\,f_h(z_6).
\end{aligned}
\label{eq:example-ffg-factorization}
\end{equation}
The boxes in the figure are the seven local factors, and the six edges exist 
because both factors depend on $z_3$, while the edge $z_6$ connects the pair
$f_g$ and $f_h$. The dashed boxes are not additional factors; they only mark the
parts of the graph whose local eliminations produce the displayed messages.

\begin{figure}[tb]
\centering
\begin{tikzpicture}[node distance=1cm]
    \node[box] (a) {$f_a$};
    \node[box, right=1.3cm of a] (b) {$f_c$};
    \node[box, above=1cm of b]   (c) {$f_b$};
    \node[box, right=1.7cm of c] (d) {$f_d$};
    \node[box, right=1.7cm of b] (e) {$f_e$};
    \node[box, right=1.5cm of e] (g) {$f_g$};
    \node[box, above=1cm of g] (h) {$f_h$};

    \node[draw, dashed, box, inner sep=0.4cm, fit={(d.north west) (h.north east) (g.south east) (e.south west)}] {};
    \node[draw, dashed, box, inner sep=0.4cm, fit={(a.north west) (c.north east) (b.south east) (a.south west)}] {};
    \node[draw, dashed, box, inner sep=0.3cm, fit={(h.north west) (h.north east) (g.south east) (g.south west)}] {};

    \path[line] (a) edge[-] node[pos=0.5, anchor=south, font=\scriptsize]{$z_1$} (b);
    \path[line] (b) edge[-] node[pos=0.5, anchor=east, font=\scriptsize]{$z_2$} (c);
    \path[line] (b) edge[-]
        node[pos=0.14, anchor=south, font=\scriptsize]{$z_3$}
        node[pos=0.5, anchor=south]{$\substack{\mu_{3e}\\\rightarrow}$}
        node[pos=0.5, anchor=north]{$\substack{\leftarrow \\ \mu_{3c}}$}(e);
    \path[line] (d) edge[-] node[pos=0.5, anchor=east, font=\scriptsize]{$z_4$} (e);
    \path[line] (e) edge[-]
        node[pos=0.16, anchor=south, font=\scriptsize]{$z_5$}
        node[pos=0.5, anchor=north]{$\substack{\leftarrow \\ \mu_{5e}}$}(g);
    \path[line] (g) edge[-] node[pos=0.5, anchor=east, font=\scriptsize]{$z_6$} (h);
    
\end{tikzpicture}
\caption{Forney-style Factor Graph representation of the factorization in
\eqref{eq:example-ffg-factorization}.}
\label{fig:example-ffg}
\end{figure}

The basic computational idea is to avoid integrating all variables at once. To
form the marginal belief on the middle edge $z_3$, the left subgraph can first
be summarized by a function of $z_3$ alone,
\begin{equation}
    \mu_{3e}(z_3)
    =
    \int f_a(z_1)f_b(z_2)f_c(z_1,z_2,z_3)\,dz_1\,dz_2 .
    \label{eq:example-ffg-left-message}
\end{equation}
Similarly, the right subgraph sends a summary toward $z_3$. In the notation of
the figure,
\begin{subequations}\label{eq:example-ffg-messages}
\begin{align}
    \mu_{5e}(z_5)
    &=
    \int f_g(z_5,z_6)f_h(z_6)\,dz_6, \\
    \mu_{3c}(z_3)
    &=
    \int f_d(z_4)f_e(z_3,z_4,z_5)\mu_{5e}(z_5)\,dz_4\,dz_5, \\
    q_3(z_3)
    &\propto
    \mu_{3e}(z_3)\mu_{3c}(z_3).
\end{align}
\end{subequations}
The updates above are just a repeated use of the distributive law, but the graph
makes the required local computations explicit. Belief propagation is the
systematic version of this operation: each factor combines its local function
with incoming messages from the neighboring edges and sends the resulting
function along another edge.

We denote the neighboring edges of a node $a\in \mathcal{V}$ by
$\mathcal{E}(a)$. Conversely, for an edge $i \in \mathcal{E}$, the notation
$\mathcal{V}(i)$ collects its neighboring nodes. As a notational convention, we
index nodes by $a, b, c$ and edges by $i, j$, unless stated otherwise. In this
paper, we will frequently refer to a subgraph. We define an edge-induced
subgraph by $\mathcal{G}(i) = (\mathcal{V}(i), i)$, and a node-induced subgraph
by $\mathcal{G}(a) = (a, \mathcal{E}(a))$. Furthermore, we denote a local
subgraph by $\mathcal{G}(a, i) = (\mathcal{V}(i), \mathcal{E}(a))$, which
collects all local nodes and edges around $i$ and $a$, respectively. The FFG
formalism of Forney allows edges of degree at most two, $\max |\mathcal{V}(i)| =
2$. We use the terminated FFG formalism of \citet{senoz_variational_2021}, in
which every edge has degree two, $|\mathcal{V}(i)|=2$, by terminating each
half-edge with a factor proportional to $1$.

\subsection{Variational Inference}\label{sec:variational-inference}

For the remainder of this section, we keep the model \eqref{eq:f_fact} but momentarily \emph{forget that it factorizes}. We treat $f(\vz)$ as a single unnormalized density on the joint state $\vz=(\vz_i)_{i\in\mathcal E}$, so that the Bayesian posterior of interest is
\begin{equation}
    p(\vz) \;=\; \frac{f(\vz)}{Z},\qquad Z \;=\; \int f(\vz)\,d\vz, \label{eq:vi-target}
\end{equation}
where $Z$ is the model evidence. As remarked in the introduction, evaluating $Z$ and the marginals of $p$ is intractable in all but the most favorable cases. Variational inference \citep{jordan_graphical_2001,wainwright_graphical_2008,hoffman_stochastic_2013} sidesteps this by replacing integration with optimization: pick a tractable family of candidate densities $\mathcal Q$ and select the $q\in\mathcal Q$ closest to $p$ in some divergence. The canonical choice is the (reverse) Kullback–Leibler divergence.
\begin{equation}
    \mathbb D_{\text{KL}}\!\left[\, q \,\|\, p \,\right] \;=\; \int q(\vz)\log\frac{q(\vz)}{p(\vz)}\,d\vz \;\ \geqslant \; 0,\label{eq:vi-kl}
\end{equation}
with equality iff $q=p$ almost everywhere. Substituting \eqref{eq:vi-target} and rearranging,
\begin{equation}
    \mathbb D_{\text{KL}}\!\left[\, q \,\|\, p \,\right] \;=\; \underbrace{\int q(\vz)\log\frac{q(\vz)}{f(\vz)}\,d\vz}_{\displaystyle =:\,\mathcal F[q]} \;+\;\log Z. \label{eq:vi-vfe-decomposition}
\end{equation}


The functional $\mathcal F[q]$ is the \emph{variational free energy}. Because $\log Z$ is constant in $q$, minimizing $\mathcal F[q]$ over $\mathcal Q$ is equivalent to minimizing $\mathbb D_{\text{KL}}[\, q \,\|\, p \,].$ The further rewriting makes the role of $\mathcal F$ transparent

\begin{equation}
    \mathcal F[q] \;=\; \underbrace{\mathbb E_q[-\log f(\vz)]}_{\text{average energy}} \;\; - \! \underbrace{\ent{q}}_{\text{entropy of }q} \! = \mathbb D_{\text{KL}}\!\left[\, q \,\|\, p \,\right] \;-\; \log Z.\label{eq:vi-vfe-forms}
\end{equation}
The energy–entropy split is the form that appears in statistical physics and underlies the maximum-entropy principle of \citet{jaynes_probability_2003}; the KL form shows that $\mathcal F$ is an upper bound on $-\log Z$, tight when $q=p$ holds almost everywhere. Equivalently, $-\mathcal F$ is the Evidence Lower Bound (ELBO) used throughout the modern variational inference literature \citep{hoffman_stochastic_2013,kucukelbir_automatic_2017}.

\paragraph{Restriction to an exponential family.}
The minimization \eqref{eq:vi-vfe-decomposition} is still over the infinite-dimensional set of densities. The standard tractability move is to restrict $\mathcal Q$ to a regular minimal exponential family \citep{brown_fundamentals_1986,wainwright_graphical_2008}
\begin{equation}
    q_{\lambda}(\vz) \;=\; h(\vz)\exp\!\left\{\vlambda^{\top}T(\vz) - A(\vlambda)\right\},\qquad \vlambda\in\Lambda\subseteq\mathbb R^{d}, \label{eq:vi-expfam}
\end{equation}
with sufficient statistic $T(\vz)$, base measure $h(\vz)$, and \emph{log-partition function}
\begin{equation}
    A(\vlambda) \;=\; \log\int h(\vz)\exp\!\big(\vlambda^{\top}T(\vz)\big)\,d\vz.\label{eq:vi-logpartition}
\end{equation}
Two further objects that we will need throughout the paper follow from the direct differentiation of \eqref{eq:vi-logpartition}:
\begin{subequations}
\begin{align}
    &\vmu(\vlambda) \;=\; \nabla_{\lambda}A(\vlambda) \;=\; \mathbb E_{q_{\lambda}}\!\big[T(\vz)\big],\\&\fisher(\vlambda) \;=\; \nabla^{2}_{\lambda}A(\vlambda) \;=\; \operatorname{Cov}_{q_{\lambda}}\!\big[T(\vz)\big]\succ 0.\label{eq:vi-mean-fisher}
\end{align}
\end{subequations}
 The map $\vlambda\mapsto\vmu(\vlambda)$ is a global diffeomorphism: bijective and smooth, with the smooth inverse $\vmu\mapsto\vlambda(\vmu)$. Thus $\vlambda\in\Lambda$ and $\vmu\in\mathcal M$ are two coordinate systems for the same exponential-family distribution, namely natural parameters and mean parameters. Here $\mathcal M$ is the interior of the convex hull of the support of $T$, and the Jacobian of the coordinate map is precisely the Fisher information $\fisher(\vlambda)$ in natural coordinates \citep{amari_information_2016}. The Legendre dual of $A$ is 
\begin{equation}
    A^{*}(\vmu) \;=\; \sup_{\lambda}\; \left (\vlambda^{\top}\vmu - A(\vlambda) \right) \;=\; \mathbb E_{q_{\lambda(\mu)}}\!\big[\log q_{\lambda(\mu)}(\vz)\big] - \mathbb E_{q_{\lambda(\mu)}}[\log h(\vz)], \label{eq:vi-legendre}
\end{equation} so that $-\ent{q_{\lambda}}$ is, up to the base-measure constant, the dual $A^{*}(\vmu)$ \citep{wainwright_graphical_2008}. Substituting \eqref{eq:vi-legendre} into the free energy, \eqref{eq:vi-vfe-forms}, gives the free energy in natural-parameter coordinates
\begin{equation}
    \mathcal F(\vlambda) \;=\; A^{*}(\vmu(\vlambda)) \;-\; \mathbb E_{q_{\lambda}}[\log f(\vz)] \;+\;\text{const}, \label{eq:vi-parametric-fe}
\end{equation}
which the variational problem now minimizes over the finite-dimensional set $\Lambda$.

\paragraph{The natural gradient and Khan stationary condition.}

The Fisher information \eqref{eq:vi-mean-fisher} equips the exponential family with the Fisher geometry: local displacements in $\vlambda$ are measured by their effect on the corresponding distribution, rather than by the Euclidean distance between parameter vectors. Informally, the \emph{natural gradient} is the steepest ascent direction for a scalar $F:\Lambda\to\mathbb R$ measured in this geometry. More precisely, it is the Riesz representation of the differential of $F$ under the Fisher metric \citep{amari_natural_1998,amari_information_2016}
\begin{equation}
    \widetilde\nabla_{\lambda} F \;:=\; \fisher(\vlambda)^{-1}\nabla_{\lambda}F.\label{eq:vi-natural-gradient}
\end{equation}
A clean identity simplifies this enormously. Because $\vmu = \nabla A(\vlambda)$ has Jacobian $\fisher(\vlambda)$, the chain rule gives $\nabla_{\lambda}F = \fisher(\vlambda)\,\nabla_{\mu}F$ for any $F$ that is smooth on the manifold, hence
\begin{equation}
    \widetilde\nabla_{\lambda} F \;=\; \nabla_{\mu} F.\label{eq:vi-ng-equals-mu-grad}
\end{equation}
\emph{The natural gradient in natural coordinates equals the ordinary gradient in mean coordinates.} This duality is the cornerstone of \citet{amari_natural_1998} and underlies the natural gradient variational inference literature \citep{hoffman_stochastic_2013,khan_fast_2018a}. Applied to the variational free energy \eqref{eq:vi-parametric-fe}, there are two elementary derivative facts \begin{subequations}\label{eq:info-gem-basic-info}
    \begin{align}
        \nabla_{\lambda}A^{*}(\vmu(\vlambda)) &= \fisher(\vlambda)\vlambda, \\
        \nabla_{\lambda}\mathbb E_{q_{\lambda}}[\log f] &= \operatorname{Cov}_{q_{\lambda}}\!\big[T(\vz),\log f(\vz)\big]
    \end{align}
\end{subequations} that combine into the stationary condition
\begin{equation}
    \nabla_{\lambda}\mathcal F(\vlambda) \;=\; 0 \;\Longleftrightarrow\; \vlambda  \;=\; \fisher(\vlambda)^{-1} \nabla_{\lambda} \mathbb E_{q_{\lambda}}[\log f(\vz)]\;=\;  \widetilde\nabla_{\lambda} \mathbb E_{q_{\lambda}}[\log f(\vz)]\;=\; \nabla_{\mu}\mathbb E_{q_{\lambda}}[\log f(\vz)].\label{eq:vi-khan-stationary}
\end{equation}
This is the \textbf{Khan stationary condition} \citep{khan_conjugatecomputation_2017,khan_fast_2018a,khan_bayesian_2023,khan_information_2025}: at any minimizer of the variational free energy over an exponential family, the natural parameter $\vlambda$ equals the natural gradient of the expected log-model. Equation \eqref{eq:vi-khan-stationary} is therefore both an information-geometric \emph{characterization} of the variational fixed point and a \emph{recipe} for finding it by mirror-descent or natural-gradient iteration \citep{khan_conjugatecomputation_2017,khan_bayesian_2023}.

\section{Variational Message Passing}\label{sec:variational-message-passing}

Section~\ref{sec:variational-inference} treated $f(\vz)$ as a structureless joint and committed the variational density to a single global exponential family. We now do the opposite: we keep the variational objective \eqref{eq:vi-vfe-decomposition}, but reinstate the factorization of $f$ through the FFG $\mathcal G$. The functional form of the approximation is then not prescribed as one global density $q_{\lambda}$, as in \eqref{eq:vi-expfam}. It is induced by the admissible belief space and by any additional constraints imposed on that space \citep{senoz_variational_2021}. This is the point we need for the next section: the variational view makes the admissible belief space itself a knob.

\paragraph{Bethe inference and the local polytope.}
The Bethe construction \citep{yedidia_bethe_2001,yedidia_constructing_2005} replaces the joint $q$ by local beliefs: a node belief $q_a(\vz_a)$ for every factor $a\in\mathcal V$ and an edge belief $q_i(z_i)$ for every edge $i\in\mathcal E$. On a TFFG, where every edge has degree two, these beliefs define the Bethe approximation
\begin{equation}
    q(\vz) \;=\; \frac{\prod_{a\in\mathcal V} q_a(\vz_a)}{\prod_{i\in\mathcal E} q_i(z_i)}, \label{eq:vmp-bethe-factorization}
\end{equation}
with admissible beliefs constrained by the local polytope \citep{wainwright_graphical_2008}
\begin{equation}
    \mathcal L(\mathcal G) \;=\;
    \left\{\,(q_a,q_i)\;:\;
    \begin{aligned}
        &\textstyle\int q_a(\vz_a)\,d\vz_a \;=\; 1, &&\forall\,a\in\mathcal V, \\
        &\textstyle\int q_a(\vz_a)\,d\vz_{a\setminus i} \;=\; q_i(z_i), \qquad &&\forall\,a\in\mathcal V,\ i\in\mathcal E(a)
    \end{aligned}
    \right\}. \label{eq:vmp-local-polytope}
\end{equation}
Substituting \eqref{eq:vmp-bethe-factorization} into the variational free energy gives the Bethe free energy \citep{yedidia_constructing_2005,senoz_variational_2021}
\begin{equation}
    \mathcal F_{\mathcal B}\big[\{q_a\},\{q_i\}\big] \;=\; \sum_{a\in\mathcal V}\int q_a(\vz_a)\log\frac{q_a(\vz_a)}{f_a(\vz_a)}\,d\vz_a \;+\; \sum_{i\in\mathcal E} \int q_i(z_i)\log\frac{1}{q_i(z_i)}\,dz_i. \label{eq:vmp-bfe}
\end{equation}
Thus Bethe inference is the constrained variational problem
\[
    \min_{\{q_a,q_i\}\in\mathcal L(\mathcal G)}
    \mathcal F_{\mathcal B}\big[\{q_a\},\{q_i\}\big].
\]
We will use the standard Lagrangian notation for this problem. With scalar multipliers $\psi_a,\psi_i$ for normalization and function-valued multipliers $\lambda_{ai}(z_i)$ for marginalization,
\begin{equation}
\begin{aligned}
    \mathcal L \;=\;\;& \mathcal F_{\mathcal B}\big[\{q_a\},\{q_i\}\big] + \sum_{a\in\mathcal V}\psi_a\!\left(\int q_a(\vz_a)\,d\vz_a - 1\right)
    + \sum_{i\in\mathcal E}\psi_i\!\left(\int q_i(z_i)\,dz_i - 1\right) \\ +\;& \sum_{a\in\mathcal V}\sum_{i\in\mathcal E(a)}\int \lambda_{ai}(z_i)\!\left(q_i(z_i) - \int q_a(\vz_a)\,d\vz_{a\setminus i}\right)dz_i.
\end{aligned}\label{eq:vmp-lagrangian}
\end{equation}

\paragraph{Known stationary consequences.}
By Theorem~1 of \citet{senoz_variational_2021}, stationarity of \eqref{eq:vmp-lagrangian} over the unmodified local polytope gives the ordinary sum-product equations, recovering the Bethe stationary conditions of \citet{yedidia_bethe_2001,yedidia_constructing_2005}. Writing the marginalization multipliers as messages,
\begin{equation}
    \mu_{i\to a}(z_i) \;:=\; \exp\!\big(\lambda_{ai}(z_i)\big), \label{eq:vmp-msg-def}
\end{equation}
the stationary beliefs have the BP product form
\begin{equation}
    q_a^{*}(\vz_a) \;\propto\; f_a(\vz_a)\prod_{i\in\mathcal E(a)}\mu_{i\to a}(z_i), \label{eq:vmp-qa-bp}
\end{equation}
and, on a degree-two edge $\mathcal V(i)=\{b,c\}$,
\begin{equation}
    q_i^{*}(z_i) \;\propto\; \mu_{b\to i}(z_i)\,\mu_{c\to i}(z_i),\qquad \mu_{b\to i}(z_i) \;:=\; \exp\!\big(\lambda_{bi}(z_i)\big). \label{eq:vmp-qi-bp}
\end{equation}
Closing the stationary system with the marginalization constraints recovers the BP update
\begin{equation}
    \mu_{a\to j}^{(k+1)}(z_j) \;=\; \int f_a(\vz_a)\prod_{i\in\mathcal E(a)\setminus j} \mu_{i\to a}^{(k)}(z_i)\,d\vz_{a\setminus j}. \label{eq:vmp-sp-update}
\end{equation}
The important point for us is that BP is what one obtains when the admissible belief space is exactly $\mathcal L(\mathcal G)$: no finite-dimensional form for $q_a$ or $q_i$ has been imposed.

A known way to tune the admissible belief space is to add factorization constraints on node beliefs. Structured VMP splits the incident edges of a factor $a$ into a cluster partition $\mathcal C(a)\subseteq\mathcal P(\mathcal E(a))$ and requires \citep{dauwels_variational_2007,senoz_variational_2021}
\begin{equation}
    q_a(\vz_a) \;=\; \prod_{n\in\mathcal C(a)} q_a^{n}(\vz_a^{n}), \label{eq:vmp-svmp-constraint}
\end{equation}
where $\vz_a^{n}$ collects the variables of cluster $n$. By Theorem~2 of \citet{senoz_variational_2021}, the corresponding stationary update is
\begin{equation}
    q_a^{n,*}(\vz_a^{n}) \;\propto\; \widetilde f_a^{\,n}(\vz_a^{n})\prod_{i\in n}\mu_{i\to a}(z_i),\qquad \widetilde f_a^{\,n}(\vz_a^{n}) \;:=\; \exp\!\left(\,\mathbb E_{\prod_{m\neq n} q_a^{m}}\!\big[\log f_a(\vz_a)\big]\right). \label{eq:vmp-svmp-update}
\end{equation}
The coarsest partition recovers BP, while the finest partition gives mean-field VMP; intermediate partitions give the structured hierarchy described by \citet{senoz_variational_2021}.

\paragraph{Why constraints matter.}
The computational effect is visible in the Normal--Gamma observation model
\[
    x\sim\mathcal N(\mu,\sigma^2),\qquad
    \tau\sim\mathcal G(a,b),\qquad
    y\mid x,\tau\sim\mathcal N(x,\tau^{-1}).
\]
Exact BP through the likelihood factor $f_y(x,\tau)=\mathcal N(y\mid x,\tau^{-1})$ does not preserve Gaussian and Gamma messages simultaneously: a Gamma belief on $\tau$ yields a heavy-tailed, non-Gaussian message to $x$, while a Gaussian belief on $x$ yields a non-Gamma message to $\tau$. The mean-field split $q_y(x,\tau)=q_y^x(x)q_y^\tau(\tau)$ instead replaces $f_y$ by tilted factors of the form \eqref{eq:vmp-svmp-update}. Since $\log f_y(x,\tau)=\frac12\log\tau-\frac12\tau(y-x)^2+\mathrm{const}$, these factors are Gaussian in $x$ and Gamma in $\tau$. Factorization therefore restores finite closure by averaging out the coupling; \Cref{sec:surrogate-vmp} shows the uncertainty correction that this averaging omits.

\paragraph{From node factorizations to edge form constraints.}
Structured VMP illustrates the new degree of freedom exposed by the variational formulation: we can change the feasible set and derive the corresponding message updates from stationarity. A factorization constraint changes the dependence structure of a node belief, but it does not by itself introduce a natural parameter carried by an edge. The next section studies a different modification of the admissible belief space. On selected edges, we impose an exponential-family form constraint
\[
    q_i\in\mathcal E_i,\qquad q_i=q_{\lambda_i}.
\]
This is the step that introduces a finite coordinate $\vlambda_i$ into the otherwise functional Bethe Lagrangian. The exact Lagrange multipliers remain functions, but stationarity with respect to $\vlambda_i$ sees only the tangent directions of the receiving edge family. The result is a finite message: the Fisher-metric projection of the local log-message onto that edge family. With no additional node factorization, this local log-message is the exact BP log-message; under structured VMP, it is the tilted log-message induced by \eqref{eq:vmp-svmp-update}.

\section{Natural-Gradient Message Passing}\label{sec:ng-message-passing}

We now localize the global natural-gradient stationarity condition \eqref{eq:vi-khan-stationary}. In the global formulation, one chooses a single exponential-family density $q_{\lambda}(\vz)$ over all latent variables and obtains
\[
    \vlambda
    =
    \nabla_{\mu}\mathbb E_{q_{\lambda}}[\log f(\vz)].
\]
The factorization of $f$ is invisible to this identity except through the global expectation. The Bethe formulation provides a different knob: instead of choosing one family for the whole joint density, we can change the admissible belief space edge by edge. The question of this section is whether imposing an exponential-family form constraint on a single edge belief recovers a local version of Khan's identity.

\paragraph{Edge-constrained belief space.}
Fix an edge $i\in\mathcal E$ with incident factors $\mathcal V(i)=\{b,c\}$, and equip it with a regular minimal exponential family $\mathcal E_i = \{q_{\lambda_i}\,:\,\vlambda_i\in\Lambda_i\}$. We absorb the carrier into the edge reference measure, so
\begin{equation}
    q_{\lambda_i}(z_i) \;=\; \exp\!\left\{\vlambda_i^{\top}T_i(z_i) - A_i(\vlambda_i)\right\},\qquad \vlambda_i\in\Lambda_i\subseteq\mathbb R^{d_i}, \label{eq:ng-edge-expfam}
\end{equation}
with mean parameter $\vmu_i = \nabla_{\lambda_i}A_i(\vlambda_i)$ and Fisher information $\fisher_i(\vlambda_i)=\nabla^2_{\lambda_i}A_i(\vlambda_i)$. Compared with the local polytope \eqref{eq:vmp-local-polytope}, the new admissible belief space replaces the free edge density $q_i$ with $q_{\lambda_i}$:
\begin{equation}
    \mathcal L_{\mathcal E_i}(\mathcal G)
    =
    \left\{\,
    (\{q_a\},\{q_j\}_{j\neq i},\vlambda_i)
    \;:\;
    \begin{aligned}
        &\textstyle\int q_a(\vz_a)\,d\vz_a=1, &&\forall a\in\mathcal V,\\
        &\textstyle\int q_a(\vz_a)\,d\vz_{a\setminus j}=q_j(z_j), &&\forall a\in\mathcal V,\ j\in\mathcal E(a)\setminus\{i\},\\
        &\textstyle\int q_a(\vz_a)\,d\vz_{a\setminus i}=q_{\lambda_i}(z_i), &&\forall a\in\mathcal V(i)
    \end{aligned}
    \right\}. \label{eq:ng-edge-constrained-polytope}
\end{equation}
The factor beliefs and exact marginalization multipliers remain functional; only the edge marginal has acquired the finite coordinate $\vlambda_i$.

Substituting \eqref{eq:ng-edge-expfam} into the Bethe Lagrangian gives the only new edge-coordinate term
\begin{equation}
    \mathcal L_i\big(\vlambda_i,\{\lambda_{ai}\}_{a\in\mathcal V(i)}\big) \;=\; -\,A_i^{*}\!\big(\vmu_i(\vlambda_i)\big) \;+\; \sum_{a\in\mathcal V(i)}\mathbb E_{q_{\lambda_i}}\!\big[\lambda_{ai}(z_i)\big]. \label{eq:ng-local-lagrangian}
\end{equation}
The first term is the edge entropy written as the negative Legendre dual, and the edge normalization multiplier drops out because $q_{\lambda_i}$ is normalized. Differentiating \eqref{eq:ng-local-lagrangian} gives the new marginal stationarity condition
\begin{equation}
    0
    =
    -\fisher_i(\vlambda_i)\vlambda_i
    +
    \sum_{a\in\mathcal V(i)}
    \operatorname{Cov}_{q_{\lambda_i}}\!\big[T_i,\lambda_{ai}\big].
    \label{eq:ng-lambda-stationary}
\end{equation}

\begin{theorem}[Natural-gradient message passing on a form-constrained edge]
\label{thm:ngmp-edge}
Consider the Bethe free energy \eqref{eq:vmp-bfe} over the edge-constrained belief space $\mathcal L_{\mathcal E_i}(\mathcal G)$ in \eqref{eq:ng-edge-constrained-polytope}. For a real-valued function $r(z_i)$, define the tangent update at the receiving edge marginal by
\begin{equation}
    \tangentproj{i}[r]
    :=
    \fisher_i(\vlambda_i)^{-1}
    \operatorname{Cov}_{q_{\lambda_i}}\!\big[T_i,r\big]
    =
    \nabla_{\mu_i}\,\mathbb E_{q_{\lambda_i}}[r].
    \label{eq:ng-projection}
\end{equation}
The operator $\tangentproj{i}$ maps a function-valued update on the edge to the
finite tangent, equivalently natural-parameter directions visible at the
current receiving marginal $q_{\lambda_i}$.
Let $\lambda_{aj}$ denote the marginalization multipliers of the constrained Bethe Lagrangian. For each incident factor $a\in\mathcal V(i)$, define the exact BP log-message
\begin{equation}
\begin{aligned}
    \mu_{a\to i}(z_i)
    &\;:=\; \int f_a(\vz_a)\prod_{j\in\mathcal E(a)\setminus i}\exp\!\big(\lambda_{aj}(z_j)\big)\,d\vz_{a\setminus i},\\
    \ell_{a\to i}(z_i)
    &\;:=\; \log\mu_{a\to i}(z_i),
\end{aligned}\label{eq:ng-bp-logmsg}
\end{equation}
and its projected natural parameter
\begin{equation}
    \veta_{a\to i}
    :=
    \tangentproj{i}\!\big[\ell_{a\to i}\big]
    =
    \nabla_{\mu_i}
    \mathbb E_{q_{\lambda_i}}\!\big[\ell_{a\to i}(z_i)\big].
    \label{eq:ng-message-eta}
\end{equation}
At any stationary point of the constrained Bethe Lagrangian on the edge $\mathcal V(i)=\{b,c\}$, the edge natural parameter satisfies
\begin{equation}
    \boxed{\;\vlambda_i \;=\; \veta_{b\to i} \;+\; \veta_{c\to i}\;.\;} \label{eq:ng-main-theorem}
\end{equation}
For each $a\in\mathcal{V}(i)$, the corresponding projected message is
\begin{equation}
    \widehat{\mu}_{a\to i}(z_i)
    \;\propto\;
    \exp\!\big\{\veta_{a\to i}^{\top}T_i(z_i)\big\}.
    \label{eq:ng-message-function}
\end{equation}
\end{theorem}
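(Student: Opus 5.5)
The plan is to combine two stationarity conditions of the constrained Bethe Lagrangian: the functional stationarity in the node beliefs $q_a$, which is unchanged by the edge constraint, and the finite-dimensional condition \eqref{eq:ng-lambda-stationary} in $\vlambda_i$. The first step eliminates the unknown function-valued multipliers $\lambda_{ai}$ in favour of the BP log-messages $\ell_{a\to i}$. Substituting the result into \eqref{eq:ng-lambda-stationary} then gives \eqref{eq:ng-main-theorem} by linearity of the covariance.

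\emph{Step 1: node stationarity.} The edge constraint only modifies the edge terms of the Lagrangian \eqref{eq:vmp-lagrangian}. Taking the variational derivative with respect to $q_a$ is therefore exactly as in Theorem~1 of \citet{senoz_variational_2021}. Setting
\[
\log q_a - \log f_a + 1 + \psi_a - \sum_{j\in\mathcal E(a)}\lambda_{aj}(z_j) = 0
\]
gives the product form
\[
q_a(\vz_a)\propto f_a(\vz_a)\prod_{j\in\mathcal E(a)}\exp(\lambda_{aj}(z_j)),
\]
as in \eqref{eq:vmp-qa-bp}.

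\emph{Step 2: marginal consistency.} Impose the constraint $\int q_a\,d\vz_{a\setminus i}=q_{\lambda_i}$ from \eqref{eq:ng-edge-constrained-polytope} for each $a\in\mathcal V(i)$. Pull the factor $\exp(\lambda_{ai}(z_i))$ outside the integral and recognise the remaining integral as $\mu_{a\to i}$ from \eqref{eq:ng-bp-logmsg}. This yields
\[
q_{\lambda_i}(z_i)=C_a^{-1}\exp(\lambda_{ai}(z_i))\,\mu_{a\to i}(z_i).
\]
Taking logarithms and using \eqref{eq:ng-edge-expfam},
\[
\lambda_{ai}(z_i)=\vlambda_i^{\top}T_i(z_i)-\ell_{a\to i}(z_i)+\kappa_a ,
\]
where $\kappa_a$ is a constant in $z_i$.

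\emph{Step 3: substitute into the edge condition.} Constants have zero covariance with $T_i$, and $\operatorname{Cov}_{q_{\lambda_i}}[T_i,\vlambda_i^{\top}T_i]=\fisher_i(\vlambda_i)\vlambda_i$. Hence
\[
\operatorname{Cov}[T_i,\lambda_{ai}]=\fisher_i\vlambda_i-\operatorname{Cov}[T_i,\ell_{a\to i}].
\]
Summing over the two incident factors $b,c$ and inserting into \eqref{eq:ng-lambda-stationary} gives
\[
0=-\fisher_i\vlambda_i+2\fisher_i\vlambda_i-\sum_{a}\operatorname{Cov}[T_i,\ell_{a\to i}],
\]
that is, $\fisher_i\vlambda_i=\sum_a\operatorname{Cov}[T_i,\ell_{a\to i}]$. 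Inverting $\fisher_i\succ0$, which holds by minimality of $\mathcal E_i$, gives $\vlambda_i=\sum_a\tangentproj{i}[\ell_{a\to i}]$, which is \eqref{eq:ng-main-theorem}.

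\emph{Step 4: remaining identities.} The second equality in \eqref{eq:ng-projection} follows from the derivative fact \eqref{eq:info-gem-basic-info} with $\log f$ replaced by a fixed function $r$, namely $\nabla_{\lambda_i}\mathbb E_{q_{\lambda_i}}[r]=\operatorname{Cov}_{q_{\lambda_i}}[T_i,r]$. Combining this with the chain rule identity \eqref{eq:vi-ng-equals-mu-grad} gives $\fisher_i^{-1}\operatorname{Cov}_{q_{\lambda_i}}[T_i,r]=\nabla_{\mu_i}\mathbb E_{q_{\lambda_i}}[r]$. The message form \eqref{eq:ng-message-function} is then a definition. Its consistency is the observation that $q_{\lambda_i}\propto\widehat\mu_{b\to i}\widehat\mu_{c\to i}$, mirroring \eqref{eq:vmp-qi-bp}.

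\emph{Main obstacle.} The algebra is short; the delicate part is rigour and sign bookkeeping. The sign convention of the $\lambda_{ai}$ terms must match the one used to derive \eqref{eq:ng-lambda-stationary}; with that convention, $\exp(\lambda_{bi})$ plays the role of the message arriving from the opposite factor $c$. I would also state regularity assumptions: $\mu_{a\to i}>0$ and finite, $\ell_{a\to i}T_i$ integrable under $q_{\lambda_i}$, and differentiation under the integral sign permitted. These assumptions make the covariances well-defined and justify the stationarity calculus. Step~2 should be checked carefully, since it is where the function-valued multiplier is traded for the exact BP log-message.
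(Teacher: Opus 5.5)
Your proposal is correct and follows the paper's proof essentially step for step: node stationarity gives the BP product form, the marginalization constraint expresses each $\lambda_{ai}$ as $\vlambda_i^{\top}T_i-\ell_{a\to i}$ up to a constant, and substituting this for $b$ and $c$ into the edge stationarity condition yields $\vlambda_i=\veta_{b\to i}+\veta_{c\to i}$. The only difference is cosmetic: you substitute at the covariance level and invert $\fisher_i(\vlambda_i)$ at the end, whereas the paper first premultiplies by $\fisher_i(\vlambda_i)^{-1}$ and then works with $\tangentproj{i}$ throughout.
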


\paragraph{Proof sketch.}
The only new variational variable on edge $i$ is $\vlambda_i$, so differentiating \eqref{eq:ng-local-lagrangian} gives \eqref{eq:ng-lambda-stationary}. Factor-side stationarity still has the BP product form. Closing the marginalization constraint between $q_a$ and $q_{\lambda_i}$ gives, for each $a\in\mathcal V(i)$, the identity
\[
    \log q_{\lambda_i}(z_i)
    =
    \lambda_{ai}(z_i)+\ell_{a\to i}(z_i)+\mathrm{const}.
\]
Applying $\tangentproj{i}$ to this identity and combining the two incident factors with \eqref{eq:ng-lambda-stationary} yields \eqref{eq:ng-main-theorem}. The full algebra is given in \cref{app:ngmp-proof}. \Cref{app:info-gem-message-passing} gives a smaller tutorial derivation of the same mechanism on a one-variable, two-factor graph. Readers who are less familiar with constrained Bethe Lagrangians may find it useful to read that appendix before the full proof.

\paragraph{From the stationary condition to a finite message.}
\Cref{thm:ngmp-edge} should be read as a projected stationarity statement.
It is the localized form of the Khan stationary condition
\eqref{eq:vi-khan-stationary}: the global $\log f(\vz)$ is replaced edge by
edge by the incoming BP
\emph{log-message} $\ell_{a\to i}(z_i)$, the only function of the
surrounding graph visible from edge $i$. The exact BP log-message need not lie
in the tangent space of the receiving exponential family. At the current
marginal $q_{\lambda_i}$, it can be decomposed into the part visible through the
sufficient statistics and an orthogonal residual,
\[
    \ell_{a\to i}
    =
    \veta_{a\to i}^{\top}T_i
    +
    \ell_{a\to i}^{\perp}
    +
    \mathrm{const},
    \qquad
    \tangentproj{i}[\ell_{a\to i}^{\perp}]=0.
\]
Here, the projection is written in log-message coordinates: a log-potential
$r$ represents the infinitesimal multiplicative update
$q_{\lambda_i}^{\epsilon}\propto q_{\lambda_i}\exp\{\epsilon r\}$, whose density
tangent is $q_{\lambda_i}(r-\mathbb E_{q_{\lambda_i}}[r])$. In these
coordinates, the tangent–normal decomposition and the induced projected
message are shown schematically in \Cref{fig:ngmp-tangent-projection}.

\begin{figure}[tbh]
    \centering

\begin{tikzpicture}[
    x=1mm,
    y=1mm,
    >=Stealth,
    every node/.style={font=\small},
    ambient/.style={draw=violet!45!black, line width=0.35pt, opacity=0.35},
    ambient edge/.style={draw=violet!45!black, line width=0.8pt, opacity=0.45},
    surface/.style={fill=cyan!24, draw=teal!70!black, line width=0.9pt, opacity=0.92},
    surface grid/.style={draw=teal!50!white, line width=0.32pt, opacity=0.75},
    tangent plane/.style={fill=cyan!8!white, draw=cyan!55!black,
                          line width=0.65pt, opacity=0.78},
    exact msg/.style={->, red!78!black, line width=1.35pt},
    projected msg/.style={->, blue!72!black, line width=1.35pt},
    normal msg/.style={->, gray!70!black, line width=1.0pt, densely dashed},
    point/.style={circle, fill=blue!70!black, draw=white,
                  line width=0.45pt, inner sep=1.35pt},
    label/.style={font=\small, inner sep=1.2pt,
                  fill=white, fill opacity=0.78, text opacity=1},
    smalllabel/.style={font=\scriptsize, inner sep=1.0pt,
                       fill=white, fill opacity=0.78, text opacity=1}
]

\coordinate (Pfl) at (-58,-25);
\coordinate (Pfr) at ( 60,-22);
\coordinate (Pbr) at ( 54, 32);
\coordinate (Pbl) at (-52, 30);
\coordinate (Qfl) at (-48,-15);
\coordinate (Qfr) at ( 49,-13);
\coordinate (Qbr) at ( 43, 22);
\coordinate (Qbl) at (-42, 20);

\draw[ambient edge]
    (Pfl) .. controls (-72,-2) and (-65,15) .. (Pbl)
          .. controls (-25,39) and ( 24,40) .. (Pbr)
          .. controls ( 68,18) and ( 69,-4) .. (Pfr)
          .. controls ( 25,-33) and (-26,-34) .. cycle;
\draw[ambient]
    (Qfl) .. controls (-54,1) and (-50,12) .. (Qbl)
          .. controls (-18,27) and ( 18,28) .. (Qbr)
          .. controls ( 54,12) and ( 55,-1) .. (Qfr)
          .. controls ( 18,-20) and (-18,-20) .. cycle;
\draw[ambient] (Pfl) -- (Qfl);
\draw[ambient] (Pfr) -- (Qfr);
\draw[ambient] (Pbr) -- (Qbr);
\draw[ambient] (Pbl) -- (Qbl);
\draw[ambient]
    (-56,2) .. controls (-31,13) and (24,11) .. (57,4);
\draw[ambient]
    (-34,-24) .. controls (-8,2) and (12,6) .. (43,31);
\draw[ambient]
    (-21,30) .. controls (-6,8) and (20,1) .. (58,-12);
\node[label, text=violet!45!black] at (51,25) {$\mathcal P$};

\def\surfacepath{
    (-45,-12)
    .. controls (-34,  8) and (-17, 17) .. (  1, 14)
    .. controls ( 20, 11) and ( 35,  4) .. ( 47, -6)
    .. controls ( 35,-17) and ( 15,-20) .. ( -3,-14)
    .. controls (-20, -8) and (-31,-23) .. (-45,-12)
}

\path[surface] \surfacepath -- cycle;

\begin{scope}
    \clip \surfacepath -- cycle;
    \foreach \yy/\bend in {-16/-7,-11/-5,-6/-2,-1/1,4/3,9/5,14/7} {
        \draw[surface grid]
            (-49,\yy) .. controls (-25,\yy+\bend) and
            (21,\yy-\bend) .. (51,\yy+0.4*\bend);
    }
    \foreach \xx/\bend in {-38/5,-27/3,-16/1,-5/-1,6/-2,17/-3,28/-4,39/-5} {
        \draw[surface grid]
            (\xx,-24) .. controls (\xx+\bend,-9) and
            (\xx-\bend,5) .. (\xx+0.6*\bend,19);
    }
\end{scope}
\node[smalllabel, text=teal!50!black, align=center] at (-26,-20)
    {$\mathcal E=\{q_\lambda:\lambda\in\Lambda\}$};

\coordinate (q)      at (0,0);
\coordinate (proj)   at (26,0);
\coordinate (exact)  at (26,24);

\path[tangent plane]
    (-21,-6) -- (34,-5) -- (23,9) -- (-30,7) -- cycle;
\draw[cyan!55!black, line width=0.45pt, opacity=0.75] (-17,0) -- (29,0);
\draw[cyan!55!black, line width=0.45pt, opacity=0.55] (-4,-5.2) -- (10,8.0);
\node[smalllabel, text=cyan!45!black] at (31,6.4) {$T_{q_\lambda}\mathcal E$};

\draw[projected msg] (q) -- (proj)
    node[midway, below=2.0mm, smalllabel, text=blue!70!black, align=center]
    {projected message\\[-1pt]
     $\widehat\mu_{a\to i}\propto
      e^{\tangentproj{i}[\ell_{a\to i}]^{\top}T_i}$};
\draw[normal msg] (proj) -- (exact)
    node[midway, right=1.5mm, smalllabel, text=gray!65!black]
    {$\ell_{\perp}$};
\draw[exact msg] (q) -- (exact)
    node[pos=0.48, above=3.0mm, smalllabel, text=red!75!black, align=center]
    {exact BP message\\[-1pt] $\mu_{a\to i}=e^{\ell_{a\to i}}$};

\fill[point] (q) circle[radius=1.35pt];
\node[smalllabel, above left=1.1mm and 0.7mm of q] {$q_\lambda$};

\draw[gray!70!black, line width=0.55pt]
    (proj) -- ++(-3.8,0) -- ++(0,3.8) -- ++(3.8,0);

\end{tikzpicture}
    \caption{Geometric view of the projected message. The exact BP message is
    $\mu_{a\to i}=\exp\{\ell_{a\to i}\}$, a tangent in the ambient probability
    space $\mathcal{P}$. The decomposition is drawn in log-message coordinates:
    $\ell_{a\to i}$ is split into the tangent component
    $\veta_{a\to i}^{\top}T_i$ and the orthogonal residual
    $\ell_{a\to i}^{\perp}$ at the current marginal $q_{\lambda_i}$. The
    actual finite message sent along the constrained edge is obtained by
    exponentiation of the tangent component, $\widehat{\mu}_{a\to i}\propto
    \exp\{\veta_{a\to i}^{\top}T_i\}$. The residual may remain in the
    functional Lagrange multipliers, but it is invisible to the
    finite-dimensional edge marginal. Making the ambient space picture fully
    rigorous would require a non-parametric density-manifold formulation; see
    \citet{wang2020information} for tangent and cotangent spaces of
    probability densities and Fisher--Rao/Wasserstein information metrics. We
    use the ambient manifold here informally to explain what is the finite-dimensional
    projection.}
    \label{fig:ngmp-tangent-projection}
\end{figure}

The exact functional Lagrange multipliers can contain this residual, but the
finite edge marginal responds only to the tangent component defined in \eqref{eq:ng-message-eta}.

\paragraph{Natural-gradient message passing.}
We call natural-gradient message passing the finite message passing scheme that
uses this tangent component as the natural parameter of the message propagated
to the rest of the graph.
That is, instead of sending the exact BP message
$\mu_{a\to i}$ when it is not representable in the chosen
family, NGMP sends
\[
    \widehat{\mu}_{a\to i}(z_i)
    \propto
    \exp\{\veta_{a\to i}^{\top}T_i(z_i)\}.
\]
Equivalently, NGMP sets the orthogonal residual
$\ell_{a\to i}^{\perp}$ to zero for communication purposes and keeps only the
actual finite-dimensional change visible to the receiving family. Thus
\Cref{thm:ngmp-edge} motivates NGMP by identifying the projected fixed-point
condition that these finite messages must satisfy.

The extension from one constrained edge to an arbitrary subset
$\mathcal S\subseteq\mathcal E$ is edge-local. Imposing
$q_i\in\mathcal E_i$ on every $i\in\mathcal S$ gives one equation of the form
\eqref{eq:ng-main-theorem} per constrained edge, while the exact BP
log-messages \eqref{eq:ng-bp-logmsg} still couple those equations through the
surrounding graph. Unconstrained edges continue to obey the classical BP update
of Section~\ref{sec:variational-message-passing}. If the edge form constraint
is the only additional constraint touching factor $a$, then $\ell_{a\to i}$ in
\eqref{eq:ng-message-eta} is the exact BP log-message. If a structured-VMP
factorization constraint is also imposed on $a$, the same gradient rule applies
with $\ell_{a\to i}$ replaced by the tilted-factor log-message induced by
\eqref{eq:vmp-svmp-update}. In that mean-field setting one projection step is
the non-conjugate VMP update of \citet{knowles_nonconjugate_2011}; see
\Cref{sec:related-work}.

\paragraph{The message depends on the marginal it is sent to.}
A striking feature of \eqref{eq:ng-message-eta} is now immediate:
$\veta_{a\to i}$, and hence $\widehat{\mu}_{a\to i}$, depends on the receiving
edge's own marginal $q_{\lambda_i}$. This is the formal version of the NGMP
information flow previewed in \Cref{fig:compare-inference}: the factor still
forms the exact BP log-message from its local factor and incoming messages, but
the finite message sent onward is obtained by projecting that log-message at the
current receiving marginal. In BP and structured VMP, the outgoing message is
computed from the factor and its other incoming messages; in EP, the outgoing
site update is built from a cavity/tilted construction. NGMP is different
because the receiving marginal itself is the projection point. Operationally,
the update is therefore a Fisher-metric fixed-point iteration that repeatedly
re-projects the exact BP log-message at the current edge marginal.

The next section gives the same replacement a surrogate-model interpretation:
the projected message is implemented by replacing a non-representable local
contribution with a conjugate surrogate whose natural parameter is the tangent
projection.

\section{Surrogate Models}
\label{sec:surrogate-models}

\Cref{thm:ngmp-edge} gives a projected message, not a new global posterior
family. This section explains how such messages can be implemented by replacing
non-representable local contributions with conjugate surrogate leaves. The
first part treats non-conjugate unary leaves. This is essentially conjugate-computation variational inference in FFG language: replacing a difficult local likelihood with a conjugate pseudo-observation creates an auxiliary graph on which ordinary BP can be run. \Cref{fig:poisson-normal-surrogate} shows the complete tree both as a Gaussian surrogate model and as NGMP on the
original graph. Later in \cref{sec:surrogate-multi-interface}, We treat factors with multiple constrained interfaces, such as $\mathcal N(y\mid x,\tau^{-1})$. This is where the local
FFG view matters more. Rather than inventing one global Normal--Gamma
variational family for the joint $(x,\tau)$ block, each interface receives its
own projected conjugate message. \Cref{fig:iid-mean-precision-updates} then
shows how replication turns these per-interface updates into a loop.

\subsection{Surrogate Leaves}
\label{sec:surrogate-leaves}

Consider the Gaussian state-space model with non-conjugate Poisson
observations used by \citet[App.~E.2]{khan_conjugatecomputation_2017}. Writing
$\mathcal P(y\mid r)$ for a Poisson likelihood with rate $r$,
\begin{equation}
    p(z_{0:K},y_{1:K})
    \;=\;
    \mathcal N(z_0\mid m_0,v_0)
    \prod_{k=1}^{K}\mathcal N(z_k\mid z_{k-1},\sigma^2)
    \prod_{k=1}^{K}\mathcal P(y_k\mid \exp z_k).
    \label{eq:interp-poisson-chain}
\end{equation}

Its FFG is a tree: the Gaussian transition factors form a chain, and each
Poisson likelihood is a unary leaf attached to one state edge.
The transition factors are conjugate to Gaussian messages, but the observation
leaf is not. The exact BP message from the observation leaf to the edge $z_k$
is the likelihood itself,
\begin{equation}
    \begin{aligned}
        \mu_{y_k\to k}(z_k)
        &\;\propto\;
        \mathcal P(y_k\mid \exp z_k),\\
        \ell_k(z_k)
        &\;:=\;
        \log\mu_{y_k\to k}(z_k)
        \;=\;
        y_k z_k-\exp z_k+\mathrm{const}.
    \end{aligned}
    \label{eq:interp-poisson-log-message}
\end{equation}
This exact log-message is not quadratic in $z_k$, so it cannot be passed
unchanged inside a Gaussian message-passing scheme. The surrogate-leaf
implementation is to keep the exact log-message as the local object being
projected, but communicate only its Gaussian tangent component at the receiving
edge marginal. \Cref{fig:poisson-normal-surrogate} places this unary
replacement back into the full model for two observation slices. The right
panel keeps the original Poisson tree and labels the projected NGMP messages;
the left panel represents the same messages as Gaussian pseudo-observation
factors. Because every Poisson leaf touches only one latent edge, both views
remain trees. The iteration is required because each projected site must be recomputed at its updated Gaussian marginal, not because of a cycle in the FFG.

\begin{figure}[ht]
    \centering

\begin{tikzpicture}[
    x=1mm,
    y=1mm,
    every node/.style={font=\small},
    paneltitle/.style={font=\bfseries\small, align=center},
    fact/.style={draw=black!70, minimum width=7mm, minimum height=7mm,
                 fill=white, inner sep=1pt},
    surrogate/.style={fact, minimum width=8mm},
    eqfact/.style={draw=black!70, minimum width=4mm, minimum height=5mm,
                   inner sep=0pt, fill=white, font=\scriptsize},
    obsdot/.style={rectangle, fill=black, draw=black, inner sep=0pt,
                   minimum width=1.7mm, minimum height=1.7mm},
    edge/.style={semithick, black!75},
    varlabel/.style={font=\scriptsize, fill=white, inner sep=0.7pt},
    nglabel/.style={font=\scriptsize, text=blue!65!black,
                    fill=white, inner sep=0.7pt},
    qlabel/.style={font=\scriptsize, text=red!70!black,
                   fill=white, inner sep=0.7pt}
]

\begin{scope}[shift={(0,0)}, local bounding box=PSURR]
    \node[paneltitle] at (34,22) {(a) Surrogate-model view};

    \node[fact] (sprior) at (0,6) {$\mathcal N$};
    \node[above=1mm of sprior,font=\scriptsize] {$p(z_0)$};
    \node[fact] (str1) at (12,6) {$\mathcal N$};
    \node[eqfact] (seq1) at (22.5,6) {$=$};
    \node[fact] (str2) at (33,6) {$\mathcal N$};
    \node[eqfact] (seq2) at (43.5,6) {$=$};
    \node[fact] (str3) at (54,6) {$\mathcal N$};
    \coordinate (scontinue) at (62.5,6);
    \node[right=1mm of scontinue,font=\normalsize] {$\cdots$};

    \draw[edge] (sprior) -- node[above,varlabel] {$z_0$} (str1);
    \draw[edge] (str1) -- node[above,varlabel] {$z_1$} (seq1);
    \draw[edge] (seq1) -- (str2);
    \draw[edge] (str2) -- node[above,varlabel] {$z_2$} (seq2);
    \draw[edge] (seq2) -- (str3);
    \draw[edge] (str3) -- node[above,varlabel] {$z_3$} (scontinue);

    \node[surrogate] (sn1) at (22.5,-7) {$\mathcal N^{\star}$};
    \node[surrogate] (sn2) at (43.5,-7) {$\mathcal N^{\star}$};
    \draw[edge] (seq1) -- (sn1);
    \draw[edge] (seq2) -- (sn2);

    \node[obsdot] (sy1) at (22.5,-17) {};
    \node[below=0.5mm of sy1,font=\scriptsize] {$\widetilde y_1^{(t)}$};
    \draw[edge] (sn1) -- (sy1);
    \node[obsdot] (sl1) at (14.5,-7) {};
    \node[below=0.5mm of sl1,font=\scriptsize] {$\Lambda_1^{(t)}$};
    \draw[edge] (sl1) -- (sn1);

    \node[obsdot] (sy2) at (43.5,-17) {};
    \node[below=0.5mm of sy2,font=\scriptsize] {$\widetilde y_2^{(t)}$};
    \draw[edge] (sn2) -- (sy2);
    \node[obsdot] (sl2) at (51.5,-7) {};
    \node[below=0.5mm of sl2,font=\scriptsize] {$\Lambda_2^{(t)}$};
    \draw[edge] (sn2) -- (sl2);
\end{scope}

\begin{scope}[shift={(83,0)}, local bounding box=PNGMP]
    \node[paneltitle] at (34,22) {(b) Message passing view};

    \node[fact] (nprior) at (0,6) {$\mathcal N$};
    \node[above=1mm of nprior,font=\scriptsize] {$p(z_0)$};
    \node[fact] (ntr1) at (12,6) {$\mathcal N$};
    \node[eqfact] (neq1) at (22.5,6) {$=$};
    \node[fact] (ntr2) at (33,6) {$\mathcal N$};
    \node[eqfact] (neq2) at (43.5,6) {$=$};
    \node[fact] (ntr3) at (54,6) {$\mathcal N$};
    \coordinate (ncontinue) at (62.5,6);
    \node[right=1mm of ncontinue,font=\normalsize] {$\cdots$};

    \draw[edge] (nprior) -- node[above,varlabel] {$z_0$} (ntr1);
    \draw[edge] (ntr1) -- node[above,varlabel] {$z_1$} (neq1);
    \draw[edge] (neq1) -- (ntr2);
    \draw[edge] (ntr2) -- node[above,varlabel] {$z_2$} (neq2);
    \draw[edge] (neq2) -- (ntr3);
    \draw[edge] (ntr3) -- node[above,varlabel] {$z_3$} (ncontinue);

    \node[fact] (np1) at (22.5,-7) {$\mathcal P$};
    \node[fact] (np2) at (43.5,-7) {$\mathcal P$};
    \draw[edge] (neq1) --
        node[pos=0.55,left,nglabel] {$\widehat\mu_{y_1\to z_1}^{(t)}$} (np1);
    \draw[edge] (neq2) --
        node[pos=0.55,left,nglabel] {$\widehat\mu_{y_2\to z_2}^{(t)}$} (np2);

    \node[above=1mm of neq1,qlabel] {$q_1^{(t-1)}$};
    \node[above=1mm of neq2,qlabel] {$q_2^{(t-1)}$};

    \node[obsdot] (ny1) at (22.5,-17) {};
    \node[below=0.5mm of ny1,font=\scriptsize] {$y_1$};
    \draw[edge] (np1) -- (ny1);
    \node[obsdot] (ny2) at (43.5,-17) {};
    \node[below=0.5mm of ny2,font=\scriptsize] {$y_2$};
    \draw[edge] (np2) -- (ny2);
\end{scope}

\end{tikzpicture}
    \caption{Two equivalent implementations of the projected updates on two
    displayed observation slices of \eqref{eq:interp-poisson-chain}; both
    factor graphs are trees and the dots indicate continuation to later
    states.
    \textbf{(a)} Surrogate-model view. Each Poisson observation leaf is
    represented by a Gaussian pseudo-observation
    $\mathcal N^{\star}=\mathcal N(\widetilde y_k^{(t)}\mid
    z_k,(\Lambda_k^{(t)})^{-1})$, so ordinary Gaussian BP runs on the entire
    auxiliary tree. \textbf{(b)} NGMP message view. The original Poisson
    factors remain visible; each blue label is the projected observation
    message and the adjacent red marginal defines its tangent space. Inserting
    a Gaussian factor with the blue message's natural parameters produces the
    corresponding leaf in panel (a). Applying this conversion independently
    at every $k$ gives \eqref{eq:interp-normal-normal-surrogate}. The same idea
    generalizes per interface: a multi-interface factor sends a separate
    projected message through each constrained interface, as shown in
    \Cref{fig:iid-mean-precision-updates}.}
    \label{fig:poisson-normal-surrogate}
\end{figure}

Constrain the edge marginal to the univariate Gaussian family
$q_k(z_k)=\mathcal N(z_k\mid m_k,v_k)$ with sufficient statistics
$T_k(z_k)=(z_k,z_k^2)^{\top}$. It is convenient to use mean coordinates
$(m_k,s_k)$, where $s_k=\mathbb E_{q_k}[z_k^2]=m_k^2+v_k$. Under $q_k$,
\begin{equation}
    \mathbb E_{q_k}[\ell_k]
    \;=\;
    y_km_k-\rho_k+\mathrm{const},
    \qquad
    \rho_k
    \;:=\;
    \mathbb E_{q_k}[\exp z_k]
    \;=\;
    \exp\!\left(m_k+\frac{v_k}{2}\right).
    \label{eq:interp-expected-poisson-logmsg}
\end{equation}
Since $v_k=s_k-m_k^2$, differentiating
\eqref{eq:interp-expected-poisson-logmsg} with respect to the Gaussian mean
parameters gives the projected natural parameter
\begin{equation}
    \eta_{y_k\to k}^{\star}
    \;=\;
    \nabla_{(m_k,s_k)}
    \mathbb E_{q_k}[\ell_k]
    \;=\;
    \begin{bmatrix}
        y_k+(m_k-1)\rho_k\\[2pt]
        -\rho_k/2
    \end{bmatrix}.
    \label{eq:interp-poisson-projection}
\end{equation}
This is \eqref{eq:ng-message-eta} in the unary case: the non-quadratic
log-message is replaced by the Gaussian message whose natural parameter is the
mean-coordinate gradient of its expectation.

Writing the projected Gaussian message in canonical form,
\begin{equation}
    \widehat{\mu}_{y_k\to k}^{\star}(z_k)
    \;\propto\;
    \exp\!\left\{\xi_k^{\star}z_k-\frac{1}{2}\tau_k^{\star}z_k^2\right\},
    \qquad
    \xi_k^{\star}=y_k+(m_k-1)\rho_k,
    \quad
    \tau_k^{\star}=\rho_k,
    \label{eq:interp-gaussian-message-canonical}
\end{equation}
and completing the square yields the pseudo-observation form
\begin{equation}
    \widehat{\mu}_{y_k\to k}^{\star}(z_k)
    \;\propto\;
    \mathcal N(\widetilde y_k^{\star}\mid z_k,(\Lambda_k^{\star})^{-1}),
    \qquad
    \Lambda_k^{\star}=\rho_k,
    \qquad
    \widetilde y_k^{\star}
    =
    m_k+\frac{y_k-\rho_k}{\rho_k}.
    \label{eq:interp-pseudo-observation}
\end{equation}
After every Poisson leaf has been rewritten this way, the auxiliary model at an
outer iteration $t$ is a normal-normal chain,
\begin{equation}
    \widetilde p^{(t)}(z_{0:K}\mid \widetilde y_{1:K}^{(t)})
    \;\propto\;
    \mathcal N(z_0\mid m_0,v_0)
    \prod_{k=1}^{K}\mathcal N(z_k\mid z_{k-1},\sigma^2)
    \prod_{k=1}^{K}
    \mathcal N(\widetilde y_k^{(t)}\mid z_k,(\Lambda_k^{(t)})^{-1}).
    \label{eq:interp-normal-normal-surrogate}
\end{equation}
On this surrogate graph, ordinary Gaussian BP is sufficient. The outer loop
carries the non-conjugacy: run Gaussian BP, read the updated Gaussian edge
marginals, recompute the projected Poisson leaves, and repeat. This is the FFG
version of conjugate-computation variational inference. The old ingredient is
the conversion of a non-conjugate local term into a conjugate surrogate site;
the NGMP theorem identifies that site's natural parameter as the projected BP
log-message at the receiving marginal. Since the Poisson factor has only one
latent interface, the two views in \Cref{fig:poisson-normal-surrogate} are
particularly simple. We next retain this local conversion but let one factor
touch two shared latent interfaces.

\subsection{From Leaves to Multi-Interface Factors}
\label{sec:surrogate-multi-interface}

The leaf case hides the main reason for the edge-local formulation. Consider
the simplest repeated factor with two latent interfaces: $N$ conditionally
i.i.d. Normal observations sharing one unknown mean and one unknown precision,
\begin{equation}
    \begin{aligned}
        x &\sim \mathcal N(m_0,v_0),
        \qquad \tau \sim \mathcal G(a_0,b_0),\\
        y_n\mid x,\tau &\sim \mathcal N(x,\tau^{-1}),
        \qquad n=1,\ldots,N.
    \end{aligned}
    \label{eq:surrogate-normal-precision-iid}
\end{equation}
The observation factor
$f_n(x,\tau)=\mathcal N(y_n\mid x,\tau^{-1})$ touches both the Gaussian mean
interface $x$ and the Gamma precision interface $\tau$. As discussed in
\Cref{sec:variational-message-passing}, exact BP through this factor does not
preserve both families simultaneously. A single joint Normal--Gamma surrogate
family for all local quantities would be awkward, and it is unnecessary for
message passing. The FFG view only requires each constrained interface to
receive a finite message in its own family.

Despite having only two unknowns, the FFG is already loopy with two
observations. In \Cref{fig:iid-mean-precision-updates}, the two Normal
likelihoods give two paths between the equality constraints for $x$ and
$\tau$, closing the diamond cycle.

\begin{figure}
    \centering

\begin{tikzpicture}[
    x=1mm,
    y=1mm,
    >=stealth',
    every node/.style={font=\small},
    paneltitle/.style={font=\bfseries\small, align=center},
    prior/.style={draw=black!70, minimum width=8mm, minimum height=7mm,
                  fill=white, inner sep=1pt},
    fact/.style={draw=black!70, minimum width=9mm, minimum height=8mm,
                 fill=white, inner sep=1pt},
    eqfact/.style={draw=black!70, minimum width=5mm, minimum height=5mm,
                   inner sep=0pt, fill=white, font=\scriptsize},
    obsdot/.style={rectangle, fill=black, draw=black, inner sep=0pt,
                   minimum width=1.7mm, minimum height=1.7mm},
    edge/.style={semithick, black!75},
    tauedge/.style={semithick, dashed, black!75},
    edgelabel/.style={font=\scriptsize, fill=white, inner sep=0.7pt},
    nglabel/.style={font=\scriptsize, text=blue!65!black,
                    fill=white, inner sep=0.7pt},
    qlabel/.style={font=\scriptsize, text=red!70!black,
                   fill=white, inner sep=0.7pt}
]

\newcommand{\twonormalnodes}[1]{%
    \node[prior] (#1xprior) at (10,21) {$\mathcal N$};
    \node[above=1.0mm of #1xprior, font=\scriptsize] {$p(x)$};
    \node[prior] (#1tprior) at (60,21) {$\mathcal G$};
    \node[above=1.0mm of #1tprior, font=\scriptsize] {$p(\tau)$};

    \node[eqfact] (#1xeq) at (10,0) {$=$};
    \node[eqfact] (#1teq) at (60,0) {$=$};
    \draw[edge] (#1xprior) -- node[left,font=\scriptsize] {$x$} (#1xeq);
    \draw[tauedge] (#1tprior) -- node[right,font=\scriptsize] {$\tau$} (#1teq);

    \node[fact] (#1n1) at (35,10) {$\mathcal N$};
    \node[fact] (#1n2) at (35,-12) {$\mathcal N$};

    \node[obsdot, above=3mm of #1n1] (#1y1) {};
    \draw[edge] (#1n1) -- (#1y1);
    \node[above=0.5mm of #1y1, font=\scriptsize] {$y_1$};
    \node[obsdot, below=3mm of #1n2] (#1y2) {};
    \draw[edge] (#1n2) -- (#1y2);
    \node[below=0.5mm of #1y2, font=\scriptsize] {$y_2$};
}

\begin{scope}[shift={(0,0)}, local bounding box=PVMP]
    \node[paneltitle] at (35,35) {(a) Variational message passing};
    \twonormalnodes{v}

    \draw[edge] (vxeq) --
        node[pos=0.50,above,sloped,edgelabel] {$q^{(t-1)}(x)$} (vn1);
    \draw[edge] (vxeq) -- (vn2);
    \draw[tauedge] (vn1) --
        node[pos=0.52,above,sloped,edgelabel]
        {$\widetilde\mu_{y_1\to\tau}^{(t)}$} (vteq);
    \draw[tauedge] (vteq) -- (vn2);
\end{scope}

\begin{scope}[shift={(82,0)}, local bounding box=PNGMP]
    \node[paneltitle] at (35,35) {(b) Natural-gradient message passing};
    \twonormalnodes{n}

    \draw[edge] (nxeq) --
        node[pos=0.50,above,sloped,nglabel] {$\mu_{x\setminus1}^{(t)}$} (nn1);
    \draw[tauedge] (nn1) --
        node[pos=0.40,above,sloped,nglabel]
        {$\widehat\mu_{y_1\to\tau}^{(t)}$}
        node[pos=0.65,below,sloped,qlabel] {$q^{(t-1)}(\tau)$} (nteq);
    \draw[tauedge] (nteq) --
        node[pos=0.52,below,sloped,nglabel] {$\mu_{\tau\setminus2}^{(t)}$} (nn2);
    \draw[edge] (nn2) --
        node[pos=0.40,below,sloped,nglabel]
        {$\widehat\mu_{y_2\to x}^{(t)}$}
        node[pos=0.65,above,sloped,qlabel] {$q^{(t-1)}(x)$} (nxeq);
\end{scope}

\end{tikzpicture}
    \caption{The two-observation instance of
    \eqref{eq:surrogate-normal-precision-iid} and the message from the first
    observation toward $\tau$. The Normal prior for $x$ and Gamma prior for
    $\tau$ are at the top; the two $\mathcal N(y_i\mid x,\tau^{-1})$ factors
    close the diamond loop. Solid edges carry $x$ and dashed edges carry
    $\tau$. \textbf{(a)} VMP uses the previous global marginal
    $q^{(t-1)}(x)\propto p(x)\prod_{n=1}^{N}
    \widetilde\mu_{y_n\to x}^{(t-1)}(x)$ to compute
    $\widetilde\mu_{y_1\to\tau}^{(t)}$. \textbf{(b)} NGMP instead uses the
    cavity $\mu_{x\setminus1}^{(t)}$ and projects the resulting exact BP
    log-message at $q_\tau^{(t-1)}$. The blue labels enumerate the four
    messages around the loop; their subscripts give the directions. The red
    labels are the receiving marginals at which the two blue projected
    messages are evaluated. The new $y_1\!\to\!\tau$ message changes the
    cavity entering the lower Normal, whose message toward $x$ changes the
    cavity returning to the upper Normal on the next sweep. The message toward
    $x$ is analogous.}
    \label{fig:iid-mean-precision-updates}
\end{figure}


Thus, VMP iterates between two global marginals, whereas NGMP keeps the
cavity-specific information required by the joint-belief Bethe fixed point.
With $N$ observations, the same pattern couples $2N$ projected likelihood
messages, so even exact local projections require loopy message passing on
this replicated FFG.

\subsection{VMP as the Classical Multi-Interface Surrogate}
\label{sec:surrogate-vmp}

Classical VMP restores finite messages by replacing the coupled observation
factor with tilted factors obtained by averaging the log-likelihood over the
other interface. For
$f_n(x,\tau)=\mathcal N(y_n\mid x,\tau^{-1})$,
\begin{equation}
    \widetilde f_n^{\,x}(x)
    =
    \exp\!\left(
    \mathbb E_{q(\tau)}[\log f_n(x,\tau)]
    \right),
    \qquad
    \widetilde f_n^{\,\tau}(\tau)
    =
    \exp\!\left(
    \mathbb E_{q(x)}[\log f_n(x,\tau)]
    \right).
    \label{eq:surrogate-vmp-tilted-factors}
\end{equation}
Averaging the local log-factor over the other interface gives the conjugate
forms:
\begin{equation}
    \log \widetilde f_n^{\,x}(x)
    =
    -\frac12\mathbb E[\tau](y_n-x)^2+\mathrm{const},
    \qquad
    \log \widetilde f_n^{\,\tau}(\tau)
    =
    \frac12\log\tau
    -\frac12\tau\mathbb E[(y_n-x)^2]
    +\mathrm{const}.
    \label{eq:surrogate-vmp-normal-gamma}
\end{equation}
Thus, each observation contributes a Gaussian site on $x$ and a Gamma site on
$\tau$. On the precision interface, the $\frac12\log\tau$ term increases the
Gamma shape by $\frac12$ per observation. For an incoming Gaussian cavity with
variance $V$, however, the exact BP log-message contains the determinant
correction $-\frac12\log(1+V\tau)$, which VMP omits. This omission produces the
familiar precision overconfidence illustrated in \cite[Figure 10.4]{bishop_pattern_2006};
\Cref{sec:edge-uncertainty} measures its size and when it matters.

\subsection{NGMP as the Improved Local Surrogate}
\label{sec:surrogate-ngmp}

NGMP keeps the same surrogate-graph construction as VMP--one conjugate
surrogate leaf per constrained interface--but changes how the leaf parameters
are chosen.
For the
message from $f_n$ to $x$, form the exact BP log-message by integrating out
the other interface against the incoming message it sends to the factor:
\begin{equation}
    \ell_{n\to x}(x)
    =
    \log
    \int
    \mathcal N(y_n\mid x,\tau^{-1})
    \,\mu_{\tau\setminus n}(\tau)\,d\tau.
    \label{eq:surrogate-ngmp-x-logmsg}
\end{equation}
For the message from $f_n$ to $\tau$,
\begin{equation}
    \ell_{n\to \tau}(\tau)
    =
    \log
    \int
    \mathcal N(y_n\mid x,\tau^{-1})
    \,\mu_{x\setminus n}(x)\,dx.
    \label{eq:surrogate-ngmp-tau-logmsg}
\end{equation}
Each exact log-message is then projected at its own receiving marginal,
\begin{equation}
    \veta_{n\to i}
    =
    \nabla_{\mu_i}
    \mathbb E_{q_i}\!\left[\ell_{n\to i}\right],
    \qquad
    i\in\{x,\tau\}.
    \label{eq:surrogate-ngmp-interface-projection}
\end{equation}
Then, with $\widehat{\mu}_{n\to i}\propto
    \exp\{\veta_{n\to i}^{\top}T_i\}$ (remind \cref{eq:ng-message-function}), the $x$ projection is a Gaussian leaf
\begin{equation}
    \widehat{\mu}_{n\to x}(x)
    \;\propto\;
    \exp\!\left\{\xi_n x-\frac12\Lambda_n x^2\right\},
    \label{eq:surrogate-ngmp-gaussian-leaf}
\end{equation}
and the $\tau$ projection is a Gamma leaf
\begin{equation}
    \widehat{\mu}_{n\to\tau}(\tau)
    \;\propto\;
    \exp\!\left\{\Delta a_n\log\tau-\Delta b_n\tau\right\}.
    \label{eq:surrogate-ngmp-gamma-leaf}
\end{equation}
The blue labels in \Cref{fig:iid-mean-precision-updates}b show this dependency
for two observations. The projected $y_1\!\to\!\tau$ site changes the
$\tau$-cavity entering the second Normal factor; its projected message toward
$x$ then changes the $x$-cavity entering the first factor. The receiving
marginals shown in red set the tangent spaces of the two projections. The
message therefore depends indirectly on its own previous value through the
loopy cavity update, rather than only through a global marginal average.

The point of the local-surrogate construction is that this NGMP computation
also has a surrogate-model interpretation, exactly as the unary Poisson update
has the two equivalent views in \Cref{fig:poisson-normal-surrogate}. Once the
two projected messages in \eqref{eq:surrogate-ngmp-gaussian-leaf} and
\eqref{eq:surrogate-ngmp-gamma-leaf} have been computed, freeze their natural
parameters and insert one conjugate leaf on each receiving interface.
\Cref{fig:multi-interface-surrogate} shows this conversion for one observation
factor. The original factor remains the object used to compute the two exact
cavity log-messages, but ordinary BP is run on the auxiliary graph in which its
two projected contributions are represented separately. What distinguishes
NGMP from VMP is therefore not the final Gaussian and Gamma leaf families, but
their source: NGMP projects the exact cavity log-messages at the receiving
marginals, whereas VMP uses the mean-field tilted factors in
\eqref{eq:surrogate-vmp-tilted-factors}.

\begin{figure}[H]
    \centering

\begin{tikzpicture}[
    x=1mm,
    y=1mm,
    every node/.style={font=\small},
    paneltitle/.style={font=\bfseries\small, align=center},
    fact/.style={draw=black!70, minimum width=8mm, minimum height=7mm,
                 fill=white, inner sep=1pt},
    widefact/.style={fact, minimum width=22mm},
    eqfact/.style={draw=black!70, minimum width=5mm, minimum height=5mm,
                   inner sep=0pt, fill=white, font=\scriptsize},
    obsdot/.style={rectangle, fill=black, draw=black, inner sep=0pt,
                   minimum width=1.7mm, minimum height=1.7mm},
    edge/.style={semithick, black!75},
    edgelabel/.style={font=\scriptsize, fill=white, inner sep=0.7pt}
]

\begin{scope}[shift={(0,0)}]
    \node[paneltitle] at (30,31) {(a) Original multi-interface factor};

    \node[fact] (xf) at (14,9) {$f$};
    \node[eqfact] (xeq) at (30,9) {$=$};
    \node[fact] (xg) at (46,9) {$g$};
    \draw[edge] (xf) -- node[above,edgelabel] {$x$} (xeq);
    \draw[edge] (xeq) -- (xg);

    \node[widefact] (normal) at (30,-2.5)
        {$\mathcal N(y\mid x,\tau^{-1})$};
    \draw[edge] (xeq) -- (normal);
    \node[obsdot] (ydot) at (45,-2.5) {};
    \node[right=1.5pt of ydot,font=\scriptsize] {$y$};
    \draw[edge] (normal) -- (ydot);

    \node[fact] (th) at (14,-14) {$h$};
    \node[eqfact] (teq) at (30,-14) {$=$};
    \node[fact] (tk) at (46,-14) {$k$};
    \draw[edge] (th) -- node[above,edgelabel] {$\tau$} (teq);
    \draw[edge] (teq) -- (tk);
    \draw[edge] (normal) -- (teq);
\end{scope}

\begin{scope}[shift={(88,0)}]
    \node[paneltitle] at (30,31) {(b) Per-interface surrogate leaves};

    \node[fact] (sxf) at (14,6) {$f$};
    \node[eqfact] (sxeq) at (30,6) {$=$};
    \node[fact] (sxg) at (46,6) {$g$};
    \draw[edge] (sxf) -- node[above,edgelabel] {$x$} (sxeq);
    \draw[edge] (sxeq) -- (sxg);

    \node[fact] (xsite) at (30,18) {$\mathcal N^{\star}$};
    \draw[edge] (xsite) -- (sxeq);
    \node[obsdot] (sytilde) at (30,26) {};
    \node[right=1.5pt of sytilde,font=\scriptsize] {$\widetilde y^{\star}$};
    \draw[edge] (sytilde) -- (xsite);
    \node[obsdot] (slambda) at (40,18) {};
    \node[right=1.5pt of slambda,font=\scriptsize] {$\Lambda^{\star}$};
    \draw[edge] (xsite) -- (slambda);

    \node[fact] (sth) at (14,-7) {$h$};
    \node[eqfact] (steq) at (30,-7) {$=$};
    \node[fact] (stk) at (46,-7) {$k$};
    \draw[edge] (sth) -- node[above,edgelabel] {$\tau$} (steq);
    \draw[edge] (steq) -- (stk);

    \node[fact] (tsite) at (30,-19) {$\mathcal G^{\star}$};
    \draw[edge] (steq) -- (tsite);
    \node[obsdot] (sa) at (40,-19) {};
    \node[right=1.5pt of sa,font=\scriptsize] {$a^{\star}$};
    \draw[edge] (tsite) -- (sa);
    \node[obsdot] (sb) at (30,-27) {};
    \node[right=1.5pt of sb,font=\scriptsize] {$b^{\star}$};
    \draw[edge] (tsite) -- (sb);
\end{scope}

\end{tikzpicture}
    \caption{Surrogate-model interpretation of the multi-interface NGMP
    computation in \Cref{fig:iid-mean-precision-updates}b.
    \textbf{(a)} The original observation factor
    $\mathcal N(y\mid x,\tau^{-1})$ touches the $x$ and $\tau$ equality
    constraints and produces one projected message for each interface.
    \textbf{(b)} With those message parameters frozen, the factor's
    contributions are represented by a Gaussian leaf $\mathcal N^{\star}$ on
    $x$ and a Gamma leaf $\mathcal G^{\star}$ on $\tau$; their observed ports
    denote the projected natural parameters. Neighboring factors $f,g,h,k$
    keep their ordinary BP rules. This is the direct multi-interface analogue
    of \Cref{fig:poisson-normal-surrogate}a, except that one original factor
    yields one surrogate leaf per constrained interface.}
    \label{fig:multi-interface-surrogate}
\end{figure}

This gives an outer fixed-point map. Let $\vlambda$ collect the natural
parameters of the constrained edge marginals, and let
$\widehat{\mathcal G}(\vlambda)$ denote the conjugate surrogate graph obtained
by projecting all required exact log-messages at those marginals. During the
inner BP sweep, this surrogate graph is frozen. Running ordinary BP on
$\widehat{\mathcal G}(\vlambda)$ returns new constrained-edge natural
parameters, which define
\begin{equation}
    \Phi(\vlambda)
    :=
    \text{natural parameters returned by BP on }
    \widehat{\mathcal G}(\vlambda).
    \label{eq:surrogate-fixed-point-map}
\end{equation}
For a single constrained degree-two edge, $\Phi_i(\vlambda)$ is the sum of the
two projected incoming message parameters in \eqref{eq:ng-main-theorem}; for a
larger graph, $\Phi$ stacks these local edge updates after the conjugate BP
sweep has propagated their consequences through the surrogate model. The NGMP
iteration is therefore
\begin{equation}
    \vlambda^{(t+1)}=\Phi(\vlambda^{(t)}),
    \label{eq:surrogate-undamped-fixed-point-iteration}
\end{equation}
possibly with damping or momentum (\Cref{app:momentum-damping} explains how these stabilize the iteration in practice). A fixed point
$\vlambda=\Phi(\vlambda)$ is exactly the surrogate-implementation form of the
projected stationarity equations in \Cref{thm:ngmp-edge}.

This is the new contribution relative to the classical surrogate story. NGMP
does not claim that the original Normal precision factor has become conjugate,
nor does it require a global Normal--Gamma joint approximation. It preserves the
edge-local constrained-Bethe stationarity of \Cref{thm:ngmp-edge}: each
constrained edge chooses its own family, receives its own projected message, and
then participates in ordinary BP on the current conjugate surrogate graph.

\section{Related Work}\label{sec:related-work}
\paragraph{Variational message passing and local constraints.}
VMP realizes coordinate-ascent variational inference as local expected-log
factor updates \citep{winn_variational_2005}; structured VMP retains selected
clusters, as in \eqref{eq:vmp-svmp-update}
\citep{dauwels_variational_2007,senoz_variational_2021}. Factor-graph
fragments and Extended VMP change how these local quantities are evaluated
but retain this information flow
\citep{wand_fast_2017,akbayrak_extended_2021}. When the model is conjugate,
the tilted message of \eqref{eq:vmp-svmp-update} already lies in the edge
family and no projection is needed; this is the setting in which we use the
plain name VMP.

\paragraph{Non-conjugate VMP.}
\citet{knowles_nonconjugate_2011} were the first to bring the Fisher
information of the receiving marginal into VMP. When the tilted message
$\widetilde\mu_{a\to i}=\exp\{\widetilde\ell_{a\to i}\}$ leaves the edge
family, non-conjugate VMP (NCVMP) sends the message whose natural parameter
is the Fisher-metric gradient of the expected log-factor at the current
marginal, $\fisher_i(\vlambda_i)^{-1}\nabla_{\lambda_i}\mathbb
E_{q_{\lambda_i}}[\widetilde\ell_{a\to i}]$. In the notation of
\eqref{eq:ng-projection}, this is $\tangentproj{i}[\widetilde\ell_{a\to i}]$:
one tangent projection of the tilted VMP log-message, applied once per
update. NCVMP is therefore a hybrid. Its information flow is that of VMP,
because out-of-cluster variables enter only through their marginals in
$\widetilde f_a$, and its projection point is the receiving marginal
$q_{\lambda_i}$, as in NGMP. Its use of natural gradients is limited to this
single step.

\paragraph{Conjugate-computation variational inference.}
\citet{khan_conjugatecomputation_2017} generalized this step to CVI, which
applies mirror descent to a global ELBO over a fixed-form, possibly
mean-field-factorized exponential family. The duality in
\eqref{eq:vi-ng-equals-mu-grad} and the stationary condition
\eqref{eq:vi-khan-stationary} turn the mean-gradient of an expected
non-conjugate term into an auxiliary conjugate natural parameter. Khan and Lin
call these additive contributions to a global or mean-field coordinate
``messages.'' Their derivation does not, however, start from an FFG local
polytope or derive a distinct outgoing message for every interface of a
multi-interface factor. Later work extends the same conjugate-computation
principle \citep{khan_fast_2018a,khan_bayesian_2023,lin2018variational}.
Khan and Lin's Gaussian pseudo-observation/Kalman construction also supplies
the computation used by the Poisson model in
\eqref{eq:interp-poisson-chain}; our claim concerns its factor-to-edge
semantics.

\paragraph{FFG-local CVI marginal optimization.}
Akbayrak et al. make CVI local to an FFG edge by collecting its scheduled
messages and optimizing the complete marginal
\citep{akbayrak_probabilistic_2022}. We call this schedule instantiated with
the CVI objective of \citet{khan_conjugatecomputation_2017} \emph{Projective
VMP} (PVMP). Given the VMP or structured-VMP messages on edge $i$, write their
normalized product as
$m_i(z_i):=\prod_{a\in\mathcal V(i)}m_{a\to i}(z_i)$. Then
\begin{equation}
    q_i^*
    =
    \arg\min_{q_i\in\mathcal Q_i}
    \mathbb D_{\text{KL}}\!\left[\,q_i\,\middle\|\, m_i\,\right].
    \label{eq:projective-vmp-marginal}
\end{equation}
The optimizer $q_i^*$ returns as the next VMP marginal without EP-like
division by an opposing message. PVMP runs the natural-gradient iteration on
the edge to convergence with the other marginals fixed, whereas NCVMP takes
one step of it and moves on; the two share stationary points on the edge but
differ in cost per update, which is the budget-matched comparison of
\Cref{app:edge-uncertainty-fe}. Both take VMP messages as input and add a
projection at the receiving marginal, so both are hybrids in the above sense.
Neither identifies one factor's natural-gradient contribution with its
outgoing message. Q-conjugacy makes this objective analytic for the
Poisson–Gaussian model in \eqref{eq:interp-poisson-chain}
\citep{lukashchuk_qconjugate_2024}.

\paragraph{NGMP recovers NCVMP and PVMP under mean-field constraints.}
NGMP accepts factorization constraints in addition to the edge form
constraint. If a mean-field or structured-VMP constraint is imposed on factor
$a$, the log-message $\ell_{a\to i}$ in \eqref{eq:ng-message-eta} is the
tilted log-message $\widetilde\ell_{a\to i}$ induced by
\eqref{eq:vmp-svmp-update}, and one projection step of NGMP is exactly the
NCVMP update of \citet{knowles_nonconjugate_2011}, with the same numerical
result. Iterating that projection on the edge marginal to convergence instead
of sending the one-step message gives PVMP. Without the factorization
constraint, NGMP projects the exact BP log-message, so the factor's other
neighbors enter through their messages rather than their marginals; this is
the generalization made in \Cref{thm:ngmp-edge}. We follow this
distinction in naming the baselines of \Cref{sec:edge-uncertainty} and
\Cref{sec:experiments}: VMP when the model is conjugate and no projection is
needed, NCVMP when a single projection step is taken, and PVMP when the
projection is iterated to convergence.

\paragraph{Expectation propagation and local divergence projections.}
EP removes a site to form a cavity, restores the factor to form a tilted
distribution, performs an inclusive-KL moment projection, and divides by the
cavity to recover the site \citep{minka_expectation_2001}. Minka's divergence
framework relates such local objectives more broadly
\citep{minka_divergence_2005}. Thus, EP projects a tilted marginal, NCVMP
takes one projection step of the tilted VMP message, PVMP optimizes a
complete edge marginal, and NGMP projects a factor-to-edge BP log-message.

\paragraph{Gaussian surrogate inference.}
Gaussian surrogate likelihoods are well established: variational Gaussian
sites yield a posterior precision equal to the prior precision plus local
site precisions \citep{10.1162/neco.2008.08-07-592}, consistent with Gaussian
Markov structure \citep{rue2005gaussian}. Such surrogates recover classical
filtering and smoothing computations
\citep{doi:10.1137/0804035,mangion2011online}. The pseudo-observation and
Kalman computation are therefore not contributions; the derived
factor-to-edge semantics is.

\paragraph{From global gradients to factor-to-edge messages.}
The formal distinction is already visible in the preceding equations. Khan's
condition \eqref{eq:vi-khan-stationary} uses the global log-model; BP instead
defines one functional message per factor interface in
\eqref{eq:ng-bp-logmsg}. \Cref{thm:ngmp-edge} proves that its projection
\eqref{eq:ng-message-eta} is the corresponding factor-to-edge natural
parameter. Hence, an arbitrary-degree factor yields a separate message at each
interface, with its own family and receiving marginal, without postulating a
global Gaussian, Normal--Gamma, or other joint family. This per-interface
statement is stronger than a global additive-gradient decomposition, a
one-step NCVMP message, or a complete-marginal PVMP update.


\section{Comparing VMP to NGMP: When Does NGMP Matter?}
\label{sec:edge-uncertainty}

VMP and NGMP address the same local approximation problem: both keep the graph
and chosen marginal families fixed, but they differ in what information enters
each update. VMP computes the update from the expected log-factor under the
current neighboring beliefs, whereas NGMP projects the exact cavity
log-message. The ablations below hold everything else fixed and ask when this
difference matters. We name the baselines as in \Cref{sec:related-work}. In
the first ablation, the model is conjugate, so the baseline is plain VMP with
closed-form updates. In the two non-conjugate ablations, the tilted VMP
messages leave the edge families, so the baseline is PVMP, which iterates the
projective update of \eqref{eq:projective-vmp-marginal} on every edge to
convergence \citep{akbayrak_probabilistic_2022}; the single-step variant of
this projection is NCVMP \citep{knowles_nonconjugate_2011} and serves as the
budget-matched control in \Cref{app:edge-uncertainty-fe}. The computational
complexity favors NGMP: it takes a single natural-gradient step per edge
update, whereas PVMP runs an inner optimization scheme on every edge until
convergence and may therefore evaluate up to a hundred gradients per update.

The relevant distinction is edge uncertainty and how often an approximate
update is composed. When the neighboring beliefs concentrate, variational
expectations become evaluations, and the competing updates agree. When
uncertainty persists, the local discrepancy can accumulate either while
smoothing information along a latent-state chain or while filtering shared
parameters through successive data batches. We isolate these three regimes
below: vanishing edge uncertainty, error accumulation along a Poisson
smoothing chain, and error accumulation along an online heteroscedastic
filtering chain. The broader comparison on real data is deferred to
\Cref{sec:experiments}.
Every result is averaged over 20 instances or masks, with scripts provided in
the \texttt{when\_ngmp\_helps} directory of the
\codelink{accompanying code repository}.

\paragraph{Vanishing edge uncertainty.}
We first return to the normal mean–precision model
of \eqref{eq:surrogate-normal-precision-iid}, with
$x\sim\mathcal N(0,25)$, $\tau\sim\mathrm{Gamma}(2,1)$, and $N$ observations.
For VMP, this model is conjugate, so its coordinate updates are in closed-form;
NGMP's tangent projection has no closed form here and is computed
numerically with an unscented approximation
(\Cref{app:tangent-projection-numerics}). Both constrained edges receive
every observation, so their uncertainty vanishes with $N$.
\Cref{fig:edge-uncertainty-normal-kl} compares VMP and NGMP
with numerically exact marginals, obtained by integrating out $x$ and applying
one-dimensional quadrature over $\tau$. For the precision marginal,
$\mathbb D_{\mathrm{KL}}[p_{\mathrm{exact}}\|q]$ decays approximately as
$N^{-2}$ for VMP and $N^{-3}$ for NGMP: the accuracy ratio grows from
$1.7\times$ at $N=4$ to ${\approx}700\times$ at $N=512$, where matching
NGMP's divergence would require VMP to observe ${\approx}15{,}000$
samples. The mean marginals are already nearly identical. The difference
is therefore an uncertainty correction: VMP omits the determinant term
(\Cref{sec:surrogate-vmp}) in the exact message toward $\tau$, whereas NGMP
retains its tangent component. In this isolated model, the difference has no
practical consequence: at $N=512$, the entire remaining VMP divergence amounts to a
$0.2\%$ deficit of the posterior precision variance, too small to affect
any prediction. The remaining ablations show when it starts to matter: first
when the error is repeated while smoothing through the unobserved stretches
of a chain, and then when state-dependent approximate batch updates are
composed during filtering.

\begin{figure}[t!]
    \centering
    \begin{minipage}{0.48\linewidth}
    \centering
    \includegraphics[width=\linewidth]{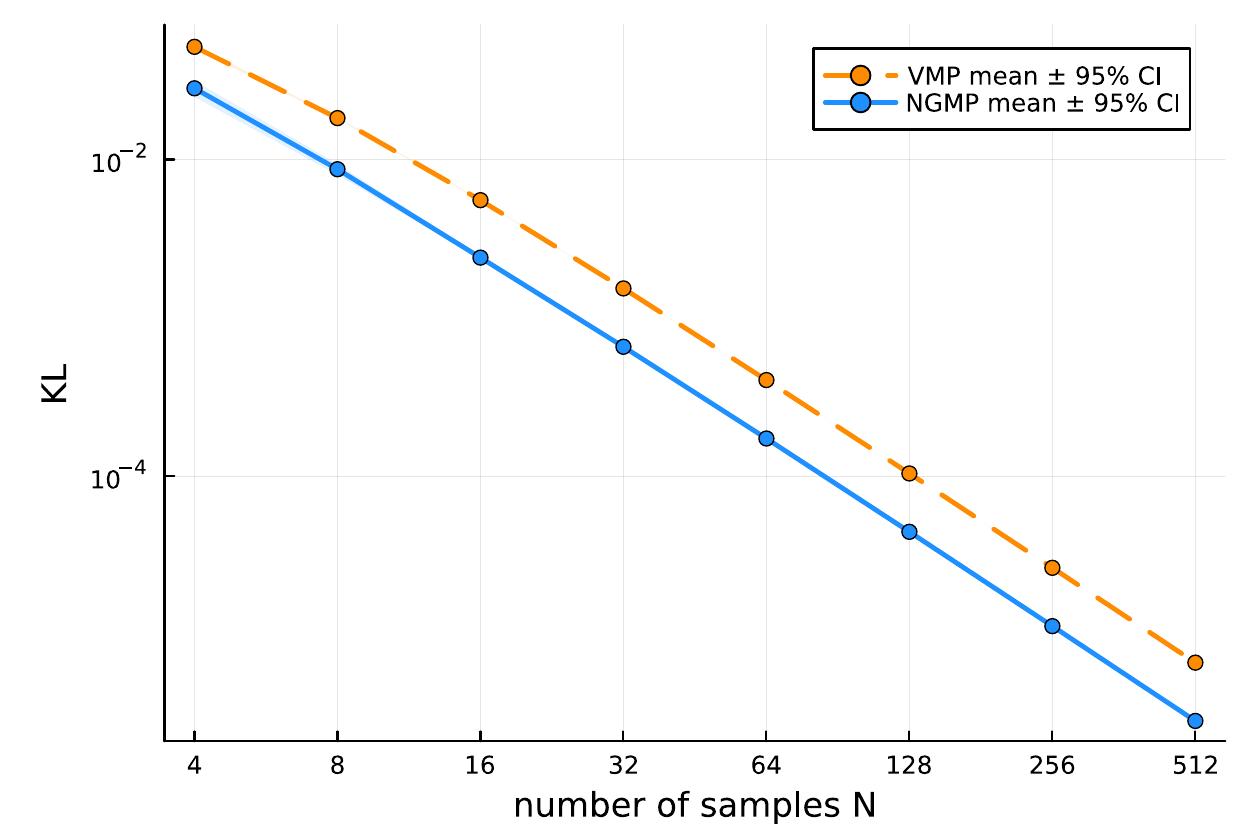}\\[-2pt]
    \small (a) Posterior over $x$
    \end{minipage}
    \hfill
    \begin{minipage}{0.48\linewidth}
    \centering
    \includegraphics[width=\linewidth]{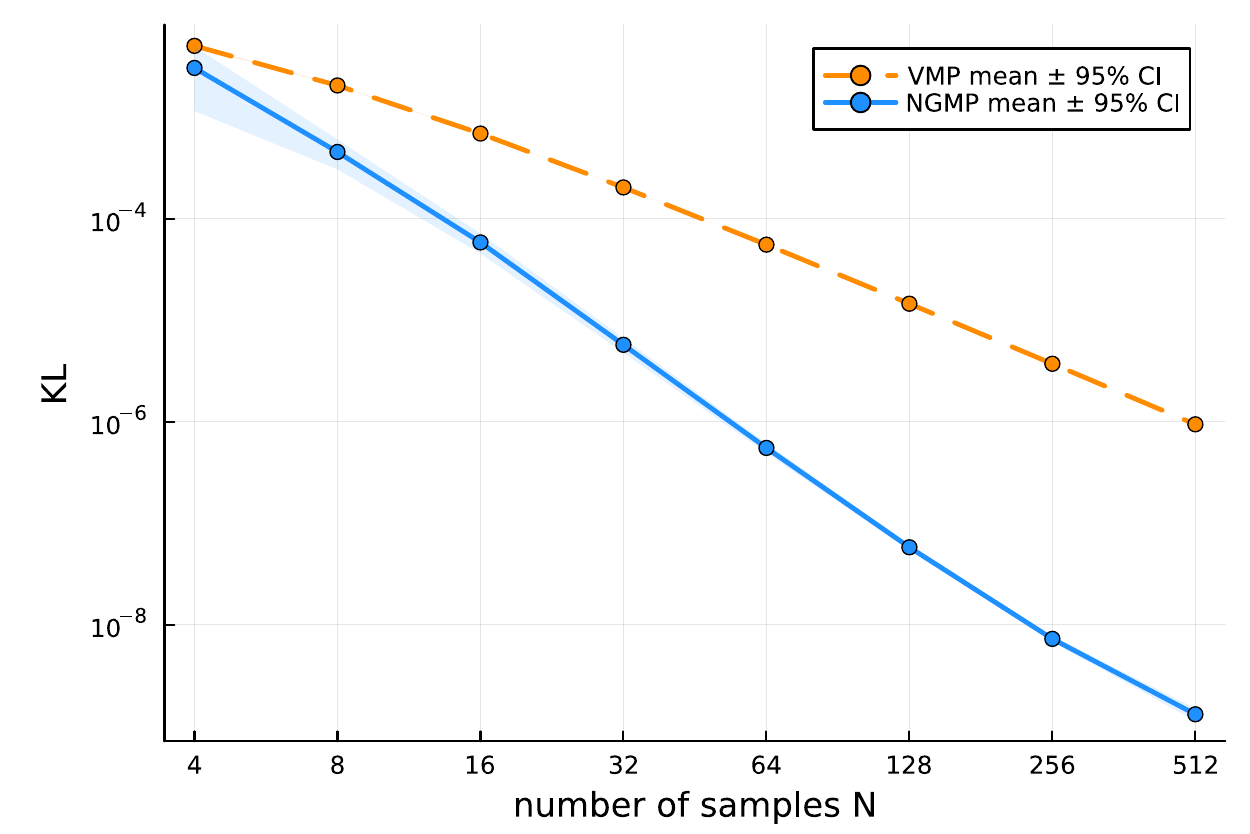}\\[-2pt]
    \small (b) Posterior over $\tau$
    \end{minipage}
    \caption{Marginal inclusive divergence in the isolated Normal
    mean--precision model: (a)
    $\mathbb D_{\mathrm{KL}}[p(x\mid y)\|q(x)]$ and (b)
    $\mathbb D_{\mathrm{KL}}[p(\tau\mid y)\|q(\tau)]$ (mean $\pm$ 95\% CI
    over 20 instances; each instance draws a fresh true $(x,\tau)$ from the
    priors and generates its own stream of samples). For $\tau$, VMP decays
    as $N^{-2}$ and NGMP as $N^{-3}$; for $x$, NGMP is a roughly constant
    $2.3\times$ closer.}
    \label{fig:edge-uncertainty-normal-kl}
\end{figure}

\paragraph{Error accumulation along a Poisson smoothing chain.}
Next, we use the Poisson state-space model of
\eqref{eq:interp-poisson-chain} on the monthly sunspot series \citep{sidc},
with $\sigma^2=0.1$. Unlike in the first ablation, every state edge of the
chain is non-conjugate, so the baseline is PVMP: it computes its marginals
with the projective update of \eqref{eq:projective-vmp-marginal}
(configuration details in \Cref{app:edge-uncertainty-fe}). Both inference
methods are nevertheless closed-form in this model: NGMP's projection is a
single analytic natural-gradient step per edge, and the gradients iterated by
PVMP's inner optimizer are likewise available analytically
(\Cref{app:tangent-projection-numerics}). At $5$--$20\%$ holdout, most missing months
neighbor observed ones and the methods tie within their confidence intervals
(\Cref{tab:edge-uncertainty-poisson}). At $50\%$, long unobserved stretches
keep the state edges uncertain and the methods separate.

\Cref{fig:edge-uncertainty-poisson} identifies the failure mode. PVMP repeats
the mean-field transition update through each missing month, pinning its
posterior variance near $\sigma^2/2=0.05$ independently of distance from
the data. NGMP instead propagates the cavity uncertainty, so the variance
grows from $0.09$ beside an observation to $0.27$ in the deepest gaps.
Accordingly, PVMP's held-out negative log-likelihood rises from $4.8$ at
distance one to $33.5$ at distances five to eight, whereas NGMP remains
near $5.7$. Both methods reach their Bethe plateaus within the 20-sweep budget
--- NGMP by sweep three to four --- so the gap is a property of the fixed
points, not of early stopping (traces in \Cref{fig:app-poisson-fe} of
\Cref{app:edge-uncertainty-fe}).

\begin{figure}[t!]
    \centering
    \includegraphics[width=0.49\linewidth]{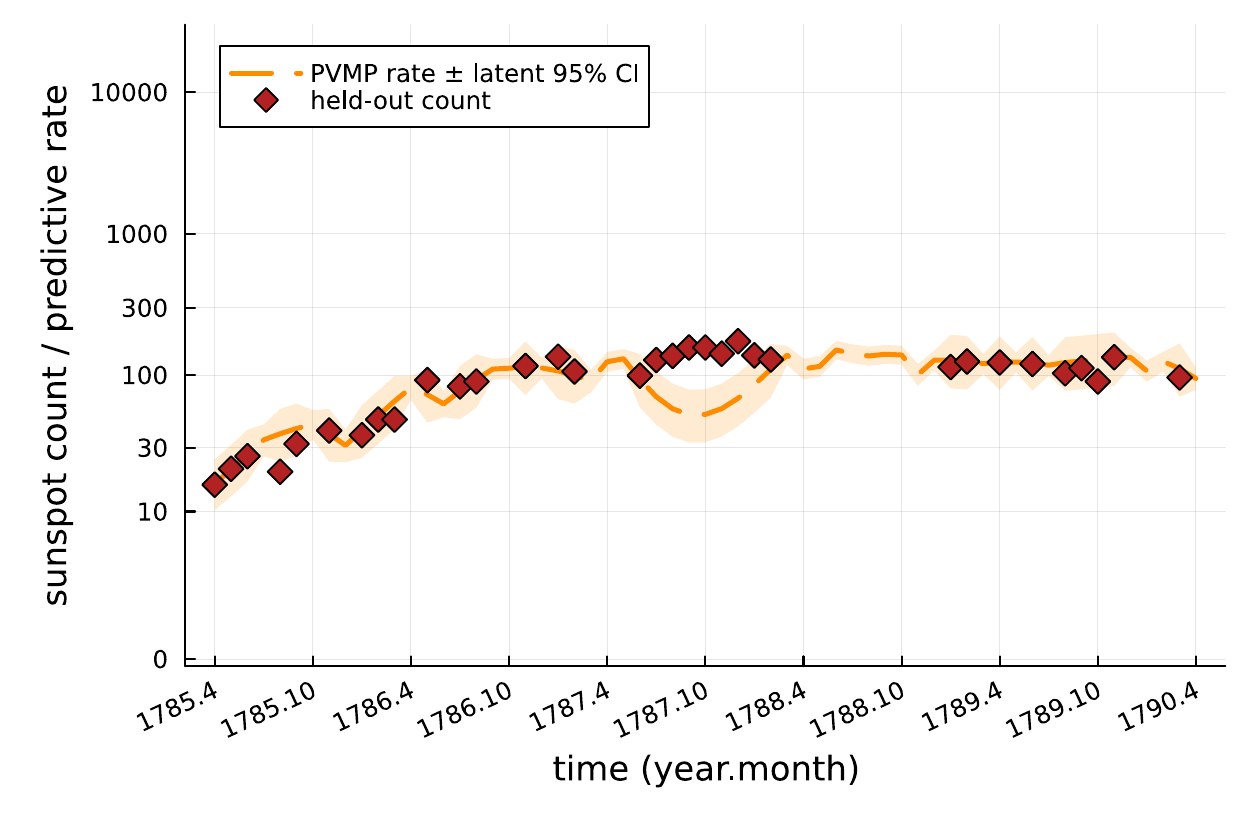}
    \hfill
    \includegraphics[width=0.49\linewidth]{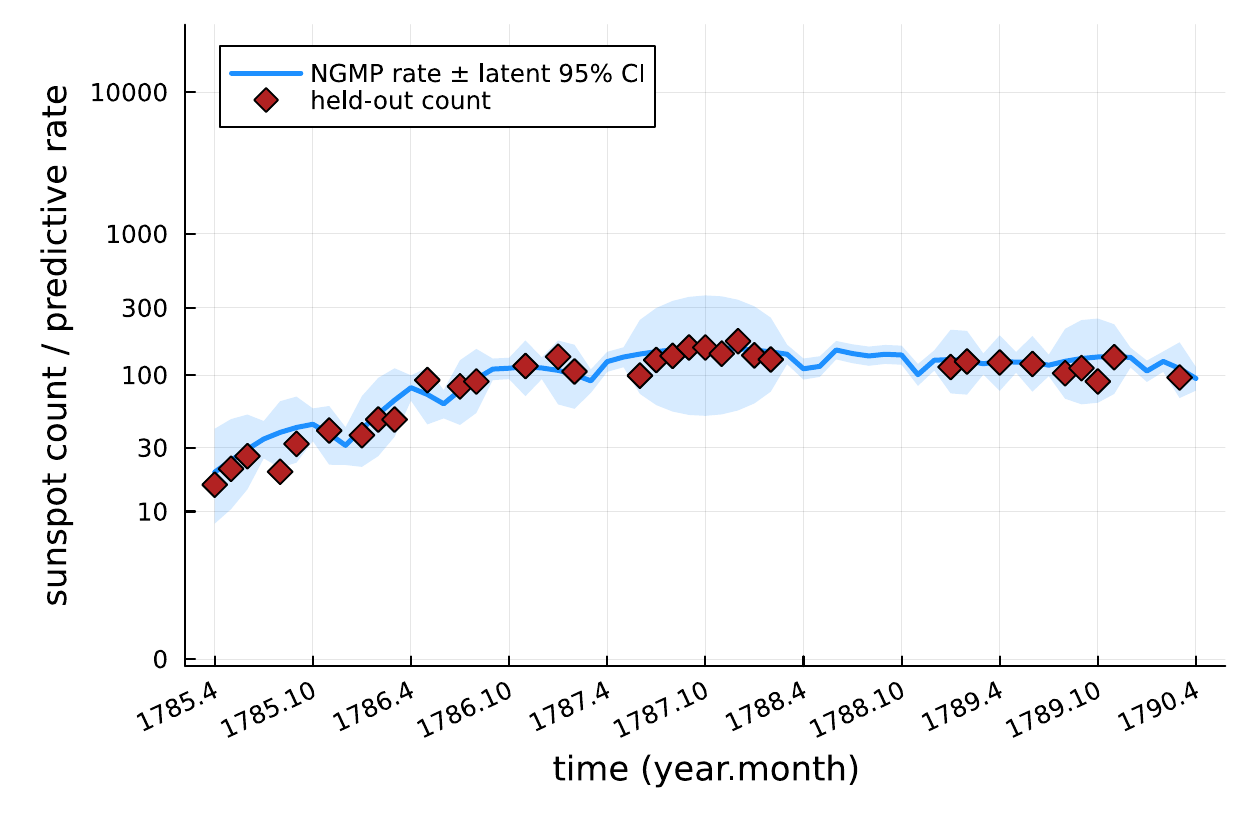}\\[4pt]
    \includegraphics[width=0.44\linewidth]{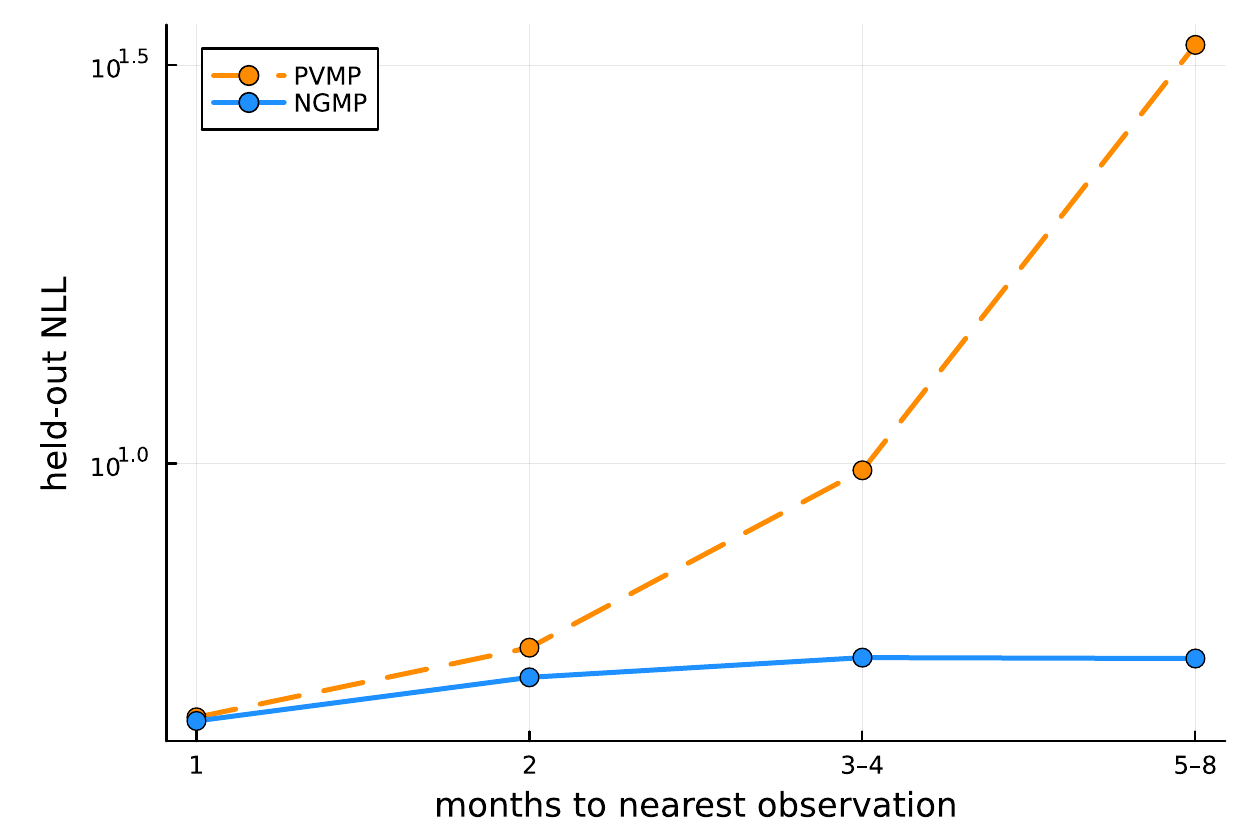}
    \hfill
    \includegraphics[width=0.44\linewidth]{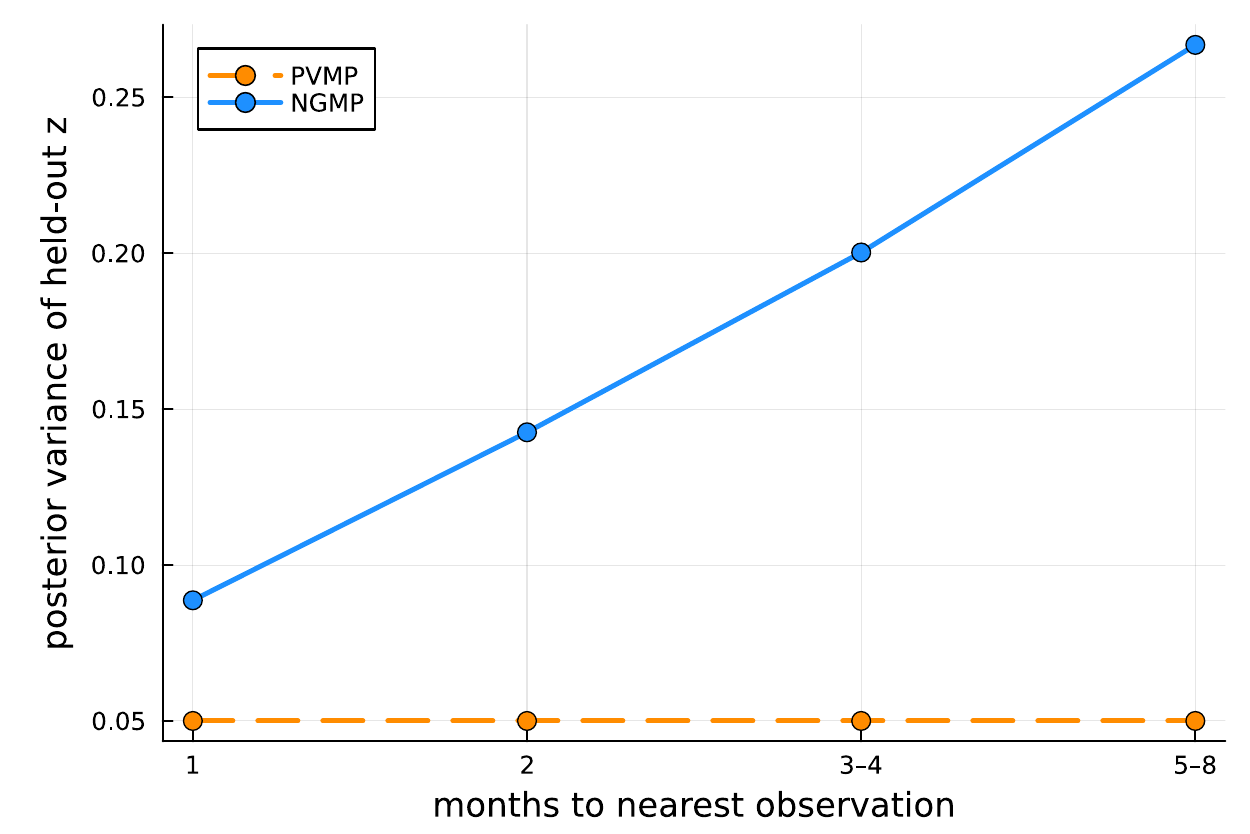}
    \caption{Sunspot results at $50\%$ holdout. Top: a five-year window
    around the deepest held-out stretch of a representative mask, one
    method per panel on identical axes; lines are the plug-in predictive
    rates $\exp(\mathbb E[z_k] + \operatorname{Var}(z_k)/2)$, shaded bands
    push the pointwise $95\%$ credible intervals of $z_k$ through the
    exponential link, and diamonds are held-out counts. PVMP's band keeps a
    fixed width inside the gap and its rate drifts confidently away from
    the held-out counts, while NGMP's band widens toward the gap center
    and keeps covering them. Bottom: held-out negative log-likelihood
    (log scale) and posterior state variance against distance to the
    nearest observation over all 20 masks.}
    \label{fig:edge-uncertainty-poisson}
\end{figure}

\begin{table}[t!]
    \centering
    \caption{Sunspot state-space model: held-out predictive metrics
    (mean $\pm$ 95\% CI over 20 random masks). PVMP and NGMP agree for short
    gaps and separate when the $50\%$ mask creates long uncertain stretches.
    The budget-matched NCVMP control is reported in
    \Cref{tab:app-budget-poisson} of \Cref{app:edge-uncertainty-fe}.}
    \label{tab:edge-uncertainty-poisson}
    \begin{tabular}{rcccc}
\toprule
 & \multicolumn{2}{c}{NLL} & \multicolumn{2}{c}{RMSE} \\
\cmidrule(lr){2-3} \cmidrule(lr){4-5}
Held out & PVMP & NGMP & PVMP & NGMP \\
\midrule
5\% & 4.534 $\pm$ 0.087 & 4.532 $\pm$ 0.085 & 13.918 $\pm$ 0.385 & 13.935 $\pm$ 0.378 \\
10\% & 4.569 $\pm$ 0.058 & 4.566 $\pm$ 0.058 & 13.972 $\pm$ 0.260 & 13.995 $\pm$ 0.260 \\
20\% & 4.672 $\pm$ 0.039 & 4.661 $\pm$ 0.037 & 14.429 $\pm$ 0.214 & 14.446 $\pm$ 0.208 \\
50\% & 5.411 $\pm$ 0.162 & 4.930 $\pm$ 0.035 & 16.839 $\pm$ 0.471 & 15.789 $\pm$ 0.162 \\
\bottomrule
\end{tabular}

\end{table}

\FloatBarrier

\paragraph{Error accumulation along a heteroscedastic filtering chain.}
Finally, consider heteroscedastic regression: the observation noise is not 
constant but depends on the input location, so the model must learn two
functions from the same data: the mean and the noise level. The oscillatory mean is $-(x + \tfrac12)\sin(3\pi x)$, whose true
noise standard deviation is $0.45\,\lvert x+0.5\rvert$, so the data are
nearly noise-free around $x=-0.5$ and increasingly noisy toward the ends of
the input range. The model uses separate random
Fourier feature maps $\phi(\cdot)$ for the mean and $\psi(\cdot)$ for the
log-precision,
\begin{subequations}\label{eq:edge-uncertainty-heteroscedastic}
\begin{align}
    s_o
    &\sim\mathcal N\!\left(
        \psi(x_o)^\top w,c_{\mathrm{top}}^{-1}
    \right), \\
    y_o
    &\sim\mathcal N\!\left(
        \phi(x_o)^\top v,e^{-s_o}
    \right).
\end{align}
\end{subequations}
The latent score $s_o$ is the log-precision of observation $o$: the
observation noise variance at input $x_o$ is $e^{-s_o}$, and $s_o$ depends
on the input through $\psi(x_o)^\top w$, so the first line of
\eqref{eq:edge-uncertainty-heteroscedastic} is the learned noise map over
input space.
Both conditional means are dot products between a fixed feature vector and a
weight vector;
\Cref{fig:edge-uncertainty-filtering-chain,fig:heteroscedastic-hierarchy-ffg}
draw each such conditional as a \texttt{softdot} node\footnote{The
\texttt{softdot} (\emph{soft dot product}) factor is
\begin{equation}
    f_{*}\!\left(z\mid b,\phi,\tau\right)
    =\mathcal N\!\left(z\mid b^{\top}\phi,\,\tau^{-1}\right),
    \label{eq:softdot-factor}
\end{equation}
a Gaussian over $z$ whose mean is the dot product of the coefficients $b$
and the features $\phi$ and whose precision $\tau$ arrives on a separate
edge; as $\tau\to\infty$ the factor degenerates to a deterministic dot
product, hence the name. \Cref{fig:edge-uncertainty-filtering-chain}(b)
opens the node graphically.}, reusing the notation of
\citet{lukashchuk_composing_2026}.

The 400 observations arrive in ten batches of 40, and the model is refit
after each batch. With $\theta=(w,v)$ and $q_b(\theta)=q_b(w)q_b(v)$
denoting the Gaussian weight marginals after batch $b$, each filtering
step multiplies the previous posterior by the approximate message of the
new batch,
\begin{equation}
    q_b(\theta)
    \propto
    q_{b-1}(\theta)\,
    \widehat\mu_{\mathcal B_b}(\theta;q_{b-1}),
    \qquad b=1,\ldots,10.
    \label{eq:edge-uncertainty-filtering}
\end{equation}
If each batch message were an exact Gaussian likelihood, this recursion
would be ordinary Kalman-style filtering, and the ten sequential updates
would return the same posterior as one fit to all 400 observations at
once. Approximate messages break this equivalence: writing
$\mathcal U_{\mathcal B}$ for the update induced by a batch $\mathcal B$,
in general
$\mathcal U_{\mathcal B_{10}}\circ\cdots\circ\mathcal U_{\mathcal B_1}
\neq\mathcal U_{\bigcup_{b=1}^{10}\mathcal B_b}$,
because each $\widehat\mu_{\mathcal B_b}$ depends on the posterior it is
applied to. The experiment measures how far each method falls short of
this ideal: the batching penalty in \Cref{tab:edge-uncertainty-streaming}
is the increase in held-out NLL of the sequential fit over the full-batch
fit, and a method with a small penalty can be trusted to filter online.
\Cref{fig:edge-uncertainty-filtering-chain} shows the corresponding factor
graph and online schedule.

\begin{table}[H]
    \centering
    \small
    \caption{Sequential heteroscedastic model over 20 paired seeds
    (mean $\pm$ 95\% CI). The same 400 observations are fitted jointly or
    filtered in ten batches; the batching penalty is the resulting increase
    in held-out NLL.}
    \label{tab:edge-uncertainty-streaming}
    \begin{tabular}{lccccc}
    \toprule
    & \multicolumn{2}{c}{full batch} & \multicolumn{2}{c}{sequential} & \\
    \cmidrule(lr){2-3}\cmidrule(lr){4-5}
    & NLL & RMSE & NLL & RMSE & batching penalty \\
    \midrule
    PVMP & $0.276 \pm 0.031$ & $0.588 \pm 0.037$ &
        $0.512 \pm 0.044$ & $0.583 \pm 0.039$ & $0.235 \pm 0.030$ \\
    NGMP & $0.235 \pm 0.031$ & $0.591 \pm 0.038$ &
        $0.244 \pm 0.031$ & $0.607 \pm 0.042$ & $0.009 \pm 0.010$ \\
    \bottomrule
    \end{tabular}
\end{table}

\begin{figure}[t!]
    \centering
    \input{figures/factor-graphs/info-gem/heteroscedastic_filtering_chain}
    \par\vspace{5pt}
    \caption{Factor graph and online schedule for the sequential
    heteroscedastic model. Panel (a) shows branch $o$ of the batch box; each
    \texttt{softdot} node is the soft dot-product factor
    \eqref{eq:softdot-factor}. The upper \texttt{softdot} is the
    log-precision path of \eqref{eq:edge-uncertainty-heteroscedastic}, with
    $b=w$, $\phi=\psi(x_o)$, $\tau=c_{\mathrm{top}}$, and $z=s_o$; the
    observation path below it is the same factor with the uncertain
    precision $\tau=e^{s_o}$. The lateral stubs with ellipses at the
    equality nodes continue to the other observations
    $o'\in\mathcal B_b\setminus\{o\}$, so the same $w$ and $v$ are shared by
    all 40 branches. Panel (b) opens the double-bordered $\nexpsym$
    composite and, inside it, the \texttt{softdot} box; the inner box is
    \eqref{eq:softdot-factor} drawn graphically. The deterministic
    \texttt{dot} node forms the dot product $b^{\top}\phi$, the Gaussian
    $\mathcal N$ receives it as its mean, and the precision of that Gaussian
    is the exponentiated incoming score $\tau=e^s$. In panel (a) the ports
    are $b=v$, $\phi=\phi(x_o)$, $s=s_o$, and $z=y_o$. Panel (c) repeats
    the compact batch box for
    $\mathcal B_1,\mathcal B_2,\ldots,\mathcal B_{10}$ beneath the two
    shared-weight chains. Batch 1 starts from the model priors $p(w)p(v)$;
    after batch $b$, the Gaussian marginals $q_b(w)q_b(v)$ are frozen and
    copied left-to-right as the next priors. A perpendicular stop bar marks
    that no message from a later batch is sent back into an earlier batch;
    the bar specifies the online schedule and is not an additional model
    factor. With exact state-independent Gaussian messages, this
    posterior-as-prior multiplication is ordinary associative Kalman-style
    filtering. Solid edges carry weights, features, and observations;
    dashed edges carry precisions; dash-dotted edges carry the latent
    log-precision scores.}
    \label{fig:edge-uncertainty-filtering-chain}
\end{figure}

We use 128 Mat\'ern-$3/2$ random Fourier features plus an intercept for the
mean path and 32 RBF random Fourier features plus an intercept for the
log-precision path. The mean-path coefficients have a prior standard deviation
$2.0$, while the log-precision RFF coefficients retain a standard deviation
$1.6$. Each of the 400
likelihood factors has an uncertain local log-precision $s_o$, but all sites
contribute to the same shared noise weights $w$. Both methods share the graph
and marginal families.
The observation factor of \eqref{eq:edge-uncertainty-heteroscedastic} is non-conjugate with
respect to its log-precision edge, so the baseline is again PVMP: it computes
its expected-log-factor sites there with the projective update of
\eqref{eq:projective-vmp-marginal}, whereas NGMP projects the exact cavity
messages at the receiving marginals, with the required expectations computed
by respective tangent projections (\Cref{app:tangent-projection-numerics}).
Both methods run under the same $240$-sweep budget; Bethe traces and
final-sweep residual changes are given in \Cref{app:edge-uncertainty-fe}.

With all 400 observations fitted together, the held-out RMSE is effectively tied,
while NGMP has a small NLL advantage (\Cref{tab:edge-uncertainty-streaming}).
When the same observations are processed in ten batches of 40, the RMSE remains
similar, but PVMP's NLL rises by $0.235\pm0.030$ nats. NGMP's
$0.009\pm0.010$-nat penalty is consistent with batching invariance. Thus, the
predictive distributions, not merely an internal covariance diagnostic,
establish PVMP's batch-partition dependence in this model.

The mechanism is visible in the posterior over the noise weights $w$. After
the sequential fit, $\log\det\Sigma_w$ is $-72.34$ under PVMP and $-8.01$
under NGMP; across the $33$ coordinates, PVMP's geometric-mean posterior
variance per coordinate is therefore roughly $7.0\times$ smaller, because it
has accumulated the local under-dispersion of its $400$ sites into a shared
overconfident posterior. The collapse happens early: after the first
$40$-point batch, PVMP is already more concentrated
($\log\det\Sigma_w=-27.75$) than NGMP becomes with all 400 points, and a
prior that certain cannot be corrected by later batches
(\Cref{fig:edge-uncertainty-collapse} in \Cref{app:edge-uncertainty-fe}).
NGMP instead concentrates as evidence accumulates and ends near its
full-batch level ($-6.12$ versus $-8.01$).

\Cref{fig:edge-uncertainty-streaming} shows the two sequential fits. Both
methods resolve the oscillatory mean, but NGMP's sequential predictive
density stays close to its full-batch fit while PVMP's NLL degrades. The
learned variance functions in \Cref{fig:edge-uncertainty-variance} localize
the difference: batching distorts PVMP's posterior-mean variance into a sharp
peak near $x=2$ while its pointwise credible interval remains narrow, whereas
NGMP retains substantially more uncertainty about the variance, especially
where data are sparse. The separately plotted mean-weight contribution
distinguishes this epistemic component from the learned aleatoric variance.
The full-batch fits add no visible information at this scale and are deferred
to \Cref{fig:app-streaming-predictive,fig:app-streaming-variance} in
\Cref{app:edge-uncertainty-fe}.

\begin{figure}[t!]
    \centering
    \begin{subfigure}[t]{0.49\linewidth}
        \centering
        \includegraphics[width=\linewidth]{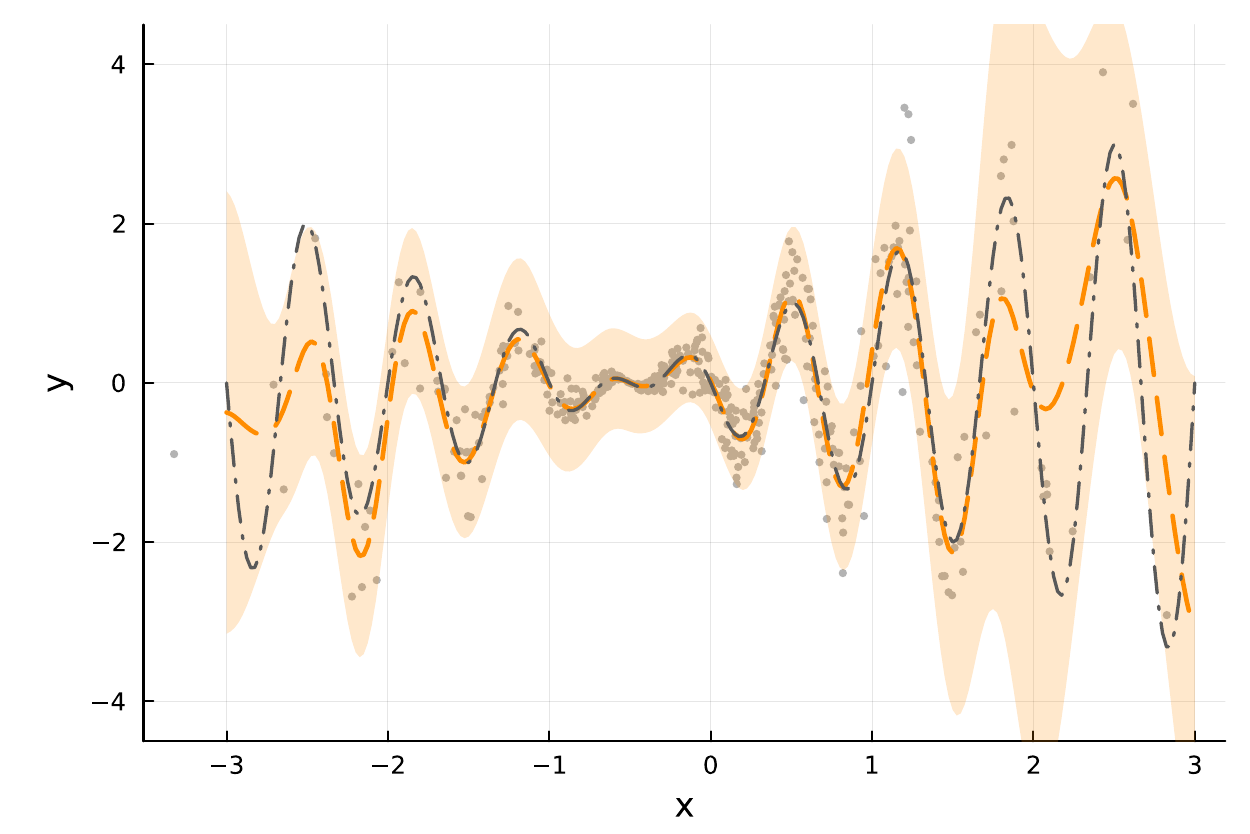}
        \caption{PVMP, sequential batches}
        \label{fig:edge-uncertainty-streaming-vmp-sequential}
    \end{subfigure}
    \hfill
    \begin{subfigure}[t]{0.49\linewidth}
        \centering
        \includegraphics[width=\linewidth]{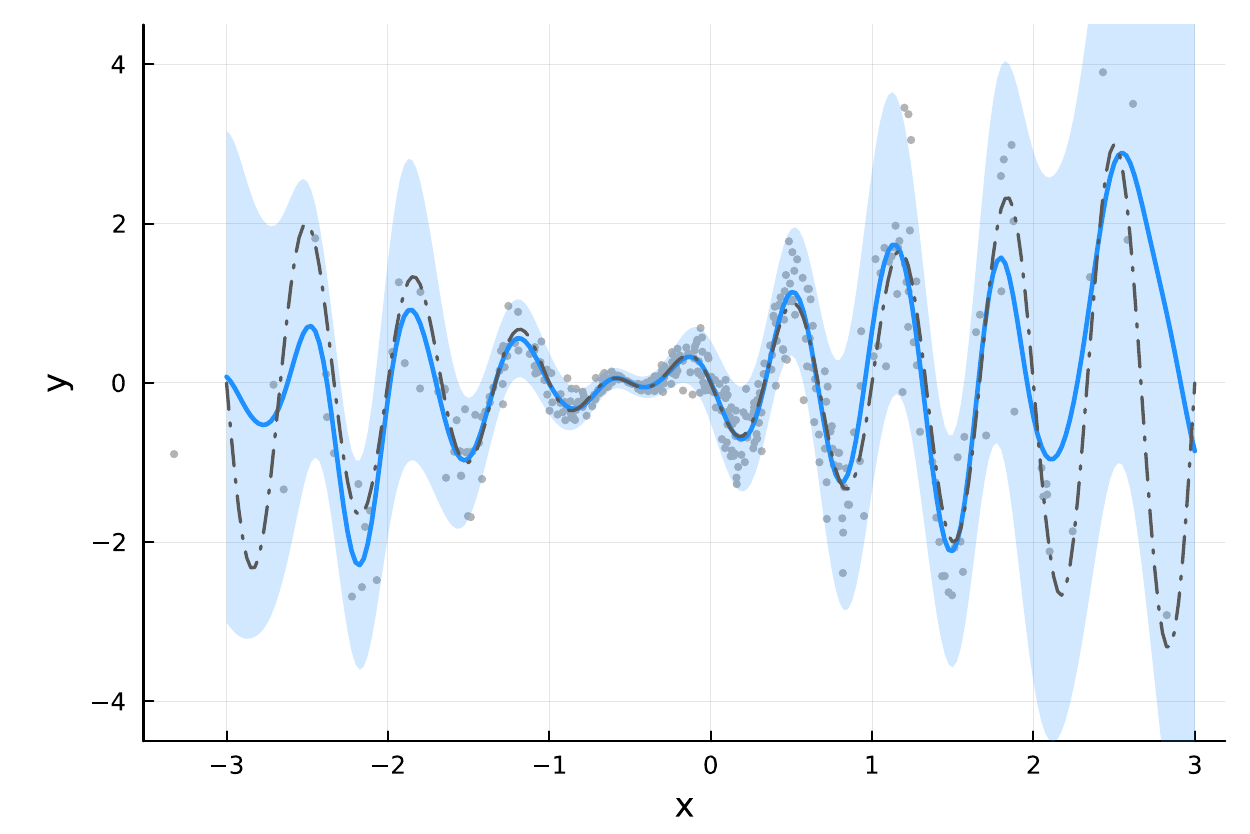}
        \caption{NGMP, sequential batches}
        \label{fig:edge-uncertainty-streaming-ngmp-sequential}
    \end{subfigure}
    \caption{Posterior predictive bands after the ten-step filtering chain of
    \Cref{fig:edge-uncertainty-filtering-chain}, on one representative seed
    with identical axes. The orange dashed curve is PVMP and the blue solid
    curve is NGMP; the matching shaded regions are pointwise 95\%
    posterior-predictive bands. Gray dash-dotted curves show the true mean,
    and gray points are the 400 training observations. Both methods resolve
    the oscillatory mean; under posterior reuse, PVMP's noise-weight covariance
    collapses and its NLL rises, whereas NGMP remains close to its full-batch
    NLL. The full-batch fits of both methods are visually indistinguishable
    from panel (b); \Cref{fig:app-streaming-predictive} in
    \Cref{app:edge-uncertainty-fe} shows all four fits.}
    \label{fig:edge-uncertainty-streaming}
\end{figure}

\begin{figure}[t!]
    \centering
    \begin{subfigure}[t]{0.32\linewidth}
        \centering
        \includegraphics[width=\linewidth]{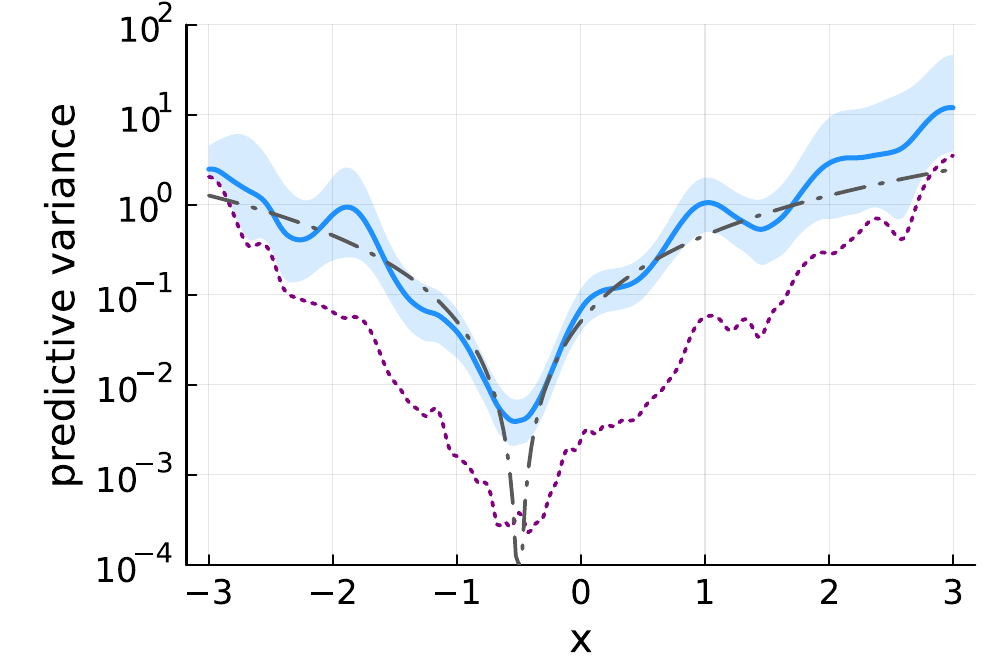}
        \caption{NGMP, sequential batches}
        \label{fig:edge-uncertainty-variance-ngmp-sequential}
    \end{subfigure}
    \hfill
    \begin{subfigure}[t]{0.32\linewidth}
        \centering
        \includegraphics[width=\linewidth]{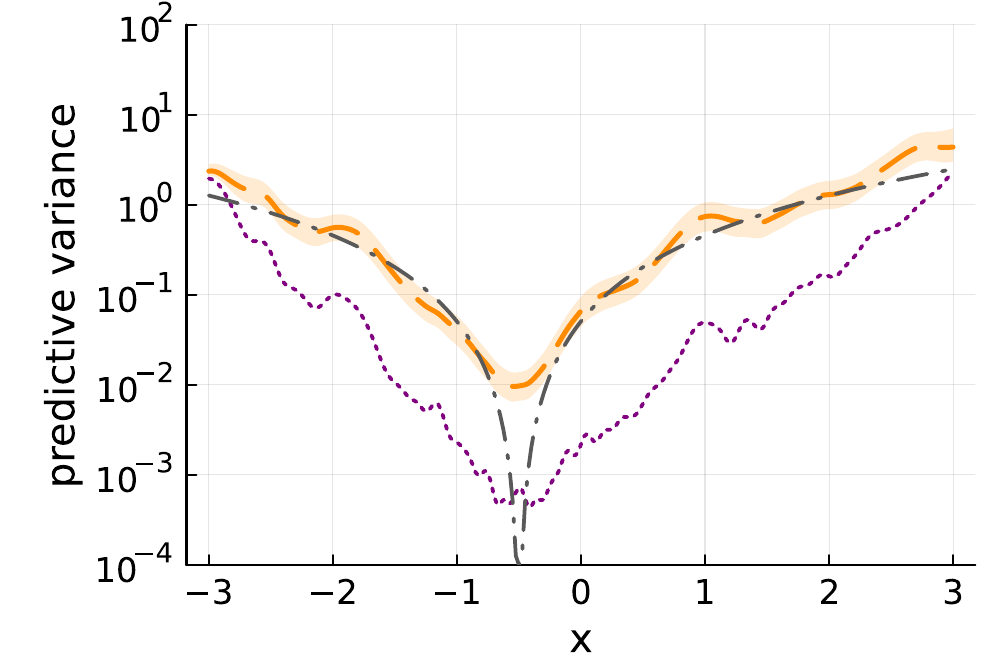}
        \caption{PVMP, full batch}
        \label{fig:edge-uncertainty-variance-vmp-full}
    \end{subfigure}
    \hfill
    \begin{subfigure}[t]{0.32\linewidth}
        \centering
        \includegraphics[width=\linewidth]{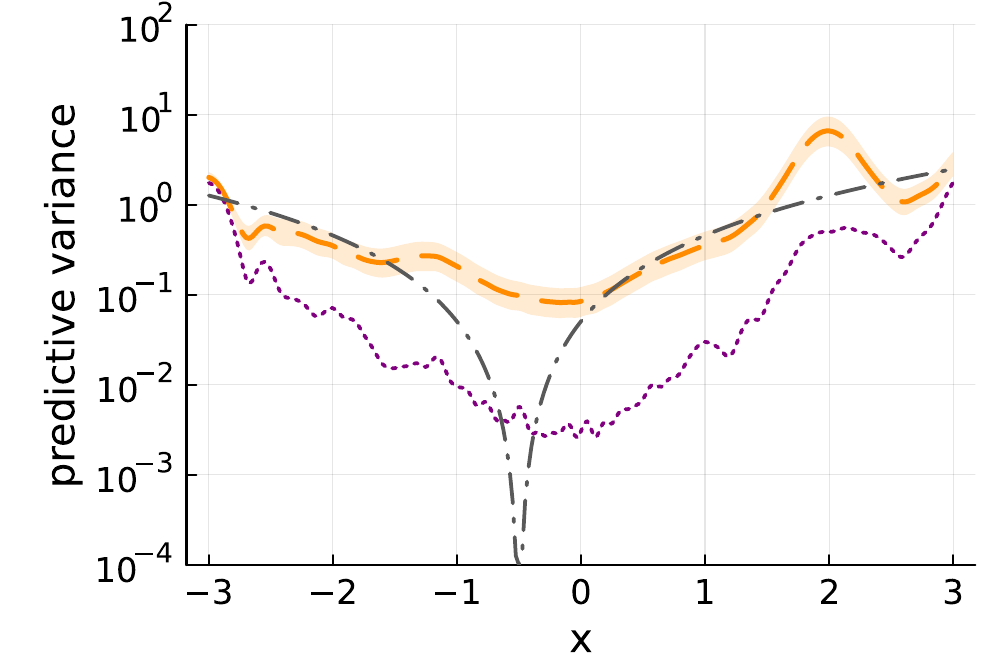}
        \caption{PVMP, sequential batches}
        \label{fig:edge-uncertainty-variance-vmp-sequential}
    \end{subfigure}
    \caption{Posterior predictive variance on the same representative seed as
    \Cref{fig:edge-uncertainty-streaming}. The blue solid curve in panel (a)
    is NGMP and the orange long-dashed curves in panels (b)--(c) are PVMP;
    panel (b) uses all observations at once, panels (a) and (c) use the ten
    sequential batches. Each curve is the posterior mean
    total predictive variance,
    $\bar S(x)=V_{\mathrm{epi}}(x)+V_{\mathrm{alea}}(x)$, where
    $V_{\mathrm{epi}}(x)=\phi(x)^\top\Sigma_v\phi(x)$ is epistemic variance
    from the uncertain mean weights and
    $V_{\mathrm{alea}}(x)=\mathbb E_q[e^{-s(x)}]$ is learned aleatoric
    variance. Shading gives a pointwise 95\% posterior credible interval for
    the total. The purple dotted curve shows $V_{\mathrm{epi}}(x)$ alone, so
    its vertical gap to the orange or blue total is $V_{\mathrm{alea}}(x)$.
    The gray dash-dotted curve is the benchmark's true aleatoric variance
    $[0.45(x+0.5)]^2$. These pointwise
    credible intervals describe posterior
    uncertainty for one fit and are distinct from the across-seed confidence
    intervals in \Cref{tab:edge-uncertainty-streaming}. All panels use
    identical logarithmic axes. NGMP's full-batch fit is visually
    indistinguishable from panel (a); \Cref{fig:app-streaming-variance} in
    \Cref{app:edge-uncertainty-fe} shows all four fits.}
    \label{fig:edge-uncertainty-variance}
\end{figure}

\FloatBarrier

\paragraph{Practical rule.}
Across these ablations, NGMP is indistinguishable from VMP and PVMP when the
relevant edge uncertainty is reducible, and it improves calibration or
prediction when uncertain updates are repeatedly composed. Hence, when both
updates are available for the same constrained interface, NGMP is the safer
default. However, the undamped NGMP fixed-point iteration can
oscillate and, in practice, may not converge, so every NGMP run in this paper
damps its updates in natural coordinates, and the deeper hierarchies of
\Cref{sec:experiments} additionally use vector-transport momentum
(\Cref{app:momentum-damping}). Damping rescales the same single
natural-gradient step, and momentum reuses the previous one, so neither adds
gradient evaluations. VMP and PVMP remain adequate when neighboring
beliefs are already concentrated; persistent uncertainty during smoothing or
state-dependent approximate messages reused during filtering are signals that
their local error may accumulate. The filtering result is model-specific:
exact Gaussian message multiplication remains associative, and NGMP is
empirically almost invariant to batching here. Closing PVMP's cost gap by
granting the projective update of \eqref{eq:projective-vmp-marginal} a
single inner step per edge, which is NCVMP, does not help: this control
underfits, still collapses along the same concentration path, and yields
uniformly worse results at the same runtime, so when PVMP is used, its inner
projections should be run to convergence
(\Cref{tab:app-budget-streaming,fig:edge-uncertainty-collapse} in
\Cref{app:edge-uncertainty-fe}).

\section{Experiments}\label{sec:experiments}
This section evaluates NGMP in two larger modeling settings. The point of
both is that probabilistic hierarchies assembled from the factors of
\Cref{sec:edge-uncertainty} learn non-linear dependencies well. The first
setting is supervised regression on six UCI data sets, where the depth-three
heteroscedastic hierarchy of \Cref{fig:heteroscedastic-hierarchy-ffg} is
compared with several variational Bayesian learning frameworks. This is a
framework-level comparison rather than another NGMP--PVMP ablation: direct
comparisons with PVMP are given in \Cref{sec:edge-uncertainty}, while applying
PVMP's inner manifold optimization at every non-conjugate edge is
computationally prohibitive for these data sets and the depth-three hierarchy.
Dataset dimensions and an exact model-capacity accounting are reported in
\Cref{tab:app-uci-dataset-sizes,tab:app-uci-model-capacity}. The second composes frozen pretrained
forecasters into a probabilistic ensemble, following the setup of
\citet{lukashchuk_composing_2026}; the Bayesian model wraps existing
predictors without retraining them. Homoscedastic baseline variants never win
either comparison, so the main text reports the heteroscedastic variants and
defers the homoscedastic ones to \Cref{app:modelling-details}.

\subsection{Regression}\label{sec:experiments-regression}

We evaluate whether the local natural-gradient construction remains useful in a
larger supervised-learning setting on six UCI regression data sets: Concrete,
Energy, Boston Housing, Power Plant, Wine Quality Red, and Yacht.  We use the
depth-three instance of the heteroscedastic hierarchy in
\Cref{fig:heteroscedastic-hierarchy-ffg}, which generalizes the sequential
heteroscedastic model of \Cref{sec:edge-uncertainty}. Its first
linear-Gaussian path models the predictive mean, and two uncertain precision
layers produce input-dependent observation noise. Every layer has its own
fixed feature realization and length scale.

\begin{figure}[t!]
    \centering
\input{figures/factor-graphs/info-gem/ffg_styles}
\begin{tikzpicture}[ffg]

    \node[softfact] (lvlL) {\texttt{softdot}};
    \node[composite] (lvlLm) [right=25mm of lvlL] {$\nexpsym$};
    \node (middots) [right=12mm of lvlLm, inner sep=1pt] {$\cdots$};
    \node[composite] (lvl1) [right=12mm of middots] {$\nexpsym$};
    \node[composite] (lvl0) [right=25mm of lvl1] {$\nexpsym$};

    \draw[scoreedge] (lvlL) -- node[above, edgelabel] {$s_{L,o}$} (lvlLm);
    \draw[scoreedge] (lvlLm) -- (middots);
    \draw[scoreedge] (middots) -- (lvl1);
    \draw[scoreedge] (lvl1) -- node[above, edgelabel] {$s_{1,o}$} (lvl0);

    \node[clamped] (ctop) [left=6mm of lvlL] {};
    \node[portlabel, above=0.5mm of ctop] {$c_{\mathrm{top}}$};
    \draw[precisionedge] (ctop) -- (lvlL);

    \node[clamped] (yo) [right=7mm of lvl0] {};
    \node[portlabel, above=0.5mm of yo] {$y_o$};
    \draw[edge] (lvl0) -- (yo);

    \node[clamped] (featL) [below=5mm of lvlL] {};
    \node[portlabel] (featLlab) [below=0.5mm of featL] {$\psi_L(x_o)$};
    \draw[edge] (featL) -- (lvlL);
    \node[sublabel] at ($(featL)+(0,-8.5mm)$) {level $L$};

    \node[clamped] (featLm) [below=5mm of lvlLm] {};
    \node[portlabel] (featLmlab) [below=0.5mm of featLm] {$\psi_{L-1}(x_o)$};
    \draw[edge] (featLm) -- (lvlLm);
    \node[sublabel] at ($(featLm)+(0,-8.5mm)$) {level $L-1$};

    \node[clamped] (feat1) [below=5mm of lvl1] {};
    \node[portlabel] (feat1lab) [below=0.5mm of feat1] {$\psi_1(x_o)$};
    \draw[edge] (feat1) -- (lvl1);
    \node[sublabel] at ($(feat1)+(0,-8.5mm)$) {level $1$};

    \node[clamped] (feat0) [below=5mm of lvl0] {};
    \node[portlabel] (feat0lab) [below=0.5mm of feat0] {$\phi(x_o)$};
    \draw[edge] (feat0) -- (lvl0);
    \node[sublabel] at ($(feat0)+(0,-8.5mm)$) {observation};

    \node[eqfact] (eqL) [above=6mm of lvlL] {$=$};
    \node[prior] (priL) [above=4.5mm of eqL] {$\mathcal N$};
    \draw[edge] (priL) -- node[right=0.5mm, portlabel] {$w_L$} (eqL);
    \draw[edge] (eqL) -- (lvlL);
    \draw[edge] (eqL.east) -- ++(4mm,0) node[right, inner sep=0.5pt] {$\cdots$};

    \node[eqfact] (eqLm) [above=6mm of lvlLm] {$=$};
    \node[prior] (priLm) [above=4.5mm of eqLm] {$\mathcal N$};
    \draw[edge] (priLm) -- node[right=0.5mm, portlabel] {$w_{L-1}$} (eqLm);
    \draw[edge] (eqLm) -- (lvlLm);
    \draw[edge] (eqLm.east) -- ++(4mm,0) node[right, inner sep=0.5pt] {$\cdots$};

    \node at (middots |- priL) {$\cdots$};

    \node[eqfact] (eq1) [above=6mm of lvl1] {$=$};
    \node[prior] (pri1) [above=4.5mm of eq1] {$\mathcal N$};
    \draw[edge] (pri1) -- node[right=0.5mm, portlabel] {$w_1$} (eq1);
    \draw[edge] (eq1) -- (lvl1);
    \draw[edge] (eq1.east) -- ++(4mm,0) node[right, inner sep=0.5pt] {$\cdots$};

    \node[eqfact] (eq0) [above=6mm of lvl0] {$=$};
    \node[prior] (pri0) [above=4.5mm of eq0] {$\mathcal N$};
    \draw[edge] (pri0) -- node[right=0.5mm, portlabel] {$v$} (eq0);
    \draw[edge] (eq0) -- (lvl0);
    \draw[edge] (eq0.east) -- ++(4mm,0) node[right, inner sep=0.5pt] {$\cdots$};

\end{tikzpicture}
    \caption{One observation branch of the arbitrary-depth heteroscedastic
    hierarchy used in the regression experiment. Every level has its own
    Gaussian weights $w_\ell$, shared across observation branches through
    the equality nodes, and its own clamped feature map
    $\psi_\ell(x_o)$. The top-level \texttt{softdot} is the soft
    dot-product factor \eqref{eq:softdot-factor}. Each level turns its
    incoming score into a precision and emits the score $s_{\ell,o}$ read by
    the level below, and the observation level turns the last score into the
    noise precision of $y_o$. The double-bordered $\nexpsym$ composites are
    opened in \Cref{fig:edge-uncertainty-filtering-chain}(b). The experiments in this section use
    the depth-three instance. Edge styles are as in
    \Cref{fig:edge-uncertainty-filtering-chain}: solid for weights,
    features, and observations; dashed for precisions; dash-dotted for
    scores.}
    \label{fig:heteroscedastic-hierarchy-ffg}
\end{figure}

Each of the feature maps ($\phi$ for the mean layer, $\psi_\ell$ for the
precision layers) contains 1{,}000 random Fourier features, sampled once per
train--test split and then held fixed; posterior uncertainty is therefore over
the linear weights and latent precisions, not over the random features. The
Gaussian weight blocks admit closed-form updates, the non-conjugate
exponential links on the precision paths are updated with NGMP, and the
sweeps are run until convergence. Feature construction, initialization, and
optimization settings are given in \Cref{app:modelling-details}. Prediction
integrates the Gaussian weight posteriors: the predictive variance is the sum
of the mean-weight epistemic variance and the expected aleatoric variance
propagated through the two precision layers.

\paragraph{Protocol.}
We use 20 deterministic repeated-holdout splits with 90\% of each data set for
training and 10\% for testing.  Features and targets are standardized using
training statistics only.  All methods use the same saved split indices, so
their results are paired at the level of both splits and held-out examples.
The data sets range from 308 to 9{,}568 observations and from 4 to 13 input
features; the per-data-set counts are given in
\Cref{tab:app-uci-dataset-sizes}.
We compare NGMP with Bayes by Backprop (BBB)
\citep{blundell_weight_2015}, diagonal and full-covariance deterministic
variational inference (dDVI and DVI) \citep{wu_deterministic_2019}, Bayesian
predictive coding (BPC) \citep{tschantz_bayesian_2025}, and IVON
\citep{shen_variational_2024}.  BBB, dDVI, and DVI exist in a heteroscedastic
variant, shown in \Cref{tab:uci-nll}, and a homoscedastic variant with fixed
observation noise, marked by the prefix ho (hoBBB, hodDVI, hoDVI); BPC and
IVON are available only in homoscedastic form (hoBPC, hoIVON).
\Cref{app:modelling-details} explains the difference between the two variants
and reports the full tables.  The data sets come from
the UCI Machine Learning Repository \citep{asuncion_uci_2007}, and the
repeated-holdout convention follows the one established by \citet{hernandez-lobato_probabilistic_2015}.  The negative log likelihood (NLL)
includes the Jacobian required to
return from standardized targets to the original target units; RMSE is also
reported in original units.  \Cref{tab:uci-nll} gives means and approximate
95\% confidence-interval half-widths, $1.96$ times the standard error across
the 20 paired splits; the RMSE comparison is \Cref{tab:uci-rmse} in
\Cref{app:modelling-details}.

\paragraph{Results.}
NGMP has the lowest mean NLL on Concrete, Energy, Power,
and Yacht. On Wine, its interval includes the best point estimate, while on
Boston, the two DVI variants have the strongest NLL.  The RMSE comparison
(\Cref{tab:uci-rmse}) is more mixed: NGMP is best on Energy and Power,
is confidence-interval compatible with the best results on Yacht and Wine, and
does not improve on DVI for Concrete or Boston.  This distinction is expected
for a probabilistic model: RMSE assesses only the predictive mean, whereas NLL
also rewards the quality of the learned heteroscedastic
uncertainty.  In particular, the large NLL gains on Energy and Yacht,
together with competitive RMSE, indicate that the precision hierarchy improves
the predictive distribution rather than merely shifting its mean.

\begin{table*}[ht]
\centering
\caption{Test negative log likelihood (NLL) in original target units; lower is
better. Values are means \(\pm\) approximate 95\% confidence-interval
half-widths over 20 paired splits. Bold values include the best point estimate
within their confidence interval. BBB, dDVI, and DVI are the heteroscedastic
variants; IVON is homoscedastic and uses $K=20$ posterior network draws for
its predictive mixture. The remaining homoscedastic variants are in
\Cref{tab:app-uci-nll}.}
\label{tab:uci-nll}
\resizebox{\textwidth}{!}{%
\begin{tabular}{lrrrrrr}
\toprule
Method & Concrete & Energy & Boston & Power & Wine & Yacht \\
\midrule
BBB & $3.3328 \pm 0.0215$ & $2.4229 \pm 0.0350$ & $2.6555 \pm 0.0385$ & $2.7785 \pm 0.0143$ & $\bm{0.9438 \pm 0.0232}$ & $2.6565 \pm 0.0451$ \\
dDVI & $3.0477 \pm 0.0432$ & $\bm{1.1920 \pm 0.4625}$ & $\bm{2.4646 \pm 0.0921}$ & $2.8238 \pm 0.0183$ & $\bm{0.9440 \pm 0.0314}$ & $0.4641 \pm 0.1000$ \\
DVI & $3.0406 \pm 0.0438$ & $\bm{1.0094 \pm 0.2973}$ & $\bm{2.4489 \pm 0.0917}$ & $2.8249 \pm 0.0200$ & $\bm{0.9448 \pm 0.0307}$ & $0.4543 \pm 0.1096$ \\
IVON ($K=20$) & $3.7966 \pm 0.0082$ & $3.2693 \pm 0.0412$ & $3.3284 \pm 0.0855$ & $3.7839 \pm 0.0013$ & $1.0430 \pm 0.0156$ & $3.6584 \pm 0.0101$ \\
NGMP & $\bm{2.9845 \pm 0.0433}$ & $\bm{0.9783 \pm 0.0655}$ & $2.6720 \pm 0.0752$ & $\bm{2.7610 \pm 0.0226}$ & $\bm{0.9571 \pm 0.0293}$ & $\bm{0.2782 \pm 0.1437}$ \\
\bottomrule
\end{tabular}%
}
\end{table*}

\subsection{Ensemble Forecasting}\label{sec:experiments-ensemble-forecasting}

We follow the ensemble-forecasting protocol of
\citet{lukashchuk_composing_2026} and replace only its PVMP inference with
NGMP. Interested readers can find details of the ETTh tasks, frozen expert
bank, and training and evaluation splits in \Cref{app:etth-details}.

The controlled edge-uncertainty experiments in \cref{sec:edge-uncertainty}
predict a larger separation in NLL than in point error when
uncertain reliability updates are composed.  We test that prediction against
four neural gates and a precision-gated probabilistic ensemble.  The neural
baselines are an affine gate and a two-layer ReLU gate trained with Adam \citep{kingma_adam_2015}, and
an affine gate and a two-layer ReLU gate trained with IVON
\citep{shen_variational_2024}. Here $K=1000$ means that the IVON posterior
predictive is approximated with 1{,}000 gate-weight draws. Each draw produces
one Gaussian ensemble component; the reported NLL uses the equally weighted
mixture density, and the point forecast averages the component means. The
precise component construction is given in \Cref{app:etth-details}.
The Adam and IVON rows are independently trained neural benchmarks; no causal
optimizer effect is inferred from their contrast.  For readability, the main
table retains the stronger IVON neural baselines; all Adam results remain in
the appendix.

The gating model is the precision-gated ensemble (PGE) of
\citet{lukashchuk_composing_2026}: a depth-one instance of the precision
hierarchy in which the frozen experts form the lower level and their
predictions enter a precision-weighted mixture, while one softdot with an
exponential link maps the shared 65-dimensional context to each expert's
precision.  The PVMP row reports the numbers from that paper,
computed on the identical test split; the NGMP row runs this paper's inference
on the same model.

The affine IVON gate is the closest neural comparison to the precision-gated
model: both consume the same 65-dimensional context and frozen seven-expert
bank, and both map the context to seven reliability scores.  Their scale
semantics differ.  IVON optimizes the softmax-weighted expert squared error; NGMP, on
the other hand, performs inference in an explicit probabilistic precision model whose priors and likelihood determine both relative weights and absolute predictive scale.

\begin{table*}[t]
\centering
\caption{ETTh forecasting in standardized OT units. Entries are test-set point estimates $\pm$ approximate 95\% normal confidence-interval half-widths over test predictions; RMSE intervals use the delta method. Lower is better. Among the methods shown, the lowest point estimate is bold; the second-lowest is also bold when its paired 95\% normal confidence interval relative to the lowest, computed on identical test predictions, includes zero. For IVON, $K=1000$ is the number of gate-weight samples drawn from the fitted variational posterior to approximate the posterior predictive. The predictive-mixture construction and complete experimental setup are described in \Cref{app:etth-details}; the full results, including Adam-trained gates, are reported in \Cref{tab:etth-precision-gated-full}.}
\label{tab:etth-precision-gated}
\scriptsize
\resizebox{\textwidth}{!}{%
\begin{tabular}{lrrrr}
\toprule
\multicolumn{5}{c}{ETTh1} \\
\cmidrule(lr){1-5}
Method & 96 & 192 & 336 & 720 \\
\midrule
\multicolumn{5}{l}{\textit{RMSE}} \\
MoE -- IVON, affine gate ($K{=}1000$) & \(0.3832 \pm 0.0090\) & \(0.3695 \pm 0.0091\) & \(0.3672 \pm 0.0084\) & \(0.5428 \pm 0.0127\) \\
MoE -- IVON, ReLU gate ($K{=}1000$) & \(0.3832 \pm 0.0090\) & \(0.3697 \pm 0.0091\) & \(0.3677 \pm 0.0084\) & \(0.5334 \pm 0.0125\) \\
\addlinespace[2pt]
PGE -- PVMP \citep{lukashchuk_composing_2026} & \(0.3583 \pm 0.0083\) & \(0.3386 \pm 0.0085\) & \(\mathbf{0.3105 \pm 0.0073}\) & \(0.3347 \pm 0.0076\) \\
PGE -- NGMP (this work) & \(\mathbf{0.3554 \pm 0.0084}\) & \(\mathbf{0.3337 \pm 0.0086}\) & \(\mathbf{0.3115 \pm 0.0074}\) & \(\mathbf{0.3300 \pm 0.0080}\) \\
\midrule
\multicolumn{5}{l}{\textit{NLL}} \\
MoE -- IVON, affine gate ($K{=}1000$) & \(10.2777 \pm 0.7366\) & \(3.5515\!\times\!10^{4} \pm 1.1067\!\times\!10^{4}\) & \(429.0098 \pm 143.4073\) & \(168.4529 \pm 26.4000\) \\
MoE -- IVON, ReLU gate ($K{=}1000$) & \(2216.8905 \pm 676.2776\) & \(2.2344\!\times\!10^{8} \pm 9.2280\!\times\!10^{7}\) & \(58.6475 \pm 9.6980\) & \(94.2422 \pm 16.2638\) \\
\addlinespace[2pt]
PGE -- PVMP \citep{lukashchuk_composing_2026} & \(0.4120 \pm 0.0173\) & \(0.3701 \pm 0.0171\) & \(0.3141 \pm 0.0135\) & \(0.3763 \pm 0.0140\) \\
PGE -- NGMP (this work) & \(\mathbf{0.3888 \pm 0.0210}\) & \(\mathbf{0.3378 \pm 0.0200}\) & \(\mathbf{0.2877 \pm 0.0161}\) & \(\mathbf{0.3571 \pm 0.0153}\) \\
\bottomrule
\end{tabular}%
}
\medskip
\resizebox{\textwidth}{!}{%
\begin{tabular}{lrrrr}
\toprule
\multicolumn{5}{c}{ETTh2} \\
\cmidrule(lr){1-5}
Method & 96 & 192 & 336 & 720 \\
\midrule
\multicolumn{5}{l}{\textit{RMSE}} \\
MoE -- IVON, affine gate ($K{=}1000$) & \(0.5821 \pm 0.0134\) & \(\mathbf{0.5198 \pm 0.0129}\) & \(0.5869 \pm 0.0142\) & \(0.8249 \pm 0.0181\) \\
MoE -- IVON, ReLU gate ($K{=}1000$) & \(0.5821 \pm 0.0134\) & \(0.5655 \pm 0.0139\) & \(0.5874 \pm 0.0142\) & \(0.6202 \pm 0.0129\) \\
\addlinespace[2pt]
PGE -- PVMP \citep{lukashchuk_composing_2026} & \(0.5882 \pm 0.0120\) & \(0.5798 \pm 0.0127\) & \(0.5940 \pm 0.0128\) & \(\mathbf{0.5669 \pm 0.0131}\) \\
PGE -- NGMP (this work) & \(\mathbf{0.5631 \pm 0.0124}\) & \(0.5408 \pm 0.0128\) & \(\mathbf{0.5463 \pm 0.0125}\) & \(0.6101 \pm 0.0143\) \\
\midrule
\multicolumn{5}{l}{\textit{NLL}} \\
MoE -- IVON, affine gate ($K{=}1000$) & \(5.9904\!\times\!10^{21} \pm 6.4398\!\times\!10^{21}\) & \(304.6987 \pm 52.2098\) & \(3.7880\!\times\!10^{5} \pm 1.0683\!\times\!10^{5}\) & \(3.2209\!\times\!10^{5} \pm 7.4793\!\times\!10^{4}\) \\
MoE -- IVON, ReLU gate ($K{=}1000$) & \(1.9494\!\times\!10^{18} \pm 1.3836\!\times\!10^{18}\) & \(1.1309\!\times\!10^{5} \pm 6.8836\!\times\!10^{4}\) & \(2.0843\!\times\!10^{6} \pm 5.8045\!\times\!10^{5}\) & \(1.1263\!\times\!10^{6} \pm 2.1811\!\times\!10^{5}\) \\
\addlinespace[2pt]
PGE -- PVMP \citep{lukashchuk_composing_2026} & \(\mathbf{0.9342 \pm 0.0306}\) & \(0.9237 \pm 0.0328\) & \(0.9612 \pm 0.0334\) & \(\mathbf{0.8699 \pm 0.0299}\) \\
PGE -- NGMP (this work) & \(\mathbf{0.9397 \pm 0.0386}\) & \(\mathbf{0.8602 \pm 0.0365}\) & \(\mathbf{0.8702 \pm 0.0348}\) & \(0.9774 \pm 0.0357\) \\
\bottomrule
\end{tabular}%
}
\end{table*}

\paragraph{Results.}
Across the benchmark, RMSE alone often makes the neural gates appear
competitive, whereas their NLL is substantially worse than that of both
precision-gated models. This larger separation in NLL than in RMSE supports
the prediction from the controlled experiments. Against the PVMP results of
\citet{lukashchuk_composing_2026}, NGMP has lower NLL at all
four ETTh1 horizons and at ETTh2 horizons 192 and 336.  It has lower RMSE at
ETTh1 horizons 96, 192, and 720 and at ETTh2 horizons 96, 192, and 336.  Normal
confidence intervals are shown with every point estimate.  Full results,
including the Adam gates omitted from the compact main table, are reported in
\cref{tab:etth-precision-gated-full}.

\begin{figure*}[t]
    \centering
    \includegraphics[width=0.89\textwidth]{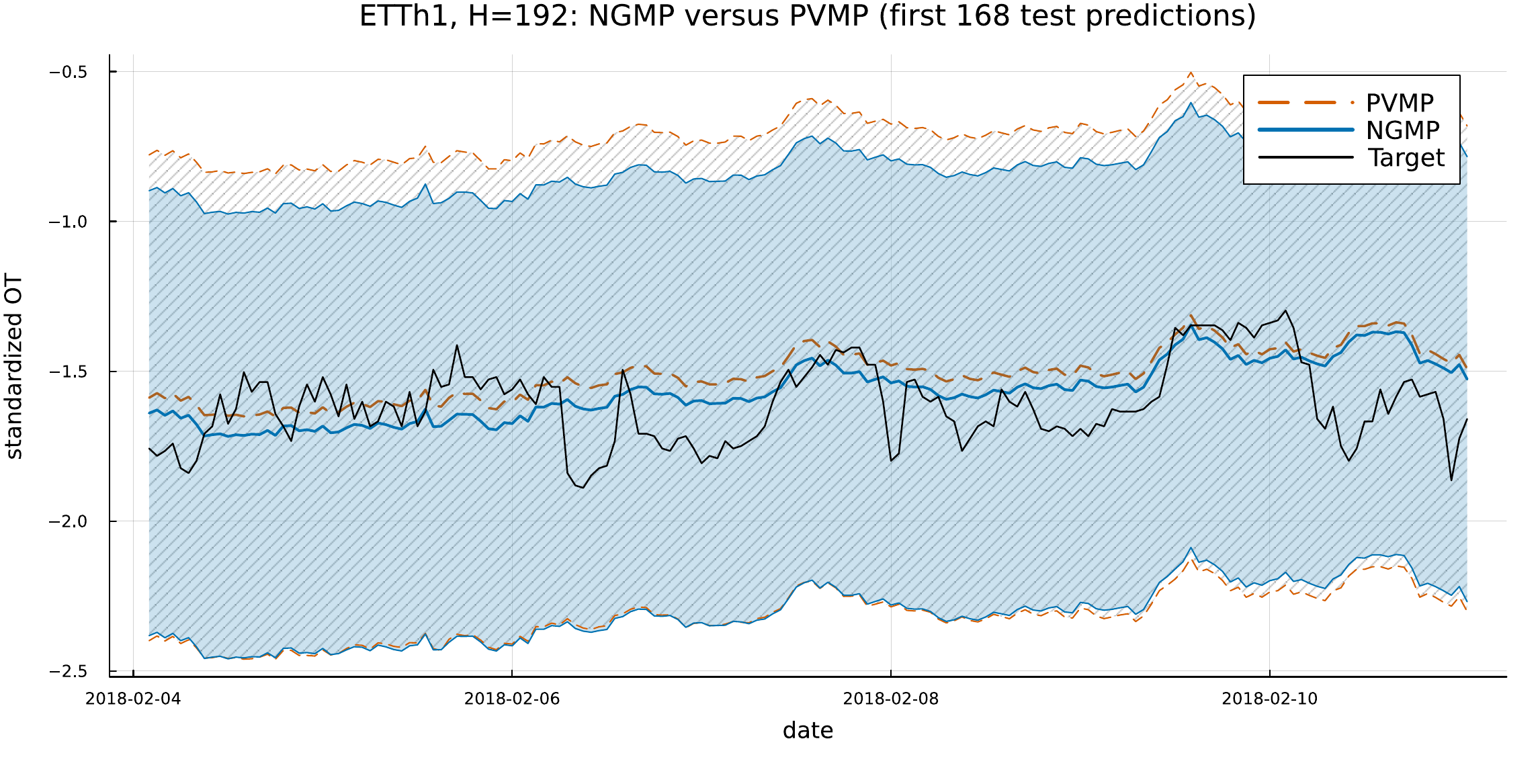}
    \input{figures/etth1_h192_ngmp_vmp_caption}
    \label{fig:etth1-h192-ngmp-vmp}
\end{figure*}

\section{Discussion and Conclusion}
This paper establishes an edge-local counterpart of the natural-gradient
stationarity condition for fixed-form variational inference. Each finite
message is the Fisher projection of an exact BP log-message at its receiving
marginal, which keeps BP, structured VMP, EP, and NGMP in one local
variational language while preserving their different information flows. The
experiments locate the practical consequence: when uncertainty persists on
the inputs to a non-conjugate factor and approximate updates are repeatedly
composed, retaining its representable component improves calibration and
prediction. The ETTh study exhibits the same effect in a larger model, where
competitive point forecasts alone conceal a collapse of predictive scale.

\paragraph{Compiling factor graphs to surrogate programs.}
The surrogate interpretation also suggests a systems-level research
direction. \Cref{fig:poisson-normal-surrogate} shows two representations of
the same projected computation, not two approximations whose posteriors happen
to agree: projected messages on the original graph and conjugate leaves on the
surrogate graph implement the same update. \Cref{fig:multi-interface-surrogate}
extends this identity to one surrogate leaf per constrained interface, and
\Cref{eq:surrogate-fixed-point-map,eq:surrogate-undamped-fixed-point-iteration}
collect the resulting computation in the fixed-point map. \texttt{RxInfer.jl}
combines the expressive \texttt{GraphPPL.jl} model representation with \texttt{ReactiveMP.jl}, where
messages and marginals are reactive streams whose subscribers trigger local
updates \citep{nuijten_graphppljl_2024,bagaev_reactive_2023}.
This execution model is valuable for streaming and selective recomputation,
but it also leaves subscription dispatch and graph traversal in the runtime.
A complementary \texttt{GraphPPL.jl} backend could instead traverse the factor graph once,
generate the required per-interface projection kernels and conjugate surrogate
leaves, fix the inner BP schedule, and emit a typed numerical implementation of
the message passing. This lowering would not change the asymptotic arithmetic complexity or
the NGMP fixed point; it would move graph orchestration out of the hot loop and
expose independent projections and batched linear algebra to compilation by,
for example,
\href{https://github.com/EnzymeAD/Reactant.jl}{\texttt{Reactant.jl}}, whose
MLIR tracing system is being extended with control-flow-aware Julia static
analysis \citep{lounes_controlflow_2026}, or JAX
\citep{bradbury_jax_2018}. Specialized libraries such as Dynamax already expose
pure JAX inference kernels for state-space model classes
\citep{linderman_dynamax_2025}; the opportunity here is to retain GraphPPL's
general modeling interface while obtaining a similarly static execution path.
We performed this lowering by hand for the models in this paper. Automating it
would remove substantial implementation work and turn the surrogate view into
a practical compiler target.

\paragraph{Continuous-state active-inference message passing.}
A second prospect concerns planning. The AIF-MP construction of
\citet{nuijten_what_2026}, building on the EFE-as-variational-inference
formulation of \citet{nuijten_expected_2026}, uses channel-reparameterized
observation and dynamics factors but restricts its implementation to discrete
state spaces with exact factor evaluations. For continuous states, the exact
messages from these modified factors will generally not remain in a tractable
finite family. NGMP suggests a principled relaxation: retain the
AIF-MP factors and their entropy corrections, impose exponential-family form
constraints on the continuous state edges, and project each factor-to-edge
log-message according to \eqref{eq:ng-message-eta}. The required local
expectations could be analytic or approximated with the sigma-point and
quadrature strategies of \Cref{app:tangent-projection-numerics}. Unlike a
mean-field factor update, this projection preserves the portion of cavity
uncertainty visible to the chosen state family, which is precisely the
information needed when planning depends on uncertainty about future states.
Deriving the full channel-augmented constrained-Bethe scheme and testing its
convergence and control performance remain future work; the present result
identifies NGMP as a concrete route from discrete exact AIF-MP to approximate
continuous-state inference.

Together, these prospects suggest a separation between modeling and execution:
using factor graphs as the expressive modeling interface and surrogate programs
as their compiled numerical representation. NGMP supplies the mathematical
translation between the two while retaining as much of the uncertainty as the variational family representation allows, which is needed for
downstream problems such as continuous-state planning. Automating this
translation and establishing when its continuous-state projections are
sufficiently accurate for control are the fruitful next steps.

\section*{Acknowledgements}

We gratefully acknowledge financial support by the Dutch Ministry of Economic Affairs (PPS funding), by the Dutch Research Council (NWO) and by hearing aid manufacturer GN Hearing, under contracts \\ TKI-HTSM/21.0161/2112P09 (project: Auto-AR) and KICH3.LTP.20.006 (Project: ROBUST). This work was additionally funded by the Eindhoven Artificial Intelligence Systems Institute (EAISI), under their Exploratory Multidisciplinary AI Research Program (EMDAIR), as part of the Embodied AI for Continuous Human-Like Learning Project (EMBODeAI), and we gratefully acknowledge their support.

We are grateful for insightful discussions with Bert de Vries and other BIASlab members. 

\bibliography{references}
\bibliographystyle{tmlr}

\appendix
\section{Proof of Theorem~\ref{thm:ngmp-edge}}
\label{app:ngmp-proof}
\begin{proof}[Proof of \cref{thm:ngmp-edge}]
We suppress additive constants, since all message multipliers are defined only up to normalization gauge.

First consider the factor-side variation. The edge form constraint changes the edge variable but does not constrain the adjacent factor beliefs. Therefore the same variation with respect to $q_a(\vz_a)$ as in the Bethe Lagrangian gives
\begin{equation}
    q_a^{*}(\vz_a)
    \;\propto\;
    f_a(\vz_a)
    \prod_{j\in\mathcal E(a)}
    \exp\!\big(\lambda_{aj}(z_j)\big).
    \label{eq:ng-qa-stationarity}
\end{equation}

Now consider the edge-side variation. The only terms depending on $\vlambda_i$ are collected in \eqref{eq:ng-local-lagrangian}. The exponential-family identities are
\begin{equation}
    \nabla_{\lambda_i}A_i^{*}\!\big(\vmu_i(\vlambda_i)\big)
    =
    \fisher_i(\vlambda_i)\vlambda_i,
    \qquad
    \nabla_{\lambda_i}
    \mathbb E_{q_{\lambda_i}}[r]
    =
    \operatorname{Cov}_{q_{\lambda_i}}\!\big[T_i,r\big].
    \label{eq:ng-two-identities}
\end{equation}
Applying \eqref{eq:ng-two-identities} to \eqref{eq:ng-local-lagrangian} gives
\[
    0
    =
    -\fisher_i(\vlambda_i)\vlambda_i
    +
    \sum_{a\in\mathcal V(i)}
    \operatorname{Cov}_{q_{\lambda_i}}
    \big[T_i,\lambda_{ai}\big],
\]
which is exactly the edge stationarity condition \eqref{eq:ng-lambda-stationary}. Multiplying by $\fisher_i(\vlambda_i)^{-1}$ gives the projected form
\[
    \vlambda_i
    =
    \sum_{a\in\mathcal V(i)}
    \tangentproj{i}\!\big[\lambda_{ai}\big].
\]

It remains to close the multipliers through the marginalization constraints. Substitute \eqref{eq:ng-qa-stationarity} into the constraint
\[
    q_{\lambda_i}(z_i)
    =
    \int q_a(\vz_a)\,d\vz_{a\setminus i}.
\]
For each $a\in\mathcal V(i)$ this gives
\[
    q_{\lambda_i}(z_i)
    \propto
    \exp\!\big(\lambda_{ai}(z_i)\big)
    \int
    f_a(\vz_a)
    \prod_{j\in\mathcal E(a)\setminus i}
    \exp\!\big(\lambda_{aj}(z_j)\big)
    d\vz_{a\setminus i}.
\]
Using the definition \eqref{eq:ng-bp-logmsg}, the same identity in log-coordinates is
\begin{equation}
    \log q_{\lambda_i}(z_i)
    =
    \lambda_{ai}(z_i)
    +
    \ell_{a\to i}(z_i)
    +
    \mathrm{const}.
    \label{eq:ng-marg-pointwise}
\end{equation}

Apply the projection $\tangentproj{i}$ to both sides of \eqref{eq:ng-marg-pointwise}. Constants vanish under covariance with $T_i$, and $\tangentproj{i}[\log q_{\lambda_i}]=\vlambda_i$ because $\log q_{\lambda_i}$ is affine in $T_i$. Therefore
\begin{equation}
    \vlambda_i
    =
    \tangentproj{i}\!\big[\lambda_{ai}\big]
    +
    \tangentproj{i}\!\big[\ell_{a\to i}\big]
    =
    \tangentproj{i}\!\big[\lambda_{ai}\big]
    +
    \eta_{a\to i}.
    \label{eq:ng-edge-closure}
\end{equation}

Finally use that $\mathcal V(i)=\{b,c\}$. From \eqref{eq:ng-edge-closure},
\[
    \tangentproj{i}\!\big[\lambda_{bi}\big]
    =
    \vlambda_i-\eta_{b\to i},
    \qquad
    \tangentproj{i}\!\big[\lambda_{ci}\big]
    =
    \vlambda_i-\eta_{c\to i}.
\] 
Substituting these two identities into the projected form of \eqref{eq:ng-lambda-stationary} gives
\[
    \vlambda_i
    =
    \big(\vlambda_i-\eta_{b\to i}\big)
    +
    \big(\vlambda_i-\eta_{c\to i}\big),
\]
and hence
\[
    \vlambda_i
    =
    \eta_{b\to i}
    +
    \eta_{c\to i}.
\]
This is \eqref{eq:ng-main-theorem}.
\end{proof}

\section{Supplementary Derivation for Information-Geometric Message Passing}
\label{app:info-gem-message-passing}
This appendix gives a deliberately small example of inductive inference. The
derivations below are not meant as a new algorithmic contribution; the same
stationary equations are obtained from the conjugate-computation view of
\citet{khan_conjugatecomputation_2017}. The purpose here is narrower and more
pedagogical. We use a graph with one latent variable and two unary factors, so
there is no factorization approximation hiding in the notation. In this setting, the global conjugate-computation view and the local constrained-Bethe view
coincide completely, which makes it a useful place to clarify language.

The main point is to clarify which object is meant by a \emph{message}
once a marginal is constrained to a finite exponential family. The exact
Lagrange multipliers of a constrained free-energy problem are functional
objects, whereas the messages used by conjugate message-passing algorithms are
finite exponential-family surrogates. This appendix shows how the latter can be
computed from the former without claiming that the underlying non-conjugate
factor has become conjugate.

We follow the same variational logic as the constrained free-energy derivations
of \citet{caticha_relative_2004,caticha_entropic_2011} and the constrained
Bethe-free-energy construction of \citet{senoz_variational_2021}. A
model induces a positive unnormalized density; a free-energy functional ranks
candidate posterior beliefs; and constraints restrict which beliefs are
admissible. The calculus of variations then tells us what the stationary condition requires.
Here, the graph has only one latent variable, so all of the bookkeeping is
visible on the page.

\subsection{Exact Constrained Model}
\label{app:exact-multipliers-projected-messages}

Let the latent variable $z$ be connected to two unary factors,
\begin{equation}
    f(z) = f_{\mathrm c}(z)\,f_{\mathrm n}(z),
    \label{eq:exact-multipliers-product-model}
\end{equation}
where $f_{\mathrm c}$ is conjugate to the chosen family $\mathcal E$, and
$f_{\mathrm n}$ is not. We constrain the marginal belief on $z$ to
$q_\lambda\in\mathcal E$, with
\begin{equation}
    q_\lambda(z)
    =
    \exp\{\vlambda^{\top}T(z)-A(\vlambda)\},
    \qquad
    \vmu=\mathbb E_{q_\lambda}[T(z)].
    \label{eq:exact-multipliers-family}
\end{equation}
Here and below, densities are written with respect to the carrier reference
measure $\mathrm{d}\nu(z)=h(z)\,\mathrm{d}z$. With respect to the Lebesgue measure,
the density in \eqref{eq:exact-multipliers-family} is multiplied by $h(z)$.
The two factor beliefs $q_{\mathrm c}$ and $q_{\mathrm n}$ are left
unrestricted. For this product graph, the local Bethe objective is
\begin{equation}
    \mathcal F(q_{\mathrm c},q_{\mathrm n},q_\lambda)
    =
    \sum_{a\in\{\mathrm c,\mathrm n\}}
    \int q_a(z)\log\frac{q_a(z)}{f_a(z)}\,\mathrm{d}\nu(z)
    -
    \int q_\lambda(z)\log q_\lambda(z)\,\mathrm{d}\nu(z) .
    \label{eq:exact-multipliers-bethe-objective}
\end{equation}
The local-polytope constraints are
\begin{equation}
    q_{\mathrm c}(z)=q_\lambda(z),
    \qquad
    q_{\mathrm n}(z)=q_\lambda(z).
    \label{eq:exact-multipliers-product-constraints}
\end{equation}
Introducing functional multipliers $\rho_{\mathrm c}$ and $\rho_{\mathrm n}$
for these constraints gives the Lagrangian
\begin{align}
    \mathcal L
    &=
    \mathcal F(q_{\mathrm c},q_{\mathrm n},q_\lambda)
    +
    \sum_{a\in\{\mathrm c,\mathrm n\}}
    \int \rho_a(z)\{q_\lambda(z)-q_a(z)\}\,\mathrm{d}\nu(z)
    \nonumber\\
    &\quad
    +
    \sum_{a\in\{\mathrm c,\mathrm n\}}
    \tau_a\left(\int q_a(z)\,\mathrm{d}\nu(z)-1\right)
    +
    \tau\left(\int q_\lambda(z)\,\mathrm{d}\nu(z)-1\right).
    \label{eq:exact-multipliers-lagrangian}
\end{align}
The last normalization term is redundant when $q_\lambda$ is parameterized as
a normalized density, but keeping it visible makes the constrained variational
problem explicit. Taking the variational derivative with respect to $q_a$
gives
\begin{equation}
    \frac{\delta\mathcal L}{\delta q_a(z)}
    =
    \log\frac{q_a(z)}{f_a(z)}+1-\rho_a(z)+\tau_a.
    \label{eq:exact-multipliers-factor-derivative}
\end{equation}
Setting \eqref{eq:exact-multipliers-factor-derivative} to zero yields
\begin{equation}
    q_a(z)
    \;\propto\;
    f_a(z)\exp\{\rho_a(z)\},
    \qquad a\in\{\mathrm c,\mathrm n\}.
    \label{eq:exact-multipliers-factor-stationarity}
\end{equation}
Therefore, the exact marginalization constraints can be satisfied by
\begin{equation}
    \rho_a(z)
    =
    \log q_\lambda(z)-\log f_a(z)+\mathrm{const}.
    \label{eq:exact-multipliers-ratio}
\end{equation}
This equation is a useful sanity check. Exact marginalization is not the
obstruction: the functional multipliers can always absorb whatever is needed to
make $q_a=q_\lambda$. The obstruction is finite representation. If
$f_a$ is non-conjugate, then \eqref{eq:exact-multipliers-ratio} contains the
negative non-conjugate residual of $\log f_a$. Thus, the exact Lagrange
multiplier is generally not an exponential-family message.

The stationarity of the constrained marginal itself adds only a projected
condition. Variation with respect to the finite parameter $\vlambda$, holding
the multipliers fixed, gives
\begin{equation}
    0
    =
    -\fisher(\vlambda)\vlambda
    +
    \operatorname{Cov}_{q_\lambda}\!\big[T,\rho_{\mathrm c}\big]
    +
    \operatorname{Cov}_{q_\lambda}\!\big[T,\rho_{\mathrm n}\big].
    \label{eq:exact-multipliers-lambda-derivative}
\end{equation}
Premultiplying by $\fisher(\vlambda)^{-1}$ gives
\begin{equation}
    \vlambda
    =
    \Pi^{\lambda}[\rho_{\mathrm c}]
    +
    \Pi^{\lambda}[\rho_{\mathrm n}],
    \label{eq:exact-multipliers-projected-rho}
\end{equation}
where $\Pi^{\lambda}[r]=\nabla_{\mu}\mathbb E_{q_\lambda}[r(z)]$. Thus
, the stationary condition identifies only the components of the multipliers visible through
the sufficient statistics $T$. The orthogonal functional components remain
free to enforce the exact constraints. Substituting
\eqref{eq:exact-multipliers-ratio} into
\eqref{eq:exact-multipliers-projected-rho} and using
$\Pi^{\lambda}[\log q_\lambda]=\vlambda$ yields
\begin{equation}
    \vlambda
    =
    \Pi^{\lambda}[\log f_{\mathrm c}]
    +
    \Pi^{\lambda}[\log f_{\mathrm n}].
    \label{eq:exact-multipliers-exact-projected-stationarity}
\end{equation}
This is a stationary condition on the constrained marginal; it is not the
density equality $q_\lambda(z)\propto f_{\mathrm c}(z)f_{\mathrm n}(z)$.

This also fixes the terminology. The model factor $f_a$ is not a Lagrange
multiplier, and the exact multiplier $\rho_a$ is not the finite message passed
by the algorithm. The ordinary exact BP log-message from a unary factor is
\begin{equation}
    \ell_a(z)=\log f_a(z)+\mathrm{const}.
    \label{eq:exact-multipliers-unary-bp-message}
\end{equation}
Consequently, \eqref{eq:exact-multipliers-exact-projected-stationarity} is the
unary-factor specialization of \eqref{eq:ng-message-eta} and
\eqref{eq:ng-main-theorem}: each mean-coordinate gradient is the natural
parameter of its outgoing message.
For a non-conjugate factor, this exact log-message is itself non-conjugate. The
exact constrained Lagrange problem remains well-defined because the multiplier
in \eqref{eq:exact-multipliers-ratio} can cancel the non-conjugate part inside
the unrestricted factor belief. A finite message-passing algorithm cannot
communicate that arbitrary residual unless we replace the factor with a surrogate.

\subsection{Projected Surrogate Update}
\label{app:projected-surrogate-model}

The point of the surrogate construction is narrower than that of exact inference in a
new model. For an arbitrary reference value $\vlambda_0$, the surrogate below
does not claim to have the same marginal as the original constrained problem.
Rather, it provides a finite-dimensional update map whose fixed points coincide
with the stationary points of the original constrained problem. The objective to
prove is therefore the fixed-point equivalence
\begin{equation}
    \vlambda=\Phi(\vlambda)
    \quad\Longleftrightarrow\quad
    \nabla_{\mu}\mathcal F_{\mathrm{exact}}(\vlambda)=0,
    \label{eq:surrogate-fixed-point-claim}
\end{equation}
not equality between the one-step surrogate marginal and the original
constrained optimum for every choice of $\vlambda_0$.

The finite-dimensional update is obtained by replacing the original product
graph, locally and temporarily, with a conjugate surrogate product graph.
Continue with the carrier-measure convention used above: write all densities
with respect to $\mathrm{d}\nu(z)=h(z)\,\mathrm{d}z$. Thus, the
exponential-family marginal has a log density
$\vlambda^{\top}T(z)-A(\vlambda)$ in this reference measure. Restoring
Lebesgue densities simply multiplies the final marginal by $h(z)$.

At a reference marginal $q_{\lambda_0}$, define the projected natural
contribution of factor $a$ by
\begin{equation}
    \veta_a(\vlambda_0)
    =
    \Pi^{\lambda_0}[\ell_a]
    =
    \nabla_{\mu}\,
    \mathbb E_{q_{\lambda}}[\ell_a(z)]
    \big|_{\lambda=\lambda_0}.
    \label{eq:exact-multipliers-projected-eta}
\end{equation}
The surrogate graph has the same topology as the original two-factor graph, but
its unary factors are conjugate sites
\begin{equation}
    \widehat m_a(z;\vlambda_0)
    \propto
    \exp\{\veta_a(\vlambda_0)^{\top}T(z)\},
    \qquad a\in\{\mathrm c,\mathrm n\}.
    \label{eq:exact-multipliers-projected-message}
\end{equation}
When $f_a$ is conjugate, this projection returns the usual natural-parameter
contribution of the factor. When $f_a$ is non-conjugate, it returns only the
component of $\ell_a$ visible to the sufficient statistics $T$ under the
reference marginal $q_{\lambda_0}$.

Now solve the exact constrained Bethe problem for this new surrogate graph.
Introduce unrestricted surrogate factor beliefs $\widehat q_{\mathrm c}$ and
$\widehat q_{\mathrm n}$, keep the edge belief constrained to
$q_\lambda\in\mathcal E$, and define
\begin{equation}
    \widehat{\mathcal F}_{\lambda_0}
    (\widehat q_{\mathrm c},\widehat q_{\mathrm n},q_\lambda)
    =
    \sum_{a\in\{\mathrm c,\mathrm n\}}
    \int \widehat q_a(z)
    \log\frac{\widehat q_a(z)}{\widehat m_a(z;\vlambda_0)}
    \,\mathrm{d}\nu(z)
    -
    \int q_\lambda(z)\log q_\lambda(z)\,\mathrm{d}\nu(z).
    \label{eq:surrogate-bethe-objective}
\end{equation}
The local constraints are
\begin{equation}
    \widehat q_{\mathrm c}(z)=q_\lambda(z),
    \qquad
    \widehat q_{\mathrm n}(z)=q_\lambda(z).
    \label{eq:surrogate-constraints}
\end{equation}
With functional multipliers $\widehat\rho_{\mathrm c}$ and
$\widehat\rho_{\mathrm n}$, the surrogate Lagrangian is
\begin{align}
    \widehat{\mathcal L}_{\lambda_0}
    &=
    \widehat{\mathcal F}_{\lambda_0}
    +
    \sum_{a\in\{\mathrm c,\mathrm n\}}
    \int \widehat\rho_a(z)
    \{q_\lambda(z)-\widehat q_a(z)\}\,\mathrm{d}\nu(z)
    \nonumber\\
    &\quad
    +
    \sum_{a\in\{\mathrm c,\mathrm n\}}
    \widehat\tau_a
    \left(\int \widehat q_a(z)\,\mathrm{d}\nu(z)-1\right)
    +
    \widehat\tau
    \left(\int q_\lambda(z)\,\mathrm{d}\nu(z)-1\right).
    \label{eq:surrogate-lagrangian}
\end{align}
Taking the variational derivative with respect to $\widehat q_a$ gives
\begin{equation}
    \frac{\delta\widehat{\mathcal L}_{\lambda_0}}
    {\delta \widehat q_a(z)}
    =
    \log\frac{\widehat q_a(z)}{\widehat m_a(z;\vlambda_0)}
    +1-\widehat\rho_a(z)+\widehat\tau_a.
    \label{eq:surrogate-factor-derivative}
\end{equation}
Setting \eqref{eq:surrogate-factor-derivative} to zero yields
\begin{equation}
    \widehat q_a(z)
    \;\propto\;
    \widehat m_a(z;\vlambda_0)\exp\{\widehat\rho_a(z)\}.
    \label{eq:surrogate-factor-stationarity}
\end{equation}
Enforcing the marginal constraints therefore identifies the exact surrogate
multipliers as
\begin{equation}
    \widehat\rho_a(z)
    =
    \log q_\lambda(z)-\log\widehat m_a(z;\vlambda_0)+\mathrm{const}.
    \label{eq:surrogate-multiplier-ratio}
\end{equation}
Unlike \eqref{eq:exact-multipliers-ratio}, these multipliers are finite
exponential-family objects. Indeed, using
\eqref{eq:exact-multipliers-projected-message},
\begin{equation}
    \widehat\rho_a(z)
    =
    \big(\vlambda-\veta_a(\vlambda_0)\big)^{\top}T(z)+\mathrm{const}.
    \label{eq:surrogate-multiplier-affine}
\end{equation}

It remains to take stationarity with respect to the constrained marginal
parameter $\vlambda$. Using
$\nabla_{\lambda}A^{*}(\vmu(\vlambda))=\fisher(\vlambda)\vlambda$ and
$\nabla_{\lambda}\mathbb E_{q_\lambda}[r]
=\operatorname{Cov}_{q_\lambda}[T,r]$ for fixed $r$, the derivative of
\eqref{eq:surrogate-lagrangian} with respect to $\vlambda$ is
\begin{equation}
    0
    =
    -\fisher(\vlambda)\vlambda
    +
    \sum_{a\in\{\mathrm c,\mathrm n\}}
    \operatorname{Cov}_{q_\lambda}\!\big[T,\widehat\rho_a\big].
    \label{eq:surrogate-lambda-derivative}
\end{equation}
Premultiplying by $\fisher(\vlambda)^{-1}$ gives
\begin{equation}
    \vlambda
    =
    \Pi^{\lambda}[\widehat\rho_{\mathrm c}]
    +
    \Pi^{\lambda}[\widehat\rho_{\mathrm n}].
    \label{eq:surrogate-projected-multiplier-stationarity}
\end{equation}
Substituting \eqref{eq:surrogate-multiplier-ratio}, and using
$\Pi^{\lambda}[\log q_\lambda]=\vlambda$ together with
$\Pi^{\lambda}[\log\widehat m_a(\cdot;\vlambda_0)]
=\veta_a(\vlambda_0)$, yields
\begin{equation}
    \vlambda^{\star}(\vlambda_0)
    =
    \veta_{\mathrm c}(\vlambda_0)
    +
    \veta_{\mathrm n}(\vlambda_0).
    \label{eq:surrogate-inner-stationarity}
\end{equation}
Thus the exact solution of the surrogate constrained problem has natural
parameter $\vlambda^{\star}(\vlambda_0)$ and density
\begin{equation}
    q_{\lambda^{\star}(\lambda_0)}(z)
    =
    \exp\!\left\{
        \big(\veta_{\mathrm c}(\vlambda_0)
        +\veta_{\mathrm n}(\vlambda_0)\big)^{\top}T(z)
        -
        A\big(\veta_{\mathrm c}(\vlambda_0)
        +\veta_{\mathrm n}(\vlambda_0)\big)
    \right\}
    \quad\text{w.r.t. }\nu.
    \label{eq:surrogate-exact-marginal}
\end{equation}
Equivalently, with respect to Lebesgue measure the same density is multiplied
by $h(z)$.

Equation \eqref{eq:surrogate-inner-stationarity} is the inner exact
message-passing solution on the surrogate graph. There is no additional
Lagrange multiplier enforcing $\vlambda_0=\vlambda$ in
\eqref{eq:surrogate-lagrangian}. The parameter $\vlambda_0$ defines the
surrogate factors and is held fixed while the inner constrained problem is
solved. In other words, the inner problem defines an update map
\begin{equation}
    \Phi(\vlambda_0)
    \coloneqq
    \vlambda^{\star}(\vlambda_0)
    =
    \veta_{\mathrm c}(\vlambda_0)
    +
    \veta_{\mathrm n}(\vlambda_0).
    \label{eq:surrogate-update-map}
\end{equation}
The projected-message fixed point is the self-consistency condition that the
reference marginal used to build the surrogate is also the marginal returned by
that surrogate, that is, $\Phi(\vlambda)=\vlambda$:
\begin{equation}
    \vlambda
    =
    \veta_{\mathrm c}(\vlambda)
    +
    \veta_{\mathrm n}(\vlambda)
    =
    \Pi^{\lambda}[\log f_{\mathrm c}]
    +
    \Pi^{\lambda}[\log f_{\mathrm n}].
    \label{eq:exact-multipliers-product-projected-stationarity}
\end{equation}
One could introduce a separate copy of the reference parameter and a constraint
forcing that copy to equal $\vlambda$, but this would either be redundant or
 change the problem. It is redundant if the surrogate sites are already
frozen before the inner optimization. It changes the stationarity equations if
one differentiates through the dependence
$\veta_a(\vlambda_0)=\Pi^{\lambda_0}[\ell_a]$ because then extra
$\partial\veta_a/\partial\vlambda_0$ terms enter. The message-passing algorithm
uses the first interpretation: build a local surrogate at the current marginal,
solve the surrogate problem exactly, and seek a fixed point of the resulting
map.

This fixed-point condition is exactly the first-order condition of the original
constrained problem, not a new exactness claim for every intermediate
surrogate. Indeed, after enforcing the exact constraints
$q_{\mathrm c}=q_{\mathrm n}=q_\lambda$, the original constrained objective
reduces, up to constants, to
\begin{equation}
    \mathcal F_{\mathrm{exact}}(\vlambda)
    =
    A^{*}(\vmu(\vlambda))
    -
    \mathbb E_{q_\lambda}
    [\log f_{\mathrm c}(z)+\log f_{\mathrm n}(z)] .
    \label{eq:surrogate-exact-reduced-objective}
\end{equation}
Its derivative in mean coordinates is
\begin{equation}
    \nabla_{\mu}\mathcal F_{\mathrm{exact}}
    =
    \vlambda
    -
    \Pi^{\lambda}[\log f_{\mathrm c}]
    -
    \Pi^{\lambda}[\log f_{\mathrm n}]
    =
    \vlambda-\Phi(\vlambda).
    \label{eq:surrogate-exact-gradient-map}
\end{equation}
Thus, the outer iteration
$\vlambda^{t+1}=\Phi(\vlambda^t)$ is a fixed-point method, equivalently a
unit natural-gradient step, for solving the exact constrained stationarity
condition. A converged fixed point is therefore a stationary marginal of the
original constrained problem. This does not imply that the intermediate
surrogate marginals are exact marginals of the original model, nor that the
fixed point is the global minimizer when the reduced objective is non-convex.

At such a fixed point, the surrogate multipliers are explicit
\begin{equation}
    \widehat\rho_{\mathrm c}(z)
    =
    \veta_{\mathrm n}(\vlambda)^{\top}T(z)+\mathrm{const},
    \qquad
    \widehat\rho_{\mathrm n}(z)
    =
    \veta_{\mathrm c}(\vlambda)^{\top}T(z)+\mathrm{const}.
    \label{eq:surrogate-fixed-point-multipliers}
\end{equation}
This is the payoff of the surrogate construction. At a fixed point, the
surrogate graph has the same constrained marginal as the stationary point of
the original constrained problem, but its half-edge multipliers are now
finite-dimensional exponential-family objects. Equivalently, the exponentiated
multipliers are computable messages in the chosen sufficient statistics rather
than arbitrary functional objects. This matters in message passing because a
local solution is not enough: a factor must also propagate a message to the
rest of the graph. The exact constrained problem can satisfy marginalization by
using the functional ratios in \eqref{eq:exact-multipliers-ratio}, but those
ratios are not generally messages that a finite-dimensional algorithm can pass.
The surrogate replaces that functional object with a conjugate site whose exact
Lagrangian multipliers lie in the same finite family as the constrained
marginal. In Gaussian-chain examples with non-conjugate observation factors,
this is the mechanism that turns a non-Gaussian likelihood contribution into a
Normal site message while preserving the fixed-point marginal condition.

Thus \eqref{eq:surrogate-fixed-point-multipliers} is the exact
marginalization condition for the surrogate graph, not a claim that the
original non-conjugate factor has become conjugate. The exact Lagrange
multipliers of the original constrained problem remain the functional ratios in
\eqref{eq:exact-multipliers-ratio}. The finite message in
\eqref{eq:exact-multipliers-projected-message} is a different object: it is the
conjugate surrogate factor whose constrained Bethe Lagrangian has
finite-dimensional multipliers and an exact exponential-family marginal.

This is also the convention used by conjugate-computation methods. In that
literature, the word ``message'' refers to the conjugate surrogate obtained
from a mean-parameter or natural-gradient projection
\citep{khan_conjugatecomputation_2017,lin2018variational}. The point of the
local information-geometric derivation is to obtain the same surrogate message
from the edge-local stationary condition while keeping separate the exact functional
multipliers that solve the constrained Lagrangian in the background.

\subsection{Computing the Tangent Projection in Practice}
\label{app:tangent-projection-numerics}

The projection
$\Pi^{\lambda}[\ell]=\nabla_{\mu}\mathbb E_{q_\lambda}[\ell(z)]$ of
\eqref{eq:exact-multipliers-projected-eta} reduces, via
$\nabla_{\lambda}\mathbb E_{q_\lambda}[\ell]
=\operatorname{Cov}_{q_\lambda}[T,\ell]$ and preconditioning by
$\fisher(\vlambda)^{-1}$, to covariances of the sufficient statistics with
the log-message under the current marginal. Whether the update is
closed-form therefore depends only on whether these expectations are
available analytically for the factor and family at hand. The ablations of
\Cref{sec:edge-uncertainty} cover the three cases that occur in practice.

\paragraph{Closed form: Poisson state-space model.}
Toward a Gaussian state edge, the observation log-factor is
$\ell(z)=yz-e^{z}$ up to constants, and Gaussian expectations of $z$,
$z^{2}$, and $e^{z}$—hence all required covariances—are analytic.
NGMP's projection is a single closed-form natural-gradient step per edge,
and the PVMP baseline iterates the same closed-form gradients
inside its inner manifold optimizer, so both inference methods are
closed-form in this model. The budget-matched NCVMP control of
\Cref{app:edge-uncertainty-fe} uses the same closed-form gradient but takes
a single inner step.

\paragraph{Unscented approximation: Normal mean--precision model.}
The exact cavity log-message toward the precision edge is not a linear
combination of the Gamma sufficient statistics $(\log\tau,\tau)$, and its
Gamma expectations have no closed form. The implementation evaluates the
tangent projection with a deterministic unscented (sigma-point)
approximation of the required expectations under the current marginal.
VMP, by contrast, needs no projection in this model: the mean-field
coordinate updates are conjugate and closed-form.

\paragraph{Quadrature: sequential heteroskedastic model.}
The cavity messages through the likelihood factor of
\eqref{eq:edge-uncertainty-heteroscedastic} have no closed-form expectations under the
joint $(\mu,s)$ cluster marginal; the implementation evaluates the tangent
projection by one-dimensional numerical quadrature with $32$ nodes over
the log-precision marginal, while the PVMP baseline again
iterates closed-form gradients of its mean-field sites, and the NCVMP
control again takes a single such step.

In every case, the projection consumes only expectations under the current
marginal, so the accuracy of the numerical variants is governed by the
chosen sigma-point or quadrature rule rather than by sampling noise.

\section{Damping and Momentum}
\label{app:momentum-damping}
The surrogate construction of \Cref{sec:surrogate-models} defines the outer fixed-point map
$\Phi(\vlambda)$ in \eqref{eq:surrogate-fixed-point-map}: build projected
surrogate leaves at the current constrained marginals, freeze the surrogate
graph, and run the conjugate BP sweep.

In mean coordinates, the exact reduced constrained-Bethe objective has the same
stationarity form as the global exponential-family free energy:
\begin{equation}
    \nabla_{\mu} F_{\mathrm{exact}}(\vlambda)
    =
    \vlambda-\Phi(\vlambda).
    \label{eq:momentum-fixed-point-gradient}
\end{equation}
Thus, the undamped update $\vlambda^{(t+1)}=\Phi(\vlambda^{(t)})$ is a unit
natural-gradient or fixed-point step. Damping changes only the step length in
natural coordinates,
\begin{equation}
    \vlambda^{(t+1)}
    =
    (1-\alpha)\vlambda^{(t)}
    +\alpha\,\Phi(\vlambda^{(t)}),
    \qquad
    0<\alpha\leq 1.
    \label{eq:momentum-natural-damping}
\end{equation}
Equivalently, for an individual surrogate message with natural parameter
$\veta^{\star}$ computed by projection, the damped message parameter is
\begin{equation}
    \veta^{(t)}
    =
    (1-\alpha)\veta^{(t-1)}
    +\alpha\,\veta^{\star}.
    \label{eq:interp-natural-damping}
\end{equation}
For the Gaussian Poisson leaf, this means averaging the canonical coordinates
$(\xi,\tau)$ before converting back to pseudo-observation parameters
$(\widetilde y,\Lambda)$. In message-function form
\begin{equation}
    \widehat{\mu}^{(t)}
    \;\propto\;
    \left(\widehat{\mu}^{(t-1)}\right)^{1-\alpha}
    \left(\widehat{\mu}^{\star}\right)^{\alpha}.
    \label{eq:interp-message-geometric-damping}
\end{equation}

Whether damping is needed depends on the data. On the sunspot masks of
\Cref{sec:edge-uncertainty}, the undamped iteration ($\alpha=1$), natural
damping with $\alpha=0.25$, and the damped heavy-ball setting used for the
sunspot results ($\alpha=0.5$, $\beta=0.2$; \eqref{eq:momentum-heavy-ball}
below) all converge to the same fixed point by iterations 5, 14, and 8
respectively: the counts are moderate, and the initialization
$\log(y_k+1)$ is close to the fixed point, so damping only costs iterations.
The fixed-point map stops contracting when the latent log-rate makes long
excursions to extreme values; for example, a stretch of zero counts at a very
negative log-rate, where the natural-gradient message reacts strongly to the
marginal variance. \Cref{fig:poisson-bethe-damping-progress} uses data simulated
from the Poisson state-space model \eqref{eq:interp-poisson-chain} itself
($z_0=0$, $\sigma^2=0.1$ as in the sunspot experiments, every count
observed): the latent random walk makes larger excursions and produces
longer stretches of zero counts as the chain length $N$ grows. At $N=100$
, every setting converges. From
$N=250$ on, the undamped map settles into a period-two oscillation on a
growing share of seeds (3, 9, and 9 of 20 for $N=250$, $500$, $1000$) and at
$N=1000$ diverges on three of them; damping with $\alpha=0.25$ converges on
all 80 fits at the price of more iterations; the heavy-ball setting is the
fastest whenever it converges, but its momentum step can produce a message
with negative precision, which happens on 3 of 20 seeds at $N=500$ and 9 of
20 at $N=1000$ (see \Cref{app:vector-transport-momentum}). These are the
conditions under which damping, and in harder cases momentum, are necessary,
and why they are part of the practical outer solver rather than a cosmetic
post-processing step.

\begin{figure}[tbh]
    \centering
    \includegraphics[width=0.42\linewidth]{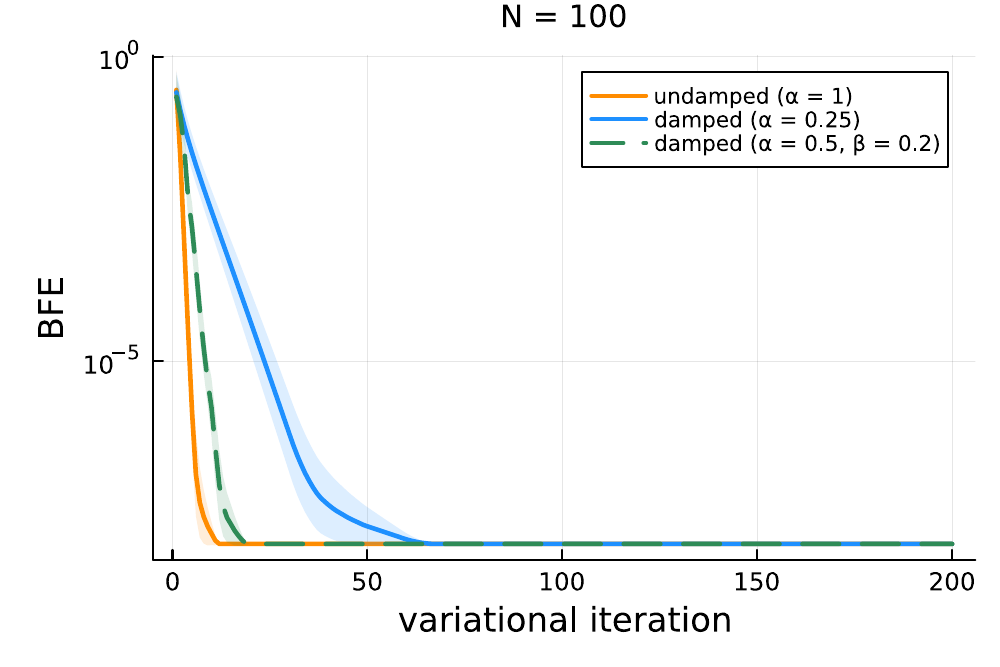}
    \hspace{2pt}
    \includegraphics[width=0.42\linewidth]{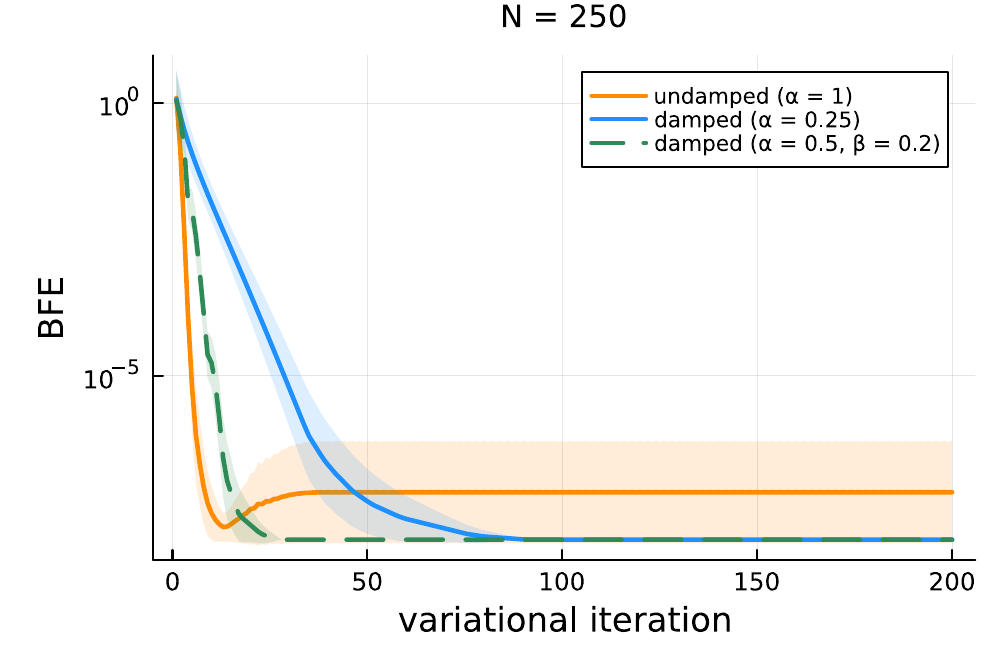}\\[2pt]
    \includegraphics[width=0.42\linewidth]{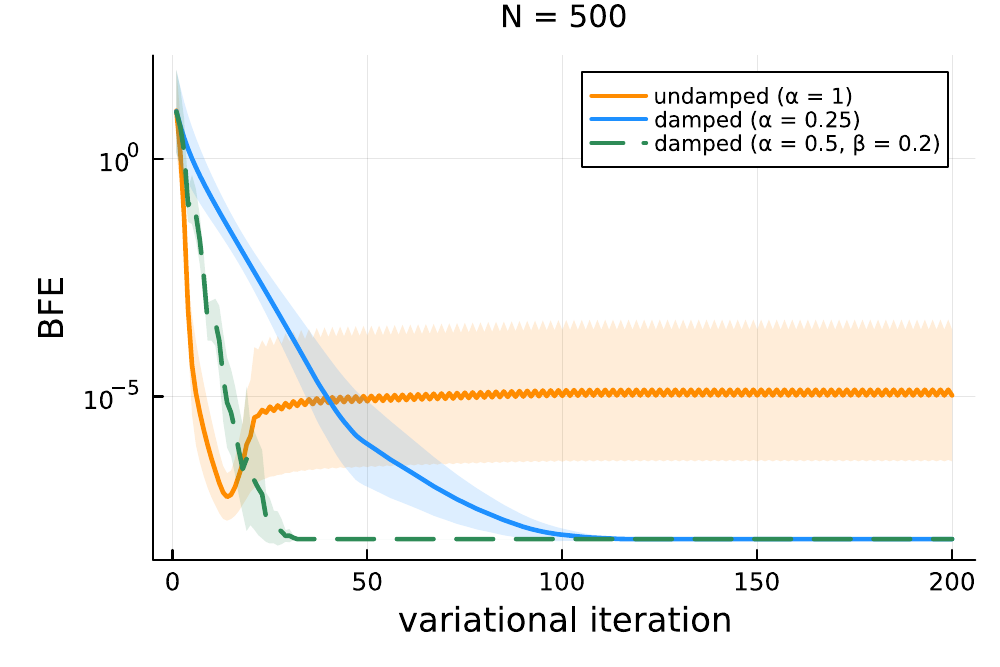}
    \hspace{2pt}
    \includegraphics[width=0.42\linewidth]{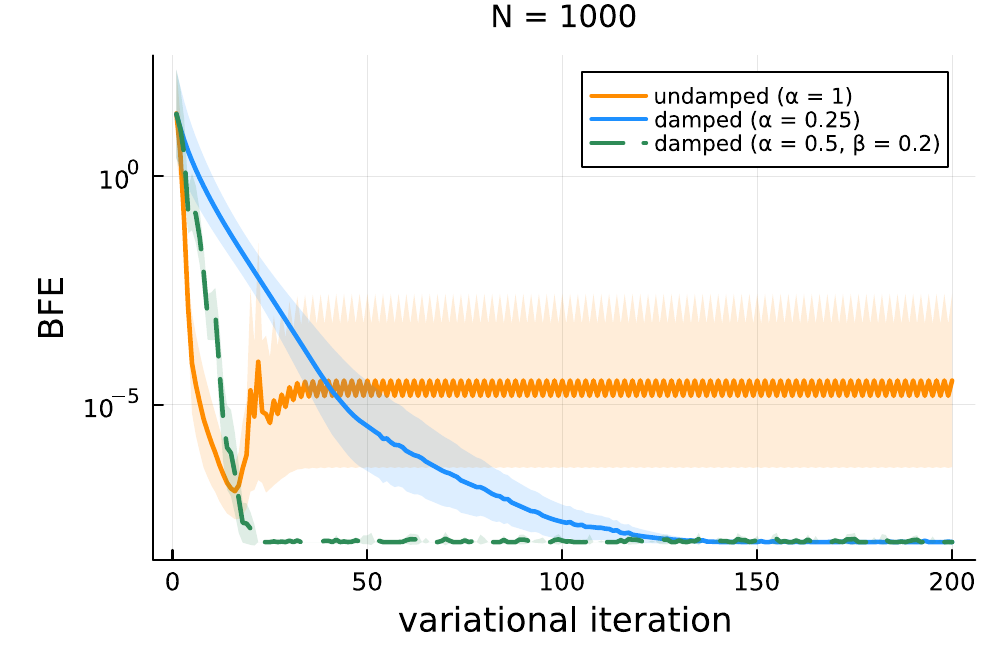}
    \caption{Convergence of the NGMP fixed-point iteration on the Poisson
    state-space model \eqref{eq:interp-poisson-chain} for series simulated
    from the same model ($\sigma^2=0.1$, all counts observed) of length
    $N=100$, $250$ (top row), $500$, and $1000$ (bottom row), 20 seeds each,
    200 variational iterations. BFE on the vertical axis is the Bethe free
    energy of the model per observation minus the value reached at
    convergence by the $\alpha=0.25$ setting on the same seed, so every seed
    converges to zero (log scale); curves are the geometric mean across seeds
    with a 95\% confidence band. Orange: undamped
    fixed-point iteration ($\alpha=1$). Blue: messages damped in natural
    coordinates with $\alpha=0.25$. Green dashed: the damped heavy-ball
    setting used for the sunspot experiments ($\alpha=0.5$, $\beta=0.2$); its
    momentum step left the natural domain and the fit stopped on 3 of 20
    seeds at $N=500$ and 9 of 20 at $N=1000$ (the undamped iteration failed
    on one seed at $N=1000$), and the curves average over the seeds that
    reached each iteration. Longer chains make larger excursions of the latent log-rate; the
    undamped map then stops converging while the damped map still does.}
    \label{fig:poisson-bethe-damping-progress}
\end{figure}

Momentum can be applied to the same natural-parameter residual
$\Phi(\vlambda)-\vlambda=-\nabla_{\mu}F_{\mathrm{exact}}(\vlambda)$. A simple
heavy-ball form is
\begin{equation}
    \begin{aligned}
        \bm{v}^{(t+1)}
        &=
        \beta\bm{v}^{(t)}
        +\alpha\big(\Phi(\vlambda^{(t)})-\vlambda^{(t)}\big),\\
        \vlambda^{(t+1)}
        &=
        \vlambda^{(t)}+\bm{v}^{(t+1)},
    \end{aligned}
    \qquad
    0\leq\beta<1.
    \label{eq:momentum-heavy-ball}
\end{equation}
Other accelerated fixed-point variants differ in how they choose the point at
which $\Phi$ is evaluated, but the coordinate principle is the same: damping and
momentum operate on natural/message parameters, not on derived
pseudo-observation parameters. When the iteration converges, the residual
$\Phi(\vlambda)-\vlambda$ is zero, so damping and momentum change the
convergence path but not the stationary equations of
\Cref{thm:ngmp-edge}.

\subsection{Vector-transport momentum}
\label{app:vector-transport-momentum}

The heavy-ball rule \eqref{eq:momentum-heavy-ball} adds the previous momentum
vector to a residual computed at a different iterate, as if both lived in the
same vector space. An exponential family is a smooth manifold on which the
Fisher information defines a Riemannian metric. Therefore, the Euclidean rule
generalizes directly: the momentum vector $\bm{v}^{(t)}$ is a tangent vector at
the iterate where it was formed, and before it is combined with the new
residual, it is carried to the current iterate by a vector transport
$\mathcal T_{\cdot\leftarrow\cdot}$.
\begin{equation}
    \begin{aligned}
        \bm{v}^{(t+1)}
        &=
        \beta\,
        \mathcal T_{\vlambda^{(t)}\leftarrow\vlambda^{(t-1)}}
        \bm{v}^{(t)}
        +\alpha\big(\Phi(\vlambda^{(t)})-\vlambda^{(t)}\big),\\
        \vlambda^{(t+1)}
        &=
        \vlambda^{(t)}+\bm{v}^{(t+1)}.
    \end{aligned}
    \label{eq:momentum-vector-transport}
\end{equation}
A full account of retractions and vector transports is outside the scope of
this paper; we refer to \citet{absil_optimization_2008} for the theory and to
the momentum gradient rule of \texttt{Manopt.jl} \citep{bergmann_manoptjl_2022} for the
algorithmic template that \eqref{eq:momentum-vector-transport} follows.

The transport is approximated
elementwise: the previous momentum vector is rescaled by the square root of the
ratio of the diagonal Fisher metrics at the previous and current natural
parameters. The resulting step is capped at a maximum Euclidean norm; the
settings used in the regression experiments are reported in
\Cref{app:uci-details}. With $\beta>0$, the update is no longer a convex
combination of natural parameters and can leave the natural domain, for
example, a Gamma message with a non-positive rate, so on such edges, we use
damping alone ($\beta=0$).

The motivation for momentum, beyond the stability argument of
\Cref{fig:poisson-bethe-damping-progress}, comes from
\citet{tan_analytic_2025}, who shows that natural-gradient updates for
Gaussian variational approximations benefit substantially from momentum. The
same holds here: in the depth-three hierarchies of
\Cref{sec:experiments-regression}, damping alone stalls within the sweep
budget, and the vector-transport momentum update is what converges.

\section{Convergence Diagnostics and Additional Results for the Comparison Study}
\label{app:edge-uncertainty-fe}
This appendix collects the convergence diagnostics behind the fixed-point
claims of \Cref{sec:edge-uncertainty}, together with the full predictive
comparison for the sequential heteroscedastic study. In both non-conjugate ablations PVMP
evaluates its own mean-field variational objective, so its trace is a true
Bethe free energy on the fitted graph. NGMP rebuilds its local Gaussian
surrogates between outer sweeps, so its trace scores the node energies
through the current surrogate marginals: it is a convergence diagnostic
whose \emph{level} is not directly comparable to PVMP's, because the two
curves approximate different Bethe functionals. What the traces establish
is that every comparison in \Cref{sec:edge-uncertainty} probes fixed
points rather than truncated optimization.

\paragraph{Poisson state-space model.}
\Cref{fig:app-poisson-fe} shows the Bethe free energy per observed count on
the sunspot series at all four holdout fractions. Both methods reach their
plateaus well within the 20-sweep budget: NGMP by sweep three to four, PVMP
shortly after (at $50\%$ holdout in roughly ten). The held-out separation
at $50\%$ in \Cref{tab:edge-uncertainty-poisson} is therefore a property
of the fixed points, not of stopping early.

One configuration detail of the projective update of
\eqref{eq:projective-vmp-marginal} matters for this model. With the default
gradient-norm bound of its inner manifold optimizer, a marginal that
drifts far into the tail becomes \emph{trapped}: the projection returns
its own input, neither damping nor additional sweeps escape, and rare
masks blow up catastrophically — mask-average held-out negative
log-likelihood up to $66$, driven by a single month assigned a rate of
${\approx}10^{4}$ where the exact tilted update at the same state returns
${\approx}\log 114$. Widening the bound from $1$ to $100$ frees the trap
and is used throughout \Cref{sec:edge-uncertainty}. NGMP's projection on
these edges is closed-form and has no inner optimizer to tune.

\begin{figure}[t!]
    \centering
    \includegraphics[width=0.42\linewidth]{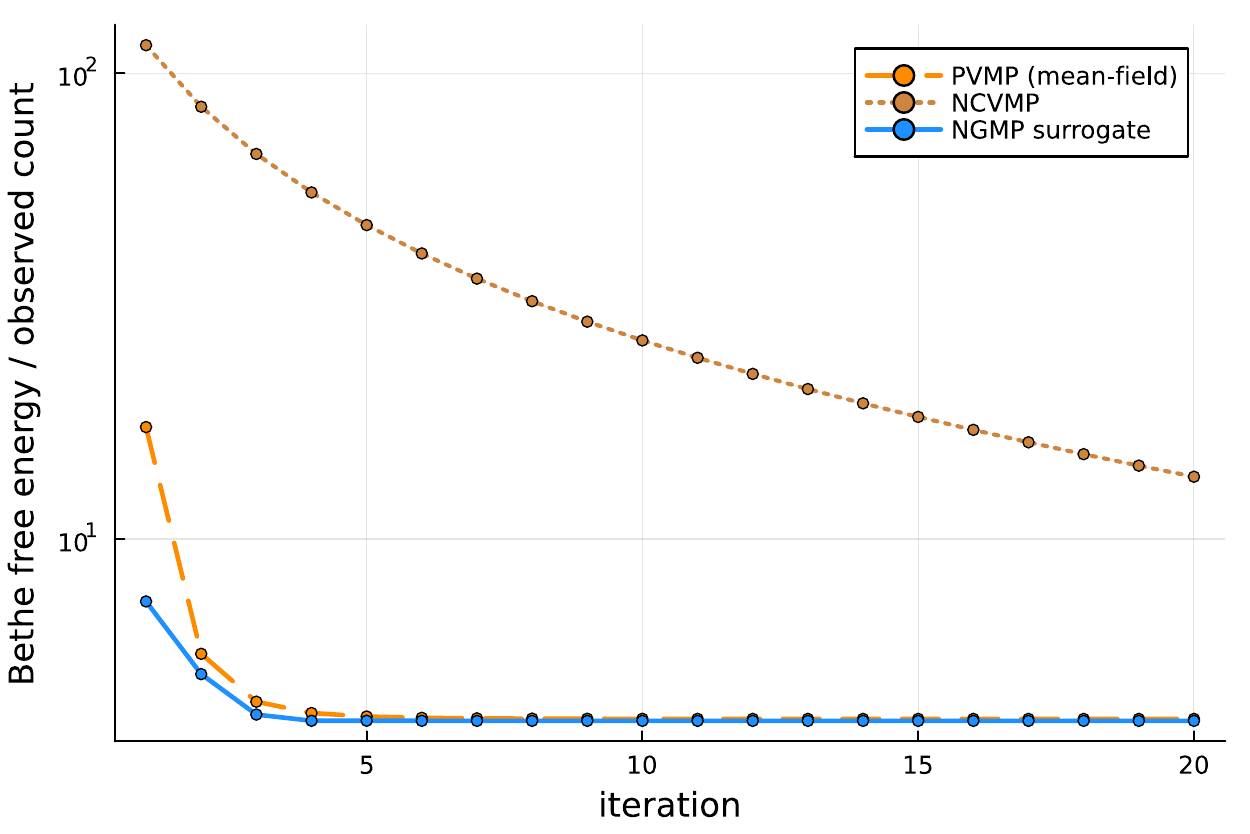}
    \hspace{2pt}
    \includegraphics[width=0.42\linewidth]{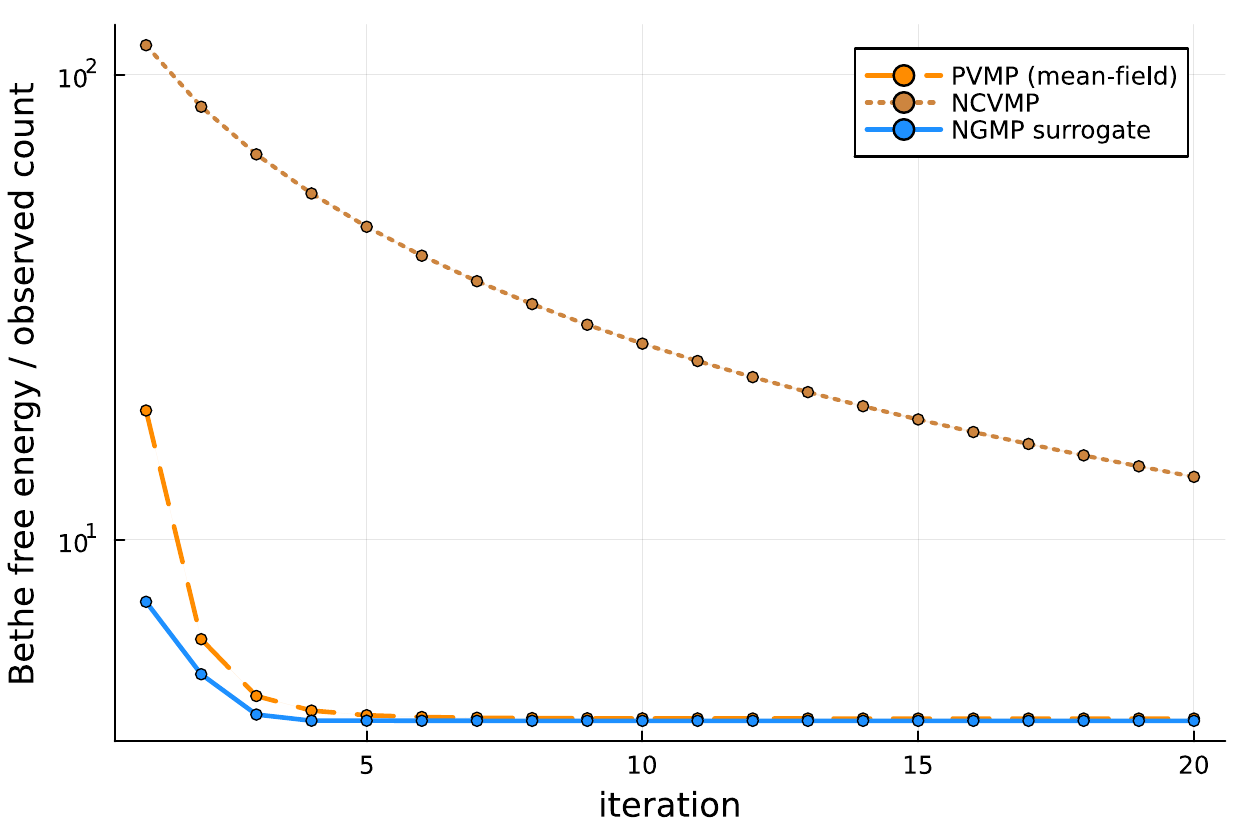}\\[2pt]
    \includegraphics[width=0.42\linewidth]{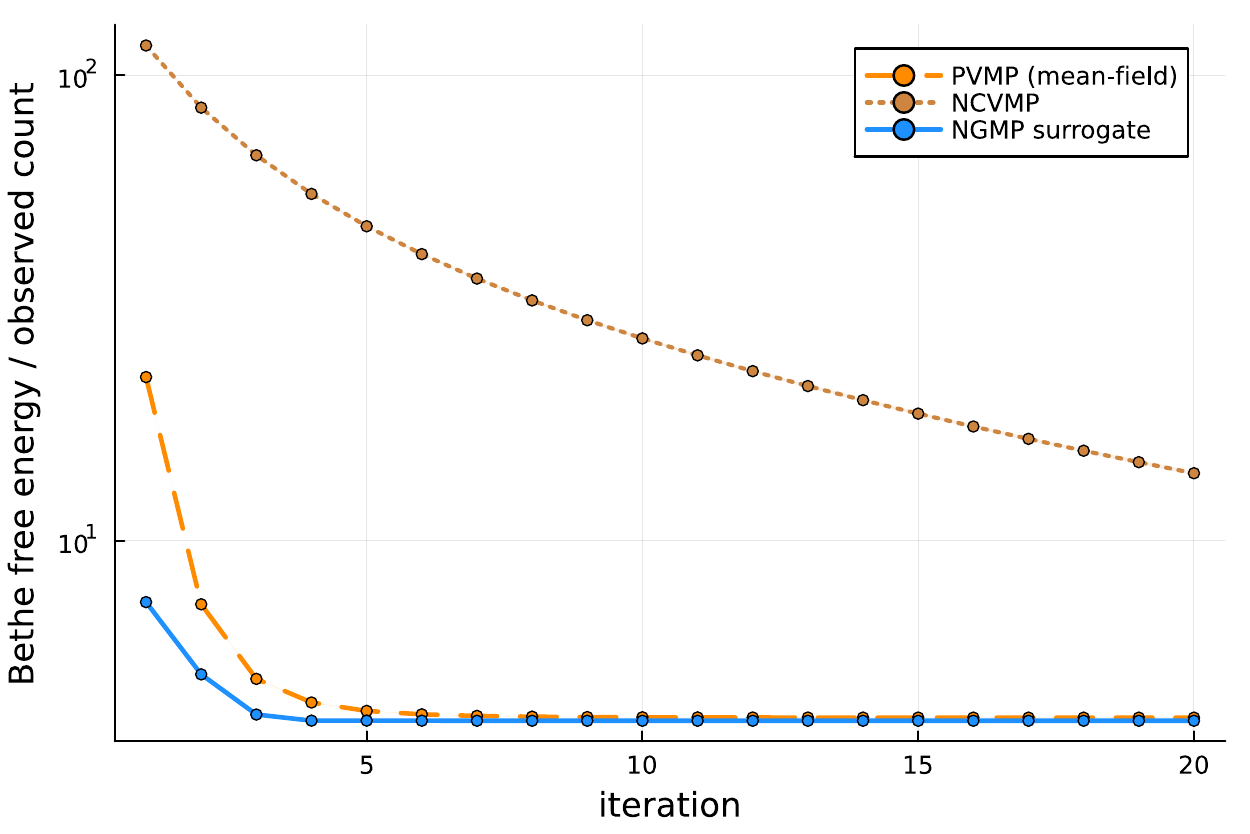}
    \hspace{2pt}
    \includegraphics[width=0.42\linewidth]{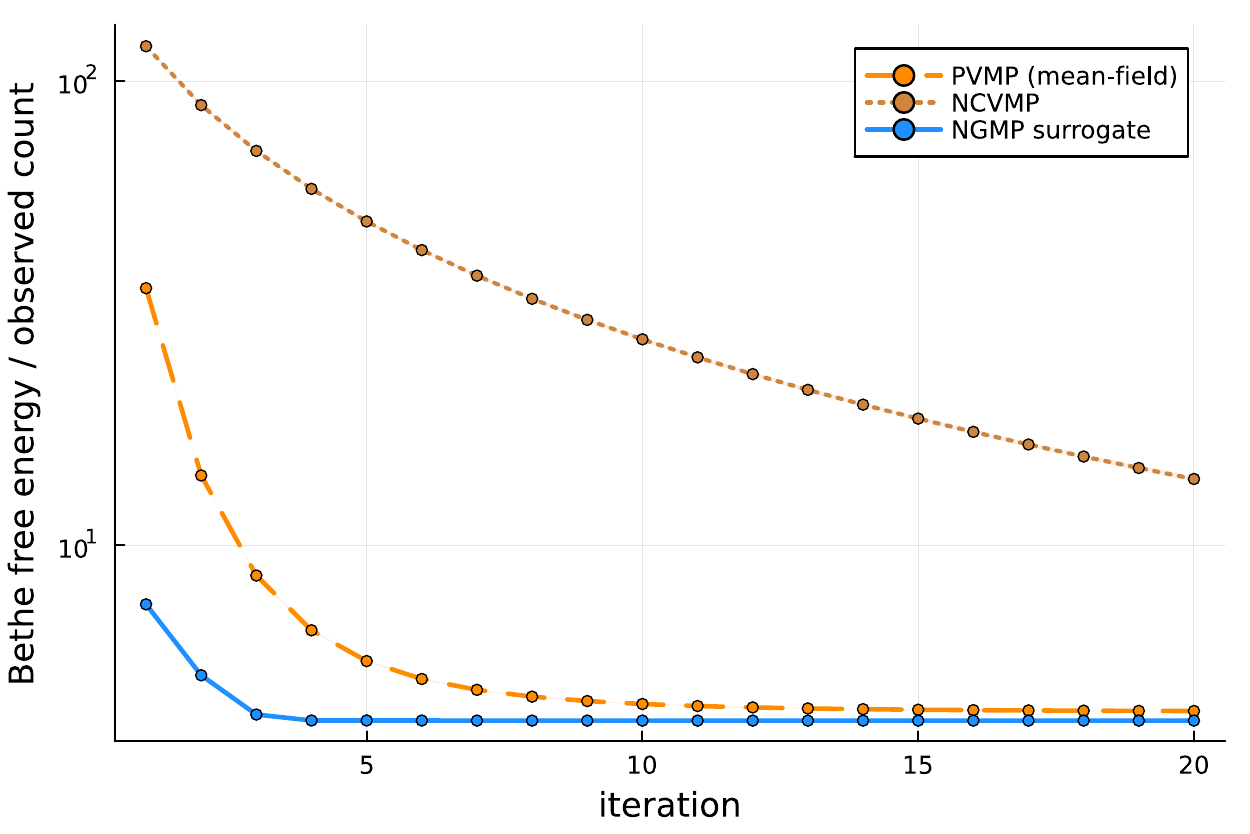}
    \caption{Bethe free energy per observed count on the sunspot series
    (mean $\pm$ 95\% CI across 20 masks, log scale) at $5\%$ and $10\%$
    holdout (top row) and $20\%$ and $50\%$ holdout (bottom row). Both
    methods plateau within the 20-sweep budget: NGMP by sweep three to four,
    PVMP shortly after. Each PVMP trace evaluates its own variational
    objective; the NGMP trace is a surrogate diagnostic, since the local
    Gaussian surrogates are rebuilt between outer sweeps. The dotted trace
    is the budget-matched NCVMP control of
    \Cref{tab:app-budget-poisson}.}
    \label{fig:app-poisson-fe}
\end{figure}

\paragraph{Sequential heteroscedastic filtering.}
\Cref{fig:app-streaming-fe} shows the corresponding traces for the
filtering study. At the end of the shared $240$-sweep budget, PVMP's Bethe
free energy decreases by $5.3\times10^{-4}$ nats per observation in the
final full-batch sweep and by $3.8\times10^{-4}$ in the first sequential
batch. The final-sweep changes in batches $2$--$10$ are below
$7\times10^{-8}$: the collapsed prior of
\Cref{fig:edge-uncertainty-collapse} leaves batches $2$--$10$ essentially
converged at initialization, so additional variational iteration cannot
change the filtering outcome. NGMP's surrogate diagnostic visibly plateaus;
its final-sweep changes range from $5.6\times10^{-6}$ to
$3.3\times10^{-5}$ nats per observation across the full and sequential
fits.

\Cref{fig:app-streaming-predictive} shows the posterior predictive bands of
all four fits on identical axes. The two full-batch fits and NGMP's
sequential fit are visually indistinguishable from one another; only PVMP's
sequential fit differs, which is why the main text shows the sequential
pair alone. \Cref{fig:app-streaming-variance} repeats the comparison for
the predictive-variance decomposition: NGMP's full-batch and sequential
fits coincide visually, so the main text omits the full-batch NGMP panel.

\begin{figure}[t!]
    \centering
    \includegraphics[width=0.48\linewidth]{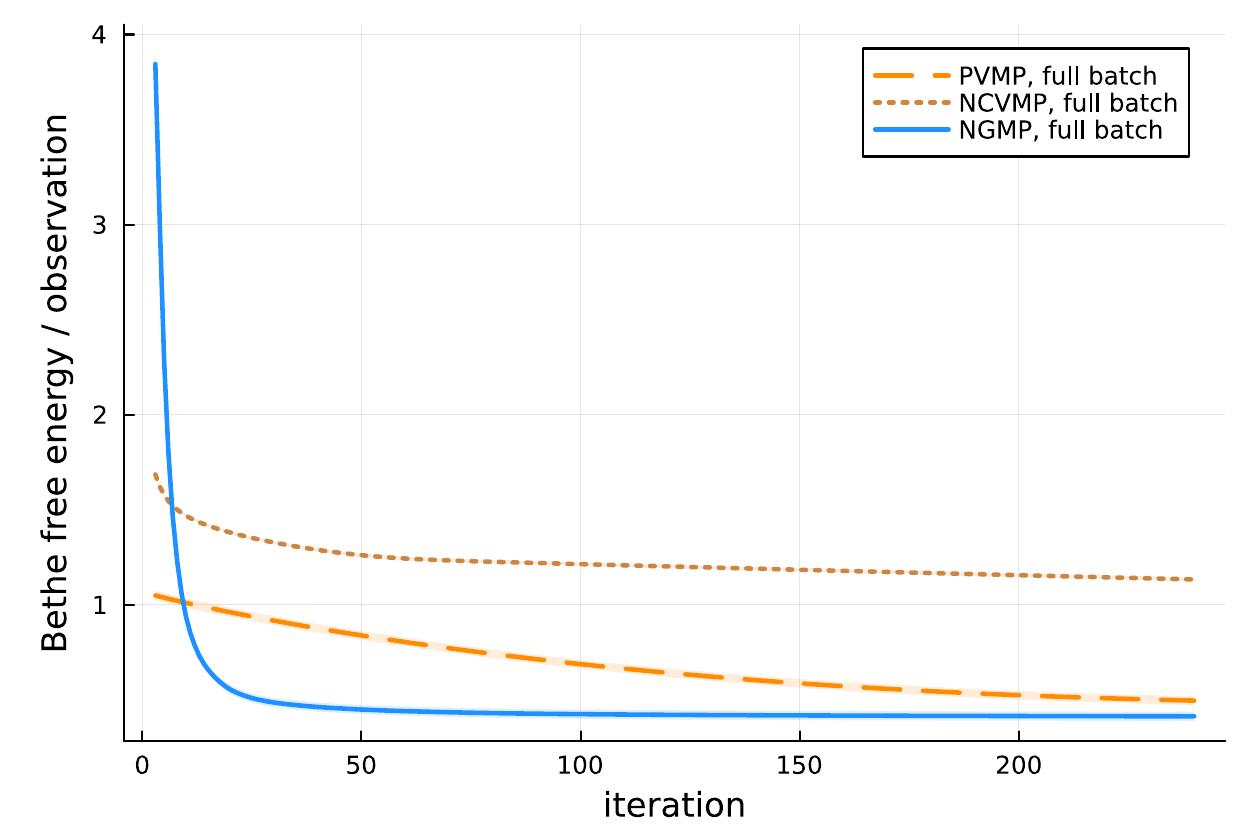}
    \hfill
    \includegraphics[width=0.48\linewidth]{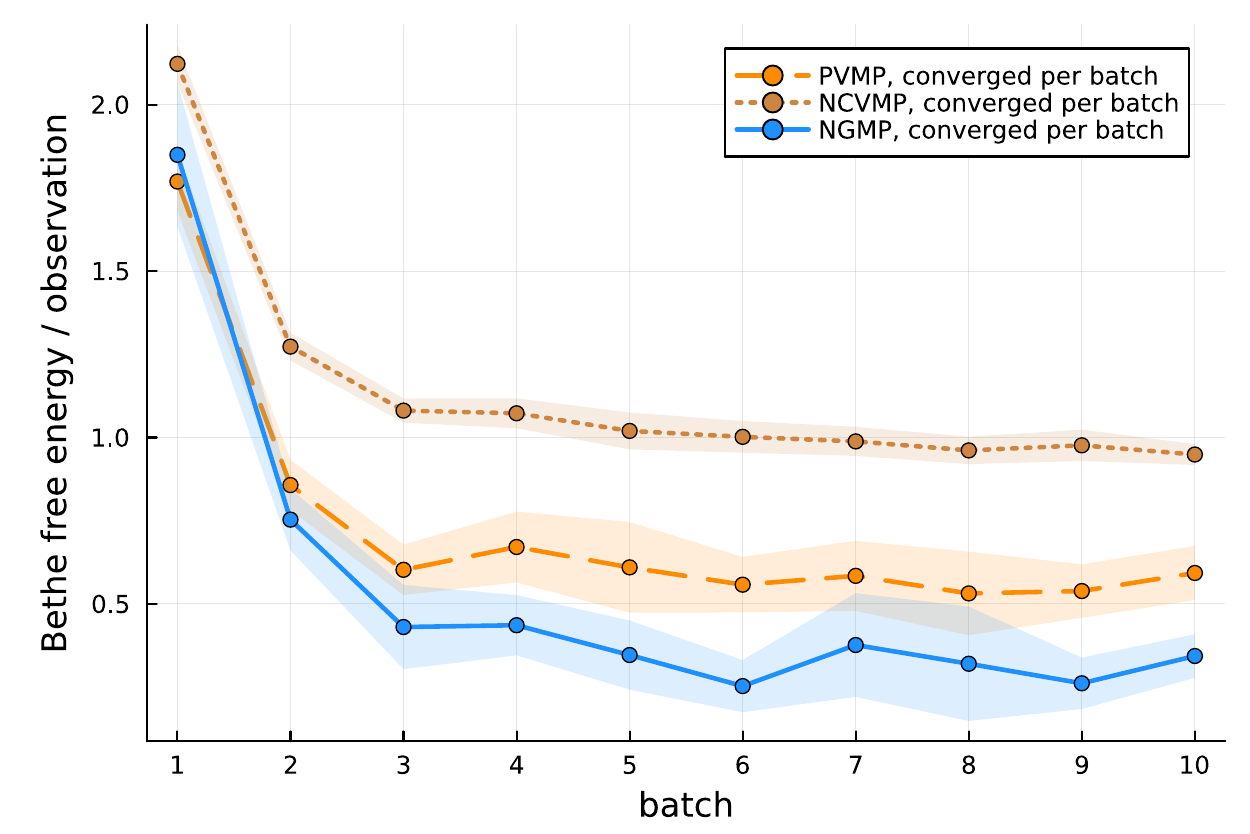}
    \caption{Bethe free energy per observation for the sequential
    heteroscedastic model (mean $\pm$ 95\% CI over 20 seeds). Left: the
    full-batch fit against sweep (first two sweeps omitted for scale). Right:
    the final-sweep value of each filtering update against batch index. PVMP
    evaluates its own mean-field variational objective; the NGMP trace
    scores the same node energies through moment-matched joint $(\mu, s)$
    cluster marginals, so the two curves approximate different Bethe
    functionals and their levels are not directly comparable. The dotted
    trace is the budget-matched NCVMP control of
    \Cref{tab:app-budget-streaming}.}
    \label{fig:app-streaming-fe}
\end{figure}

\begin{figure}[t!]
    \centering
    \includegraphics[width=0.62\linewidth]{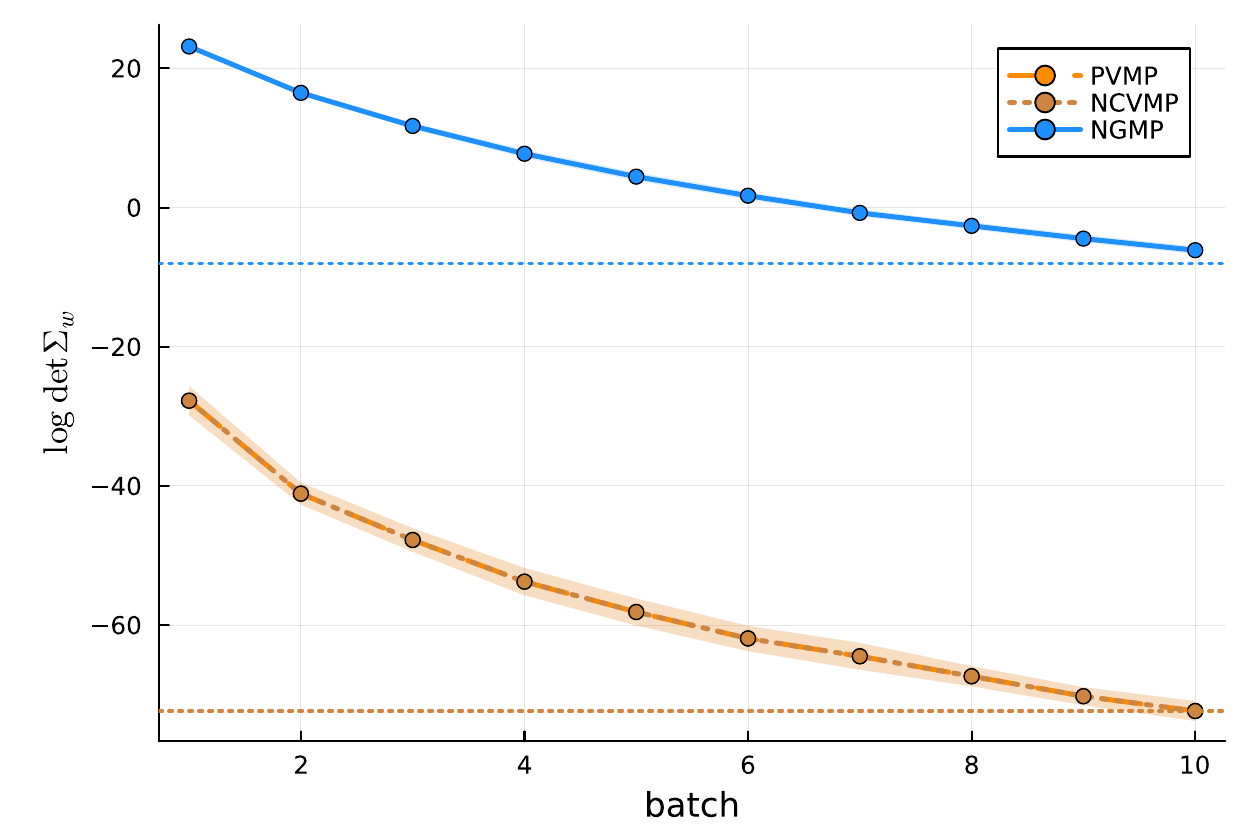}
    \caption{Noise-weight concentration $\log\det\Sigma_w$ along the
    filtering chain (mean $\pm$ 95\% CI over 20 seeds; dotted lines show
    full-batch values; lower means more certain). PVMP's sequential endpoint
    coincides with its full-batch value, and the budget-matched
    NCVMP control (\Cref{tab:app-budget-streaming}) traces
    the identical curve; NGMP concentrates as evidence accumulates and ends
    near its full-batch level.}
    \label{fig:edge-uncertainty-collapse}
\end{figure}

\begin{figure}[t!]
    \centering
    \begin{subfigure}[t]{0.49\linewidth}
        \centering
        \includegraphics[width=\linewidth]{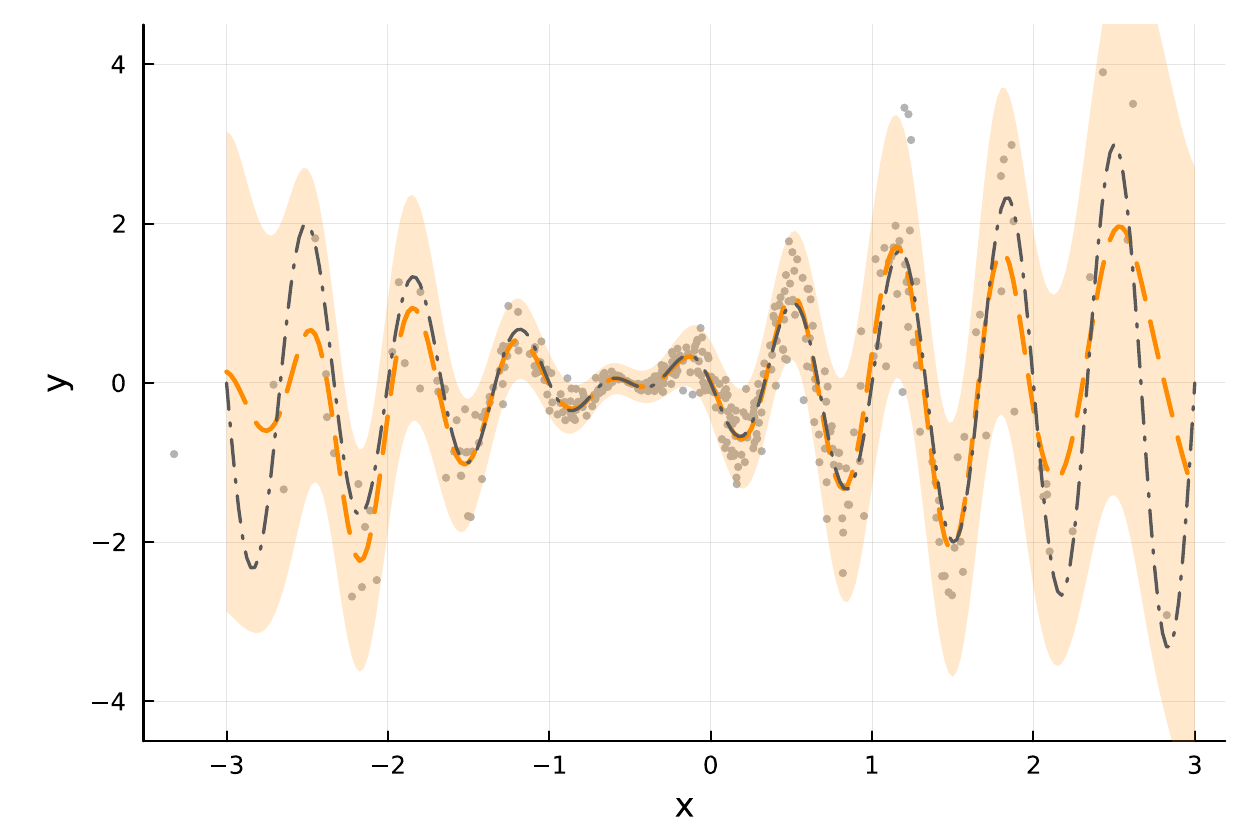}
        \caption{PVMP, full batch}
        \label{fig:app-streaming-vmp-full}
    \end{subfigure}
    \hfill
    \begin{subfigure}[t]{0.49\linewidth}
        \centering
        \includegraphics[width=\linewidth]{figures/streaming_vmp_sequential.pdf}
        \caption{PVMP, sequential batches}
        \label{fig:app-streaming-vmp-sequential}
    \end{subfigure}\\[2pt]
    \begin{subfigure}[t]{0.49\linewidth}
        \centering
        \includegraphics[width=\linewidth]{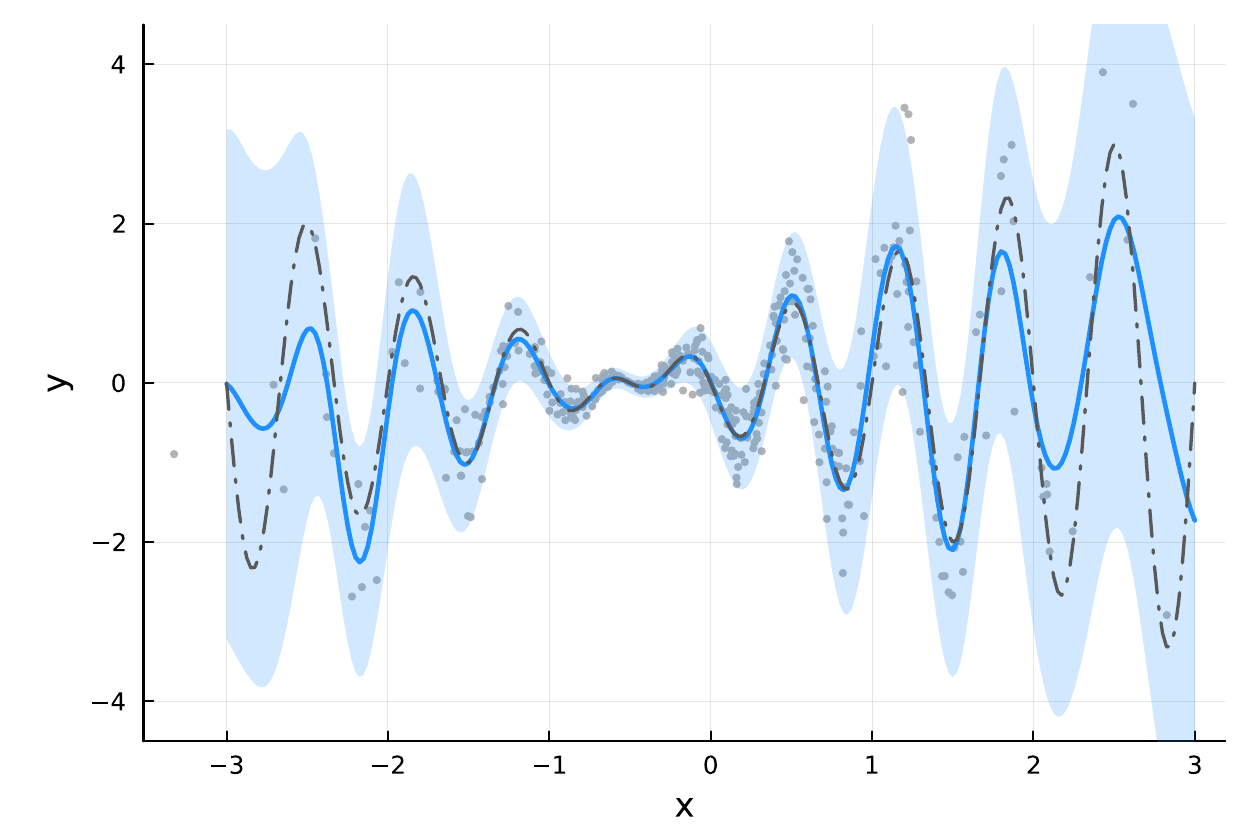}
        \caption{NGMP, full batch}
        \label{fig:app-streaming-ngmp-full}
    \end{subfigure}
    \hfill
    \begin{subfigure}[t]{0.49\linewidth}
        \centering
        \includegraphics[width=\linewidth]{figures/streaming_cavity_sequential.pdf}
        \caption{NGMP, sequential batches}
        \label{fig:app-streaming-ngmp-sequential}
    \end{subfigure}
    \caption{Posterior predictive bands of all four fits on one
    representative seed with identical axes. Orange dashed curves are PVMP
    and blue solid curves are NGMP; the matching shaded regions are
    pointwise 95\% posterior-predictive bands. Gray dash-dotted curves show
    the true mean, and gray points are the 400 training observations.
    Panels (a) and (c) use all observations at once, whereas panels (b) and
    (d) use the ten-step filtering chain of
    \Cref{fig:edge-uncertainty-filtering-chain}. Panels (a), (c), and (d)
    are visually indistinguishable; only PVMP's sequential fit in panel (b)
    collapses.}
    \label{fig:app-streaming-predictive}
\end{figure}

\begin{figure}[t!]
    \centering
    \begin{subfigure}[t]{0.49\linewidth}
        \centering
        \includegraphics[width=\linewidth]{figures/streaming_vmp_variance_full.pdf}
        \caption{PVMP, full batch}
        \label{fig:app-streaming-variance-vmp-full}
    \end{subfigure}
    \hfill
    \begin{subfigure}[t]{0.49\linewidth}
        \centering
        \includegraphics[width=\linewidth]{figures/streaming_vmp_variance_sequential.pdf}
        \caption{PVMP, sequential batches}
        \label{fig:app-streaming-variance-vmp-sequential}
    \end{subfigure}\\[2pt]
    \begin{subfigure}[t]{0.49\linewidth}
        \centering
        \includegraphics[width=\linewidth]{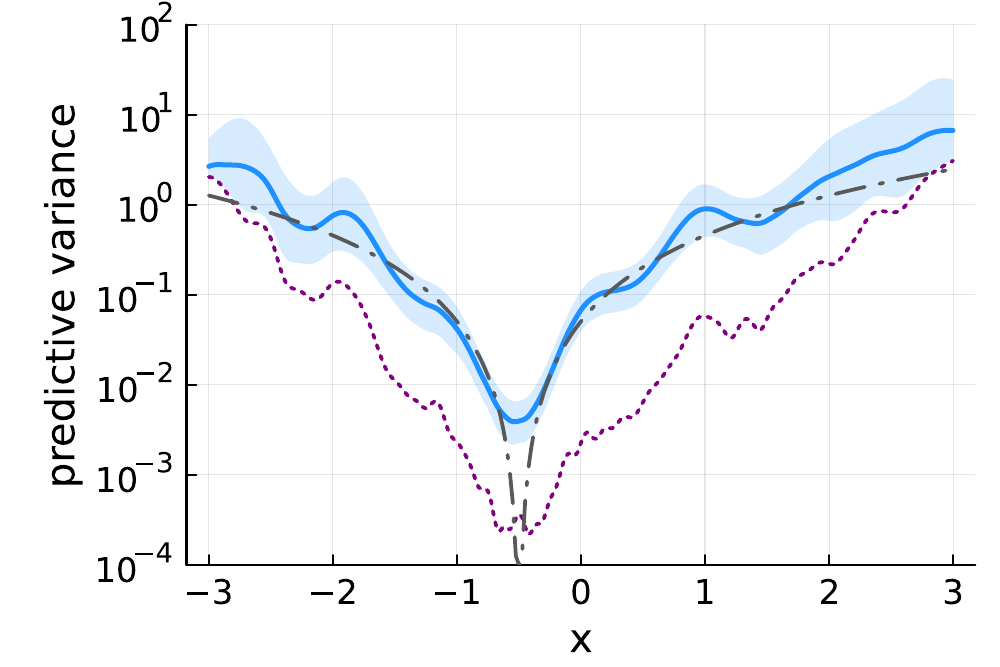}
        \caption{NGMP, full batch}
        \label{fig:app-streaming-variance-ngmp-full}
    \end{subfigure}
    \hfill
    \begin{subfigure}[t]{0.49\linewidth}
        \centering
        \includegraphics[width=\linewidth]{figures/streaming_cavity_variance_sequential.pdf}
        \caption{NGMP, sequential batches}
        \label{fig:app-streaming-variance-ngmp-sequential}
    \end{subfigure}
    \caption{Posterior predictive variance of all four fits on the same
    representative seed as \Cref{fig:app-streaming-predictive}. Panels (a)
    and (c) use all observations at once, whereas panels (b) and (d) use ten
    sequential batches. The orange long-dashed curves in panels (a)--(b) are
    PVMP and the blue solid curves in panels (c)--(d) are NGMP. Each is the
    posterior mean total predictive variance,
    $\bar S(x)=V_{\mathrm{epi}}(x)+V_{\mathrm{alea}}(x)$, where
    $V_{\mathrm{epi}}(x)=\phi(x)^\top\Sigma_v\phi(x)$ is epistemic variance
    from the uncertain mean weights and
    $V_{\mathrm{alea}}(x)=\mathbb E_q[e^{-s(x)}]$ is learned aleatoric
    variance. Shading gives a pointwise 95\% posterior credible interval for
    the total. The purple dotted curve shows $V_{\mathrm{epi}}(x)$ alone, so
    its vertical gap to the orange or blue total is $V_{\mathrm{alea}}(x)$.
    The gray dash-dotted curve is the benchmark's true aleatoric variance
    $[0.45(x+0.5)]^2$. All panels use identical logarithmic axes.}
    \label{fig:app-streaming-variance}
\end{figure}

\paragraph{Budget-matched control: NCVMP.}
The runtime of the two methods is dominated by different quantities: NGMP
takes one natural-gradient step per edge update, while PVMP's projective
update runs an inner manifold optimizer on every edge --- up to $100$
gradient evaluations per update in our configuration. The control reported
here matches the budgets from PVMP's side: same graph, constraints,
initialization, and outer sweep schedule, but every \texttt{ProjectedTo}
call is limited to a \emph{single} inner iteration, so each edge update
costs one gradient evaluation, as in NGMP. A single natural-gradient step of
the projective update at the current marginal is the non-conjugate VMP
update of \citet{knowles_nonconjugate_2011}, so this control is NCVMP
(\Cref{sec:related-work}). NCVMP degrades predictive
likelihood at the matched runtime. On the sunspot model
(\Cref{tab:app-budget-poisson}) its held-out negative
log-likelihood is $12.5$--$16.1$ against $4.5$--$5.4$ for both converged
methods, at every holdout fraction. On the sequential heteroscedastic model
(\Cref{tab:app-budget-streaming}), its full-batch NLL is $0.913$ against
$0.276$ for PVMP and $0.235$ for NGMP, while its sequential NLL is $0.916$
against $0.512$ and $0.244$, respectively;
its noise-weight posterior nevertheless collapses along converged PVMP's
exact concentration path (\Cref{fig:edge-uncertainty-collapse}), so the
overconcentration of \Cref{sec:edge-uncertainty} is not an artifact of
over-optimizing the projections, weaker per-edge optimization neither
repairs the calibration nor preserves the fit. The NCVMP Bethe traces
are the dotted curves in \Cref{fig:app-poisson-fe,fig:app-streaming-fe}.

\begin{table}[t!]
    \centering
    \caption{Sunspot state-space model with the budget-matched control
    (mean $\pm$ 95\% CI over 20 random masks): the two converged methods of
    \Cref{tab:edge-uncertainty-poisson} alongside NCVMP, that is, PVMP with
    single-step projections.}
    \label{tab:app-budget-poisson}
    \resizebox{\linewidth}{!}{%
        \begin{tabular}{rcccccc}
\toprule
 & \multicolumn{3}{c}{NLL} & \multicolumn{3}{c}{RMSE} \\
\cmidrule(lr){2-4} \cmidrule(lr){5-7}
Held out & PVMP & NCVMP & NGMP & PVMP & NCVMP & NGMP \\
\midrule
5\% & 4.534 $\pm$ 0.087 & 12.492 $\pm$ 0.546 & 4.532 $\pm$ 0.085 & 13.918 $\pm$ 0.385 & 26.441 $\pm$ 0.951 & 13.935 $\pm$ 0.378 \\
10\% & 4.569 $\pm$ 0.058 & 12.998 $\pm$ 0.453 & 4.566 $\pm$ 0.058 & 13.972 $\pm$ 0.260 & 27.137 $\pm$ 0.768 & 13.995 $\pm$ 0.260 \\
20\% & 4.672 $\pm$ 0.039 & 13.547 $\pm$ 0.410 & 4.661 $\pm$ 0.037 & 14.429 $\pm$ 0.214 & 27.616 $\pm$ 0.549 & 14.446 $\pm$ 0.208 \\
50\% & 5.411 $\pm$ 0.162 & 16.116 $\pm$ 0.391 & 4.930 $\pm$ 0.035 & 16.839 $\pm$ 0.471 & 30.729 $\pm$ 0.507 & 15.789 $\pm$ 0.162 \\
\bottomrule
\end{tabular}
    }
\end{table}

\begin{table}[t!]
    \centering
    \small
    \caption{Sequential heteroscedastic model with the budget-matched
    NCVMP control (mean $\pm$ 95\% CI over 20 paired seeds), extending
    \Cref{tab:edge-uncertainty-streaming}.}
    \label{tab:app-budget-streaming}
    \begin{tabular}{lccccc}
    \toprule
    & \multicolumn{2}{c}{full batch} & \multicolumn{2}{c}{sequential} & \\
    \cmidrule(lr){2-3}\cmidrule(lr){4-5}
    & NLL & RMSE & NLL & RMSE & batching penalty \\
    \midrule
    PVMP & $0.276 \pm 0.031$ & $0.588 \pm 0.037$ &
        $0.512 \pm 0.044$ & $0.583 \pm 0.039$ & $0.235 \pm 0.030$ \\
    NCVMP & $0.913 \pm 0.015$ & $0.577 \pm 0.035$ &
        $0.916 \pm 0.017$ & $0.576 \pm 0.035$ & $0.003 \pm 0.005$ \\
    NGMP & $0.235 \pm 0.031$ & $0.591 \pm 0.038$ &
        $0.244 \pm 0.031$ & $0.607 \pm 0.042$ & $0.009 \pm 0.010$ \\
    \bottomrule
    \end{tabular}
\end{table}

\section{Experiment details and additional results}
\label{app:modelling-details}
\subsection{Regression details}
\label{app:uci-details}

\Cref{tab:app-uci-dataset-sizes} reports the dimensions before the repeated
90/10 train--test splits. The feature count is the number of raw input
variables before standardization and random-feature construction.

\begin{table}[t]
    \centering
    \caption{UCI regression data sets used in \Cref{sec:experiments-regression}.
    The sample count is the complete data set before each repeated holdout.}
    \label{tab:app-uci-dataset-sizes}
    \begin{tabular}{lrr}
        \toprule
        Data set & Samples & Input features \\
        \midrule
        Concrete & 1{,}030 & 8 \\
        Energy & 768 & 8 \\
        Boston Housing & 506 & 13 \\
        Power Plant & 9{,}568 & 4 \\
        Wine Quality Red & 1{,}599 & 11 \\
        Yacht & 308 & 6 \\
        \bottomrule
    \end{tabular}
\end{table}

\Cref{tab:app-uci-model-capacity} separates two numerical notions of capacity.
$P$ is the number of predictive weights and biases, whereas $U$ is the number
of additional independent scalars required to represent global posterior
uncertainty beyond its mean. Both exclude fixed random features, optimizer
state, and observation-local messages. Here $d$ is the raw feature count from
\Cref{tab:app-uci-dataset-sizes}.

\begin{table*}[t]
    \centering
    \scriptsize
    \setlength{\tabcolsep}{5pt}
    \caption{Numerical model and uncertainty capacity in the UCI comparison.
    NGMP uses the hierarchy in \Cref{fig:heteroscedastic-hierarchy-ffg} with
    $L=2$: one mean-weight vector and two precision-weight vectors. Each vector
    has 1{,}000 random-feature coefficients, $d$ direct-input coefficients,
    and one intercept. All ranges are exact over the six data sets.}
    \label{tab:app-uci-model-capacity}
    \begin{tabular}{@{}llclcl@{}}
        \toprule
        Method & Architecture & \shortstack{Model\\coefficients $P$} &
        \shortstack{Posterior /\\propagation} &
        \shortstack{Uncertainty\\parameters $U$} & Noise \\
        \midrule
        hoBBB & $d\!\to\!50\!\to\!50\!\to\!1$ & 2{,}851--3{,}301 & Diag.-G & 2{,}851--3{,}301 & fixed \\
        BBB & $d\!\to\!50\!\to\!50\!\to\!2$ & 2{,}902--3{,}352 & Diag.-G & 2{,}902--3{,}352 & input \\
        hodDVI & $d\!\to\!50\!\to\!1$ & 301--751 & Diag.-G / 50 & 301--751 & fixed \\
        dDVI & $d\!\to\!50\!\to\!2$ & 352--802 & Diag.-G / 50 & 352--802 & input \\
        hoDVI & $d\!\to\!50\!\to\!1$ & 301--751 & Diag.-G / 1{,}275 & 301--751 & fixed \\
        DVI & $d\!\to\!50\!\to\!2$ & 352--802 & Diag.-G / 1{,}275 & 352--802 & input \\
        hoBPC & $d\!\to\!50\!\to\!50\!\to\!1$ & 2{,}851--3{,}301 & MN--W & 5{,}221--5{,}311 & global \\
        hoIVON & $d\!\to\!50\!\to\!50\!\to\!1$ & 2{,}851--3{,}301 & Diag.-G & 2{,}851--3{,}301 & fixed \\
        NGMP & \shortstack{Hierarchy, $L=2$\\\Cref{fig:heteroscedastic-hierarchy-ffg}} & 3{,}015--3{,}042 & Full-G / layer & 1{,}516{,}545--1{,}543{,}815 & 2-level input \\
        \bottomrule
    \end{tabular}
    \vspace{3pt}
    \begin{minipage}{0.94\textwidth}
        \raggedright
        Diag.-G and Full-G denote diagonal- and full-covariance Gaussian
        weight beliefs; MN--W denotes Matrix-Normal--Wishart. Exactly,
        $U=P$ for Diag.-G, $U=((d+1)(d+2))/2+5{,}206$ for MN--W, and
        $U=3(d+1{,}001)(d+1{,}002)/2$ for NGMP. For DVI/dDVI, the number after the slash is
        the propagated hidden second-moment state per observation: 50 marginal
        variances for dDVI versus 1{,}275 unique covariance entries for DVI.
    \end{minipage}
\end{table*}

Each of the feature maps ($\phi$ for the mean layer, $\psi_\ell$ for the
precision layers) contains 1{,}000 random Fourier features approximating a
Mat\'ern-$3/2$ kernel, concatenated with the standardized input and an
intercept.  The frequencies are divided equally among three spectral scales
$(0.5,1,2)$, giving a multiscale representation.  The layer length scales
decrease geometrically as
$(1.5,1.5/\sqrt{2},0.75)$ from the mean layer to the deepest precision layer.
The feature maps are sampled once per train--test split and then held fixed.
Gaussian weight priors are used at all layers.  The first precision-layer
intercept is initialized from the homoscedastic residual variance, clipped to
$[10^{-3},10^3]$, and the upper carrier precision is fixed to
$\tau_{\mathrm{top}}=25$.

We use at most 60 sweeps, stop when the relative change of the concatenated
weight means falls below $10^{-5}$, and stabilize linear solves with diagonal
jitter $10^{-8}$.  Natural parameters are updated with the vector-transport
momentum rule of \Cref{app:vector-transport-momentum}, with step parameter
$\alpha=0.6$, momentum parameter $\beta=0.8$, and maximum step $0.5$.

The ho prefix in \Cref{tab:app-uci-nll,tab:uci-rmse} marks a homoscedastic
baseline. BBB, DVI, and IVON use fixed unit observation variance in
standardized target units; BPC instead learns one global output covariance
through its final Matrix-Normal--Wishart layer. The heteroscedastic BBB and
DVI variants add a clipped input-dependent log-variance head. The hoIVON
predictive is evaluated as a mixture over $K=20$ posterior network draws.

\begin{table*}[ht]
\centering
\caption{Full test negative log likelihood (NLL) in original target units for
all baseline variants; lower is better. Values are means \(\pm\) approximate
95\% confidence-interval half-widths over 20 paired splits. Bold values
include the best point estimate within their confidence interval.}
\label{tab:app-uci-nll}
\resizebox{\textwidth}{!}{%
\begin{tabular}{lrrrrrr}
\toprule
Method & Concrete & Energy & Boston & Power & Wine & Yacht \\
\midrule
hoBBB & $3.8613 \pm 0.0059$ & $3.2892 \pm 0.0043$ & $3.2488 \pm 0.0157$ & $3.7960 \pm 0.0019$ & $1.0336 \pm 0.0173$ & $3.6925 \pm 0.0059$ \\
BBB & $3.3328 \pm 0.0215$ & $2.4229 \pm 0.0350$ & $2.6555 \pm 0.0385$ & $2.7785 \pm 0.0143$ & $\bm{0.9438 \pm 0.0232}$ & $2.6565 \pm 0.0451$ \\
hodDVI & $3.8602 \pm 0.0066$ & $3.2970 \pm 0.0027$ & $3.2571 \pm 0.0111$ & $3.7918 \pm 0.0011$ & $1.0360 \pm 0.0164$ & $3.7236 \pm 0.0097$ \\
dDVI & $3.0477 \pm 0.0432$ & $\bm{1.1920 \pm 0.4625}$ & $\bm{2.4646 \pm 0.0921}$ & $2.8238 \pm 0.0183$ & $\bm{0.9440 \pm 0.0314}$ & $0.4641 \pm 0.1000$ \\
hoDVI & $3.8602 \pm 0.0066$ & $3.2971 \pm 0.0027$ & $3.2572 \pm 0.0111$ & $3.7918 \pm 0.0011$ & $1.0360 \pm 0.0164$ & $3.7238 \pm 0.0097$ \\
DVI & $3.0406 \pm 0.0438$ & $\bm{1.0094 \pm 0.2973}$ & $\bm{2.4489 \pm 0.0917}$ & $2.8249 \pm 0.0200$ & $\bm{0.9448 \pm 0.0307}$ & $0.4543 \pm 0.1096$ \\
hoBPC & $3.9874 \pm 0.0202$ & $2.6357 \pm 0.0526$ & $2.9819 \pm 0.0862$ & $3.5396 \pm 0.0159$ & $1.0545 \pm 0.0256$ & $3.0251 \pm 0.0712$ \\
hoIVON ($K=20$) & $3.7966 \pm 0.0082$ & $3.2693 \pm 0.0412$ & $3.3284 \pm 0.0855$ & $3.7839 \pm 0.0013$ & $1.0430 \pm 0.0156$ & $3.6584 \pm 0.0101$ \\
NGMP & $\bm{2.9845 \pm 0.0433}$ & $\bm{0.9783 \pm 0.0655}$ & $2.6720 \pm 0.0752$ & $\bm{2.7610 \pm 0.0226}$ & $\bm{0.9571 \pm 0.0293}$ & $\bm{0.2782 \pm 0.1437}$ \\
\bottomrule
\end{tabular}%
}
\end{table*}

\begin{table*}[ht]
\centering
\caption{Test root mean squared error (RMSE) in original target units for all
baseline variants; lower is better. Values are means \(\pm\) approximate 95\%
confidence-interval half-widths over 20 paired splits. Bold values include the
best point estimate within their confidence interval.}
\label{tab:uci-rmse}
\resizebox{\textwidth}{!}{%
\begin{tabular}{lrrrrrr}
\toprule
Method & Concrete & Energy & Boston & Power & Wine & Yacht \\
\midrule
hoBBB & $7.2822 \pm 0.2072$ & $2.8175 \pm 0.1459$ & $\bm{3.8392 \pm 0.4243}$ & $4.2831 \pm 0.0703$ & $\bm{0.6469 \pm 0.0185}$ & $3.4875 \pm 0.4063$ \\
BBB & $6.7086 \pm 0.3117$ & $2.7537 \pm 0.1548$ & $\bm{3.7959 \pm 0.4387}$ & $\bm{4.0251 \pm 0.0737}$ & $\bm{0.6470 \pm 0.0176}$ & $3.0494 \pm 0.4150$ \\
hodDVI & $6.7179 \pm 0.2764$ & $2.2486 \pm 0.1159$ & $\bm{3.4801 \pm 0.3374}$ & $4.3009 \pm 0.0719$ & $\bm{0.6484 \pm 0.0190}$ & $2.1779 \pm 0.2301$ \\
dDVI & $\bm{5.6591 \pm 0.3352}$ & $2.1088 \pm 0.3088$ & $\bm{3.4741 \pm 0.4119}$ & $4.1321 \pm 0.0763$ & $\bm{0.6463 \pm 0.0198}$ & $\bm{0.8491 \pm 0.1387}$ \\
hoDVI & $6.7179 \pm 0.2766$ & $2.2493 \pm 0.1161$ & $\bm{3.4802 \pm 0.3375}$ & $4.3009 \pm 0.0720$ & $\bm{0.6484 \pm 0.0190}$ & $2.1787 \pm 0.2301$ \\
DVI & $\bm{5.6654 \pm 0.3249}$ & $1.9434 \pm 0.3353$ & $\bm{3.4291 \pm 0.4013}$ & $4.1428 \pm 0.0801$ & $\bm{0.6469 \pm 0.0197}$ & $\bm{0.8528 \pm 0.1533}$ \\
hoBPC & $12.9488 \pm 0.2671$ & $3.3192 \pm 0.2053$ & $4.7579 \pm 0.5024$ & $8.2997 \pm 0.1162$ & $0.6971 \pm 0.0182$ & $4.8077 \pm 0.4786$ \\
hoIVON ($K=20$) & $\bm{5.6951 \pm 0.3808}$ & $2.2252 \pm 0.7692$ & $5.0444 \pm 1.1487$ & $\bm{3.9703 \pm 0.0843}$ & $0.6627 \pm 0.0161$ & $2.6593 \pm 0.3656$ \\
NGMP & $6.1986 \pm 0.3871$ & $\bm{0.7198 \pm 0.0534}$ & $\bm{3.8201 \pm 0.3952}$ & $\bm{3.9541 \pm 0.0729}$ & $\bm{0.6582 \pm 0.0189}$ & $\bm{0.9295 \pm 0.1548}$ \\
\bottomrule
\end{tabular}%
}
\end{table*}

\FloatBarrier
\subsection{Ensemble forecasting details}
\label{app:etth-details}

We consider univariate long-horizon forecasting on the ETTh1 and ETTh2
electricity-transformer data sets \citep{zhou_informer_2021}, whose observations
are hourly. From the preceding 96-hour multivariate context, the task is to
predict one standardized oil-temperature target (\texttt{OT})
$H\in\{96,192,336,720\}$ hours ahead. Following
\citet{lukashchuk_composing_2026}, the frozen expert bank consists of five
independently trained neural forecasters: CNN, DLinear, NLinear
\citep{zeng_are_2023}, LSTM \citep{hochreiter_long_1997}, and NConv. Constant
forecasts at the 10th and 90th training-set quantiles complete the frozen
seven-expert bank. The gate is fitted on the original validation partition and
evaluated on the original test partition; no expert or VAE representation is
retrained.
For the IVON gates, $K=1000$ denotes 1{,}000 draws of the gate weights from
the fitted variational posterior. For logits $z^{(k)}$ from draw $k$, the
Gaussian component mean is
$\mu^{(k)}=\sum_j\operatorname{softmax}(z^{(k)})_j f_j$, where $f_j$ is the
$j$th frozen expert prediction, and its variance is
$(\sigma^2)^{(k)}=(\sum_j\exp z_j^{(k)})^{-1}$. The reported posterior
predictive is the equally weighted mixture of these 1{,}000 components. Its
NLL is evaluated with log-sum-exp over the component densities, and its point
prediction is the average of the component means.
\Cref{tab:etth-precision-gated-full} reports the full ETTh results, including
the Adam-trained gates omitted from the main table.

\begin{table*}[p]
\centering\scriptsize
\caption{Full descriptive ETTh1 and ETTh2 RMSE and NLL results. Intervals are 95\% descriptive confidence-interval half-widths computed per test origin; RMSE intervals use the delta method. Counts are 3446, 3426, 3398, and 3321 at horizons 96, 192, 336, and 720. The PGE--PVMP rows are taken from \citet{lukashchuk_composing_2026}, computed on the same test split.}
\label{tab:etth-precision-gated-full}
\resizebox{\textwidth}{!}{%
\begin{tabular}{lrrrr}
\toprule
\multicolumn{5}{c}{ETTh1} \\
\cmidrule(lr){1-5}
Method & 96 & 192 & 336 & 720 \\
\midrule
\multicolumn{5}{l}{\textit{RMSE}} \\
MoE -- Adam, affine gate & \(0.3836 \pm 0.0090\) & \(0.3699 \pm 0.0091\) & \(0.4152 \pm 0.0099\) & \(0.4639 \pm 0.0111\) \\
MoE -- Adam, ReLU gate & \(0.3955 \pm 0.0097\) & \(0.3699 \pm 0.0091\) & \(0.4801 \pm 0.0123\) & \(0.4642 \pm 0.0112\) \\
MoE -- IVON, affine gate ($K{=}1000$) & \(0.3832 \pm 0.0090\) & \(0.3695 \pm 0.0091\) & \(0.3672 \pm 0.0084\) & \(0.5428 \pm 0.0127\) \\
MoE -- IVON, ReLU gate ($K{=}1000$) & \(0.3832 \pm 0.0090\) & \(0.3697 \pm 0.0091\) & \(0.3677 \pm 0.0084\) & \(0.5334 \pm 0.0125\) \\
\addlinespace[2pt]
PGE -- PVMP \citep{lukashchuk_composing_2026} & \(0.3583 \pm 0.0083\) & \(0.3386 \pm 0.0085\) & \(0.3105 \pm 0.0073\) & \(0.3347 \pm 0.0076\) \\
PGE -- NGMP (this work) & \(0.3554 \pm 0.0084\) & \(0.3337 \pm 0.0086\) & \(0.3115 \pm 0.0074\) & \(0.3300 \pm 0.0080\) \\
\midrule
\multicolumn{5}{l}{\textit{NLL}} \\
MoE -- Adam, affine gate & \(2.0553\!\times\!10^{25} \pm 1.5536\!\times\!10^{25}\) & \(1.1050\!\times\!10^{22} \pm 1.0354\!\times\!10^{22}\) & \(1.1159\!\times\!10^{22} \pm 6.0082\!\times\!10^{21}\) & \(9.0836\!\times\!10^{11} \pm 6.4103\!\times\!10^{11}\) \\
MoE -- Adam, ReLU gate & \(2.2759\!\times\!10^{24} \pm 1.1906\!\times\!10^{24}\) & \(1.2482\!\times\!10^{19} \pm 8.2942\!\times\!10^{18}\) & \(2.4524\!\times\!10^{14} \pm 1.7209\!\times\!10^{14}\) & \(1.5279\!\times\!10^{17} \pm 1.0479\!\times\!10^{17}\) \\
MoE -- IVON, affine gate ($K{=}1000$) & \(10.2777 \pm 0.7366\) & \(3.5515\!\times\!10^{4} \pm 1.1067\!\times\!10^{4}\) & \(429.0098 \pm 143.4073\) & \(168.4529 \pm 26.4000\) \\
MoE -- IVON, ReLU gate ($K{=}1000$) & \(2216.8905 \pm 676.2776\) & \(2.2344\!\times\!10^{8} \pm 9.2280\!\times\!10^{7}\) & \(58.6475 \pm 9.6980\) & \(94.2422 \pm 16.2638\) \\
\addlinespace[2pt]
PGE -- PVMP \citep{lukashchuk_composing_2026} & \(0.4120 \pm 0.0173\) & \(0.3701 \pm 0.0171\) & \(0.3141 \pm 0.0135\) & \(0.3763 \pm 0.0140\) \\
PGE -- NGMP (this work) & \(0.3888 \pm 0.0210\) & \(0.3378 \pm 0.0200\) & \(0.2877 \pm 0.0161\) & \(0.3571 \pm 0.0153\) \\
\bottomrule
\end{tabular}%
}
\medskip
\resizebox{\textwidth}{!}{%
\begin{tabular}{lrrrr}
\toprule
\multicolumn{5}{c}{ETTh2} \\
\cmidrule(lr){1-5}
Method & 96 & 192 & 336 & 720 \\
\midrule
\multicolumn{5}{l}{\textit{RMSE}} \\
MoE -- Adam, affine gate & \(0.5821 \pm 0.0134\) & \(0.5655 \pm 0.0139\) & \(0.5190 \pm 0.0121\) & \(0.6201 \pm 0.0129\) \\
MoE -- Adam, ReLU gate & \(0.5592 \pm 0.0128\) & \(0.5655 \pm 0.0139\) & \(0.5190 \pm 0.0121\) & \(0.6201 \pm 0.0129\) \\
MoE -- IVON, affine gate ($K{=}1000$) & \(0.5821 \pm 0.0134\) & \(0.5198 \pm 0.0129\) & \(0.5869 \pm 0.0142\) & \(0.8249 \pm 0.0181\) \\
MoE -- IVON, ReLU gate ($K{=}1000$) & \(0.5821 \pm 0.0134\) & \(0.5655 \pm 0.0139\) & \(0.5874 \pm 0.0142\) & \(0.6202 \pm 0.0129\) \\
\addlinespace[2pt]
PGE -- PVMP \citep{lukashchuk_composing_2026} & \(0.5882 \pm 0.0120\) & \(0.5798 \pm 0.0127\) & \(0.5940 \pm 0.0128\) & \(0.5669 \pm 0.0131\) \\
PGE -- NGMP (this work) & \(0.5631 \pm 0.0124\) & \(0.5408 \pm 0.0128\) & \(0.5463 \pm 0.0125\) & \(0.6101 \pm 0.0143\) \\
\midrule
\multicolumn{5}{l}{\textit{NLL}} \\
MoE -- Adam, affine gate & \(3.0584\!\times\!10^{23} \pm 2.2858\!\times\!10^{23}\) & \(2.8850\!\times\!10^{23} \pm 2.5057\!\times\!10^{23}\) & \(5.8244\!\times\!10^{19} \pm 3.7745\!\times\!10^{19}\) & \(2.0804\!\times\!10^{19} \pm 1.3583\!\times\!10^{19}\) \\
MoE -- Adam, ReLU gate & \(1.1333\!\times\!10^{31} \pm 9.7716\!\times\!10^{30}\) & \(7.0543\!\times\!10^{24} \pm 6.0491\!\times\!10^{24}\) & \(1.4496\!\times\!10^{23} \pm 1.2198\!\times\!10^{23}\) & \(4.6866\!\times\!10^{38} \pm 4.7607\!\times\!10^{38}\) \\
MoE -- IVON, affine gate ($K{=}1000$) & \(5.9904\!\times\!10^{21} \pm 6.4398\!\times\!10^{21}\) & \(304.6987 \pm 52.2098\) & \(3.7880\!\times\!10^{5} \pm 1.0683\!\times\!10^{5}\) & \(3.2209\!\times\!10^{5} \pm 7.4793\!\times\!10^{4}\) \\
MoE -- IVON, ReLU gate ($K{=}1000$) & \(1.9494\!\times\!10^{18} \pm 1.3836\!\times\!10^{18}\) & \(1.1309\!\times\!10^{5} \pm 6.8836\!\times\!10^{4}\) & \(2.0843\!\times\!10^{6} \pm 5.8045\!\times\!10^{5}\) & \(1.1263\!\times\!10^{6} \pm 2.1811\!\times\!10^{5}\) \\
\addlinespace[2pt]
PGE -- PVMP \citep{lukashchuk_composing_2026} & \(0.9342 \pm 0.0306\) & \(0.9237 \pm 0.0328\) & \(0.9612 \pm 0.0334\) & \(0.8699 \pm 0.0299\) \\
PGE -- NGMP (this work) & \(0.9397 \pm 0.0386\) & \(0.8602 \pm 0.0365\) & \(0.8702 \pm 0.0348\) & \(0.9774 \pm 0.0357\) \\
\bottomrule
\end{tabular}%
}
\end{table*}


\end{document}